\documentclass{article}
\usepackage{amsmath,amssymb}
\usepackage{amsthm}
\usepackage{fullpage}
\usepackage{algorithm,algorithmic}
\usepackage{color}
\usepackage[usenames,dvipsnames]{pstricks}
\usepackage{epsfig}
\usepackage{pst-grad} 
\usepackage{pst-plot} 
\usepackage{eqparbox}
\usepackage{authblk}
\usepackage{enumerate}

\newcommand{\cbr}[1] {
  \{#1\}}

\renewcommand{\algorithmiccomment}[1]{\bgroup\hfill\#~#1\egroup}

\newcommand{\citet}{\cite}
\newcommand{\citep}{\cite}
\newtheorem{corollary}{Corollary}
\newtheorem{lemma}{Lemma}
\newtheorem{theorem}{Theorem}
\newtheorem{definition}{Definition}
\newtheorem{proposition}{Proposition}

\DeclareMathOperator{\ELdim}{ELdim}
\DeclareMathOperator{\Ldim}{Ldim}

\DeclareMathOperator{\DIS}{DIS}
\def\calS{\mathcal{S}}
\def\calH{\mathcal{H}}
\def\calF{\mathcal{F}}
\def\calX{\mathcal{X}}

\def\calA{\mathcal{A}}
\def\calS{\mathcal{S}}
\def\calC{\mathcal{C}}
\def\calN{\mathcal{N}}
\def\R{\mathbb{R}}

\def\N{\mathbb{N}}
\def\x{\mathbf{x}}
\def\v{\mathbf{v}}
\newcommand{\IIF}[1]{\STATE\algorithmicif\ #1\ \algorithmicthen}
\newcommand{\ENDIIF}{\unskip\ \algorithmicend\ \algorithmicif}

\title{The Extended Littlestone's Dimension for Learning\\ with Mistakes and Abstentions}
\author[1]{Chicheng Zhang\thanks{chz038@eng.ucsd.edu}}
\author[1]{Kamalika Chaudhuri\thanks{kamalika@cs.ucsd.edu}}
\affil[1]{University of California, San Diego}

\begin{document}

\maketitle

\begin{abstract}
This paper studies classification with an abstention option in the online setting. In this setting, examples arrive sequentially, the learner is given a hypothesis class $\calH$, and the goal of the learner is to either predict a label on each example or abstain, while ensuring that it does not make more than a pre-specified number of mistakes when it does predict a label.

Previous work on this problem has left open two main challenges. First, not much is known about the optimality of algorithms, and in particular, about what an optimal algorithmic strategy is for any individual hypothesis class. Second, while the realizable case has been studied, the more realistic non-realizable scenario is not well-understood. In this paper, we address both challenges. First, we provide a novel measure, called the Extended Littlestone's Dimension, which captures the number of abstentions needed to ensure a certain number of mistakes. Second, we explore the non-realizable case, and provide upper and lower bounds on the number of abstentions required by an algorithm to guarantee a specified number of mistakes.

\end{abstract}

\section{Introduction}

Many machine learning applications, such as fraud detection in credit card transactions and medical diagnosis, involve high misclassification penalties. In these cases, it is often desirable to design a predictor which is guaranteed to make no more than a certain number of prediction mistakes, at the expense of a few abstentions; the examples on which the learner abstains are then passed on to a human, who can then take a closer look. 

In this paper, we study this problem -- classification with an option to abstain -- in an online setting. Examples arrive sequentially, the learner is given a hypothesis class $\calH$, and the goal of the learner is to either predict a label on each example or abstain, while ensuring that it does not make more than a pre-specified number of mistakes. The first formal theoretical framework to address this problem was the Knows What It Knows (KWIK) model~\citep{LLWS11}; assuming that labels are generated by a hypothesis in $\calH$ (the so-called {\em{realizability assumption}}), this framework requires the learner to {\em{always}} predict correctly. ~\citet{LLWS11} also provided an algorithm in this model that makes no mistakes and requires $|\calH| - 1$ abstentions for finite $\calH$. \citet{SZB10} further extended this model to allow learners that do not always have to be correct, but are permitted to make upto a bounded number of mistakes. They showed that when $k$ mistakes are allowed, the number of abstentions can be reduced to $(k+1)|\calH|^{1/(k+1)}$.

While previous work has looked at designing generic online learning algorithm in this setting, there are two remaining challenges. First not much is known about the optimality of these algorithms, and in particular, about what an optimal algorithmic strategy would be for any individual hypothesis class. The second challenge is to understand what happens in a more realistic scenario where the realizability assumption does not hold. While this has been studied in a regression setting~\citep{SS11}, not much is known about the classification case.

In this paper, we address both challenges.  We first provide a new measure that, given a hypothesis class $\calH$, captures how many abstentions are needed to ensure a certain number of mistakes, and we provide an optimal algorithm that achieves this number. Our measure is closely related to the notion of Littlestone's dimension for online learning with no abstentions, and we call it the Extended Littlestone's dimension. Formalizing this notion additionally allows us to extend our algorithm to infinite hypothesis classes; while algorithms were previously known for some specific infinite classes~\citep{SZB10}, no generic algorithm was known.

Next, we focus our attention on the non-realizable case. In this case, we make an $l$-bias assumption, which ensures that the labels are generated by a function that disagrees with some (unknown) hypothesis $h \in \calH$ on at most $l$ examples. We show that (at least some form of) this assumption is necessary; there exists a finite hypothesis class $\calH$, such that when the $l$-bias assumption holds, any algorithm that abstains a finite number of times must make at least $l$ mistakes.  Moreover, there also exists an infinite hypothesis class $\calH$ with Littlestone's dimension $d$ such that any algorithm that abstains a finite number of times must make at least $l + d$ mistakes. To complement these lower bounds, we show that we can run a version of our algorithm when the $l$-bias assumption holds, and provide an upper bound on the number of abstentions it makes.

\section{The Setting}
\label{sec:setting}

We consider the problem of online classification in the model of~\cite{SZB10}, where the learner is allowed to occasionally abstain from prediction. The precise setting is as follows. At time $t$, the adversary presents an example $x_t$ in some instance space $\calX$. The learner makes its prediction $\hat{y}_t$, which can be either $-1$, $+1$, or $\bot$ (I don't know). The adversary then reveals an outcome $y_t \in \cbr{-1,+1}$. The interaction between the learner and the adversary continues, and the performance of the learner is measured by the total number of mistakes and abstentions made throughout the process.

To help make decisions, the learner has access to a hypothesis class $\calH$. Each hypothesis $h$ in $\calH$ is a prediction rule mapping from $\calX$ to $\cbr{-1,+1}$.

\paragraph{Basic Notations.} Given two hypothesis $h_1$ and $h_2$, their {\em{product}} is defined as a new hypothesis $h_1 \cdot h_2$ which is a function that takes $x$ as input, and outputs $h_1(x) \cdot h_2(x)$.  
Given two hypothesis classes $\calH_1$ and $\calH_2$, we define $\calH_1 \cdot \calH_2$ to be the class of functions achievable by taking the product between a function in $\calH_1$ and a function in $\calH_2$. Formally,
\[ \calH_1 \cdot \calH_2 = \cbr{h_1 \cdot h_2: h_1 \in \calH_1, h_2 \in \calH_2 } \]
Define function $I(A) = 1$ if $A$ is true, $I(A) = 0$ if $A$ is false.\\
Denote by $\calC^l$ the class of union of at most $l$ singletons in instance domain $\calX$. That is, hypotheses that take value $+1$ on $\calX$, except for at most $l$ points:
\[ \calC^l = \cbr{1-2I(x = x_1 \vee \ldots \vee x = x_i): x_1, \ldots, x_i \in \calX, i \leq l}\]
In this paper, we will address both realizable and nonrealizable cases, defined below. 

\paragraph{Realizable Case.} In the realizable case, we assume there is a hypothesis $h$ in $\calH$ that makes no mistakes over time. Formally, a sequence $S = (x_1, y_1)$, \ldots, $(x_n, y_n)$ is called $\calH$-realizable, if and only if
\[ \exists h \in \calH, \quad |\cbr{t: h(x_t) \neq y_t}| = 0 \]

\paragraph{Non-realizable Case.} In the non-realizable case, we assume that the label of examples are generated by a function that disagrees with some hypothesis on at most $l$ examples, which we call $l$-bias assumption.\footnote{This is similar to the $l$-mistake assumption in the expert problem~\citep{CBFHW96,ALW06}.} Formally, define $\calH^l$ as the set of classifiers where its prediction differs from some classifier in $\calH$ on at most $l$ points, i.e. $\calH^l = \calH \cdot \calC^l$.
A sequence $S = (x_1, y_1)$, \ldots, $(x_n, y_n)$ is said to have $l$-bias with respect to $\calH$, if and only if it is $\calH^l$-realizable, i.e.
\[ \exists h \in \calH^l, \quad |\cbr{t: h(x_t) \neq y_t}| = 0 \]

\paragraph{Version Space and Disagreement Region.} In the realizable case, it is often convenient to consider the set of hypotheses that agree with the labeled examples revealed so far. Given an labeled set $S$ and a set of hypotheses $V$, $V[S] \subseteq V$ is defined as the set of all classifiers that classify $S$ correctly:
\[ V[S] = \cbr{h \in V: \text{for all } (x,y) \in S, h(x) = y}\]
At the start of time $t$, the version space is defined as the set of hypotheses in $\calH$ that agree with the examples $(x_1,y_1)$, \ldots, $(x_{t-1},y_{t-1})$ seen so far. 

We say that an example $x$ is in the disagreement region of a hypothesis set $V$, denoted by $\DIS(V)$, if both $V[(x,+1)]$ and $V[(x,-1)]$ are nonempty.

\paragraph{Nontrivial Rounds.} For deterministic learners, it is always suboptimal for the adversary to present an example on which the learner predicts correctly, since this will only impose additional constraints on examples shown in the future without changing the number of mistakes and abstentions made. A round $t$ is called {\em nontrivial} if and only if the learner makes a mistake or abstains on that round.


\paragraph{The Mistake Bound Model.} The mistake bound model~\citep{BF72,L87,A87} is a central model in online classification. 
An online learning algorithm $\calA$ achieves a mistake bound $M$ with respect to a set of sequences $\calS \subseteq (\calX \times \cbr{-1,+1})^*$ if and only if for any adversary showing sequences $S = ((x_1, y_1),\ldots,(x_n, y_n))$ in $\calS$, $\calA$'s prediction $\hat{y}_1$, $\ldots$, $\hat{y}_n$ $\in \cbr{-1,+1}$ satisfies
\[ \sum_{t=1}^n I(\hat{y}_t \neq y_t) \leq M\] 

\paragraph{The $k$-SZB Model.} In this paper, we consider $k$-SZB model studied by~\cite{SZB10, DZ13}, which extends the mistake bound model by additionally allowing the learner to say ``Don't Know''($\bot$).
An online learning algorithm $\calA$ achieves a $(k,m)$-SZB bound with respect to a set of sequences $\calS \subseteq (\calX \times \cbr{-1,+1})^*$, if and only if for any adversary that presents sequences $S = ((x_1, y_1),\ldots,(x_n, y_n))$ in $\calS$, $\calA$'s prediction $\hat{y}_1$, $\ldots$, $\hat{y}_n$ $\in \cbr{-1,+1,\bot}$ satisfies
\[ \sum_{t=1}^n I(\hat{y}_t =- y_t) \leq k\] 
\[ \sum_{t=1}^n (I(\hat{y}_t =- y_t) + I(\hat{y}_t = \bot)) \leq m\]
In other words, the number of mistakes is at most $k$, and number of nontrivial rounds (where the algorithm makes a mistake or abstains) is at most $d$. When $k = 0$, this is exactly the KWIK model~\citep{LLWS11}. We will look at this model in both the realizable and non-realizable cases.

\section{Extended Littlestone's Dimension}
\label{sec:realizable}


We begin with the realizable case and the definition of the Extended Littlestone's Dimension. We first define an extended mistake tree, which is a natural generalization of the mistake tree, and then use it characterize the optimal number of non-trivial rounds (abstentions + mistakes) for any algorithm in the $k$-SZB model. We finally present an optimal algorithm (Algorithm~\ref{alg:esoa}) for this model, and a recursive formulation of Extended Littlestone's dimension.

\subsection{Background: Mistake Bound, Littlestone's Dimension and Standard Optimal Algorithm}
\label{sub:mb}



\cite{L87} provides a characterization of the optimal mistake bound in the realizable case, which is measured by Littlestone's dimension. We begin by describing this characterization.

\paragraph{Mistake Trees.} Littlestone's dimension is closely related to the notion of a mistake tree. A mistake tree~\footnote{In~\cite{L87} this is instead called a ``complete mistake tree''; in~\cite{SS12} this is called a $\calH$-shattered tree.} of a hypothesis class $\calH$ is a complete binary tree~\footnote{A complete binary tree is one in which every level is completely filled with nodes.} , whose leaves are classifiers in $\calH$ and whose internal nodes correspond to examples in $\calX$. A mistake tree may have no internal nodes, in which case it only contains a leaf corresponding to a classifier $h$ in $\calH$ -- we call it a zeroth order mistake tree. Given an internal node, the edge connecting it and its left (resp. right) child is labeled $-1$ (resp. $+1$).

A root to leaf path $p$ in mistake tree $T$ is a sequence of nodes and edges $v_1e_1v_2e_2\ldots v_ne_nv_{n+1}$, where $v_1, \ldots, v_n$ are internal nodes in $T$ corresponding to examples in $\calX$, $v_1$ is the root node of $T$, each $e_i$ is an edge in $T$ that connects $v_i$ and $v_{i+1}$, $v_{n+1} = h$ is a classifier in $\calH$ corresponding to a leaf in $T$. For each $i$, edge $e_i$ connects $v_i$ and $v_{i+1}$. The length of a path $l(p)$ is defined as the number of edges in $p$. For each leaf, the associated classifier agrees with the internal nodes and edges along the path up to the root. That is, if each node $v_i$ corresponds to example $x_i$ and each edge $e_i$ has label $y_i$, then $h$ agrees with examples $\cbr{(x_1,y_1), \ldots, (x_n, y_n)}$. See Figure~\ref{fig:mistaketree} for an illustration.

\begin{figure}
\centering
\scalebox{0.8} 
{
\begin{pspicture}(0,-1.8792187)(11.99,1.8792187)
\usefont{T1}{ppl}{m}{n}
\rput(2.7545311,1.6757812){$3$}
\psline[linewidth=0.04cm](2.57,1.3457812)(1.63,0.34578124)
\psline[linewidth=0.04cm](2.97,1.3657813)(3.93,0.36578125)
\usefont{T1}{ppl}{m}{n}
\rput(1.8445313,0.9557812){$-1$}
\usefont{T1}{ppl}{m}{n}
\rput(3.7445312,0.97578126){$+1$}
\usefont{T1}{ppl}{m}{n}
\rput(1.4545312,0.03578125){$2$}
\usefont{T1}{ppl}{m}{n}
\rput(4.074531,0.09578125){$4$}
\psline[linewidth=0.04cm](7.51,0.22578125)(11.97,0.22578125)
\psline[linewidth=0.04cm](7.99,0.36578125)(7.99,0.12578125)
\psline[linewidth=0.04cm](9.13,0.38578126)(9.13,0.14578125)
\psline[linewidth=0.04cm](10.31,0.38578126)(10.31,0.14578125)
\psline[linewidth=0.04cm](11.47,0.38578126)(11.47,0.14578125)
\usefont{T1}{ppl}{m}{n}
\rput(7.954531,-0.16421875){$1$}
\usefont{T1}{ppl}{m}{n}
\rput(9.134531,-0.14421874){$2$}
\usefont{T1}{ppl}{m}{n}
\rput(10.294531,-0.12421875){$3$}
\usefont{T1}{ppl}{m}{n}
\rput(11.454532,-0.10421875){$4$}
\usefont{T1}{ppl}{m}{n}
\rput(0.47453126,-1.6442188){$h_1$}
\usefont{T1}{ppl}{m}{n}
\rput(2.1545312,-1.6242187){$h_2$}
\usefont{T1}{ppl}{m}{n}
\rput(3.3945312,-1.6042187){$h_3$}
\usefont{T1}{ppl}{m}{n}
\rput(5.1545315,-1.5442188){$h_4$}
\psline[linewidth=0.04cm](1.17,-0.31421876)(0.23,-1.3142188)
\psline[linewidth=0.04cm](1.57,-0.29421875)(2.53,-1.2942188)
\usefont{T1}{ppl}{m}{n}
\rput(0.44453126,-0.70421875){$-1$}
\usefont{T1}{ppl}{m}{n}
\rput(2.3445313,-0.68421876){$+1$}
\psline[linewidth=0.04cm](4.15,-0.27421874)(3.21,-1.2742188)
\psline[linewidth=0.04cm](4.55,-0.25421876)(5.51,-1.2542187)
\usefont{T1}{ppl}{m}{n}
\rput(3.4245312,-0.6642187){$-1$}
\usefont{T1}{ppl}{m}{n}
\rput(5.324531,-0.64421874){$+1$}
\end{pspicture} 
}
\caption{A mistake tree with respect to set of threshold classifiers $\calH = \cbr{h_i = 2 I(x \leq i) - 1: i =1,2,3,4}$.}
\label{fig:mistaketree}
\end{figure}
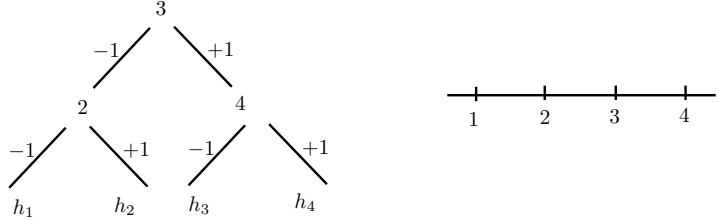


A mistake tree $T$ succinctly represents a strategy of the adversary in response to a deterministic learner.
At $t=1$, the adversary picks the example $x_1$ corresponding to the root node to show to the learner.  If the learner predicts $\hat{y}_1 = -1$, the adversary reveals label $y_1 = +1$, and follows the downward edge labeled $+1$; otherwise it follows the other edge. If at time $t \geq 2$, the adversary reaches a node with example $x_t$, then $x_t$ is shown to the learner, and one of the downward edges adjacent to this node is followed.
The interaction comes to an end when a leaf is reached.
It can be seen that the adversary forces the learner to make a mistake at each node of the mistake tree; this implies that if every root-to-leaf path of the mistake tree has depth $d$, then the adversary can force the learner to make $d$ mistakes using the associated strategy.

We are now ready to define Littlestone's dimension.

\begin{definition}
The Littlestone's dimension of hypothesis class $\calH$, $\Ldim(\calH)$, is the maximum depth of any mistake tree of $\calH$.
\end{definition}
\begin{theorem}[\citep{L87}]
For a hypothesis class $\calH$, the optimal mistake bound of any deterministic algorithm with respect to adversaries showing $\calH$-realizable sequences is equal to $\Ldim(\calH)$.
\end{theorem}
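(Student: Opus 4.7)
My plan is to prove both directions separately: a lower bound showing the adversary can force $\Ldim(\calH)$ mistakes against any deterministic learner, and an upper bound exhibiting an algorithm (the Standard Optimal Algorithm) that never makes more than $\Ldim(\calH)$ mistakes on any $\calH$-realizable sequence.

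For the lower bound, let $d = \Ldim(\calH)$ and fix a mistake tree $T$ of $\calH$ of depth $d$. I would describe the adversary strategy already implicit in the tree: at each time $t$, present the example labeling the current node; upon seeing the learner's prediction $\hat{y}_t \in \{-1,+1\}$, reveal $y_t = -\hat{y}_t$ and descend along the edge labeled $y_t$. Since $T$ is a mistake tree, the leaf reached after $d$ steps is a hypothesis $h \in \calH$ consistent with all the revealed labels, so the sequence is $\calH$-realizable. By construction the learner was wrong on every one of the $d$ rounds, so no deterministic algorithm can guarantee fewer than $d$ mistakes.

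For the upper bound, I would define the Standard Optimal Algorithm (SOA) that maintains a version space $V_t \subseteq \calH$, starting with $V_1 = \calH$. Upon receiving $x_t$, it predicts $\hat{y}_t = \arg\max_{y \in \{-1,+1\}} \Ldim(V_t[(x_t,y)])$, breaking ties arbitrarily, then updates $V_{t+1} = V_t[(x_t, y_t)]$. The key structural observation is that for any $x$ and any $V \subseteq \calH$, it cannot happen that both $\Ldim(V[(x,+1)])$ and $\Ldim(V[(x,-1)])$ equal $\Ldim(V)$; otherwise one could build a mistake tree of depth $\Ldim(V)+1$ by placing $x$ at a new root above the two depth-$\Ldim(V)$ mistake trees witnessing the dimensions of the children, contradicting the definition. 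Thus the smaller of the two values is strictly less than $\Ldim(V)$, so whenever SOA predicts the side of larger dimension and is wrong, the version space it moves to has $\Ldim$ strictly smaller than $\Ldim(V_t)$. Combined with the realizability assumption (which ensures $V_t$ is never empty, so $\Ldim(V_t) \geq 0$ throughout), this gives a potential argument: each mistake decreases the integer $\Ldim(V_t)$ by at least $1$, starting from $\Ldim(\calH) = d$, so the total number of mistakes is at most $d$.

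The main obstacle is the strict-decrease claim for the SOA, which relies on the mistake tree construction described above; everything else is either definitional or a direct potential argument. I would present that structural lemma explicitly before invoking it in the analysis of SOA, since it is both the crux of the upper bound and the natural bridge that makes $\Ldim$ the tight quantity matching the lower bound above.
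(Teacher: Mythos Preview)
Your proposal is correct and follows essentially the same approach as the paper: the lower bound via the adversary strategy encoded by a depth-$\Ldim(\calH)$ mistake tree, and the upper bound via the Standard Optimal Algorithm together with the potential argument that each mistake strictly decreases $\Ldim$ of the version space. The paper treats this theorem as background from \cite{L87} and only sketches the argument (noting that the adversary forces a mistake at each node of the mistake tree, and that SOA's prediction rule ensures each mistake drops the version space's Littlestone's dimension by at least one); your write-up fills in the structural lemma and the realizability check explicitly, but the route is the same.
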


\paragraph{Standard Optimal Algorithm.} Algorithm~\ref{alg:soa} presents the Standard Optimal Algorithm, which is an optimal deterministic algorithm for online classification in the realizable case. It maintains a version space $V$ over time. At each time $t$, it predicts a label $y_t$ such that each mistake will force the version space's Littlestone's dimension to drop by at least 1. Therefore, the number of mistakes made by Algorithm~\ref{alg:soa} is at most $\Ldim(\calH)$.

\begin{algorithm}[H]
\caption{Standard Optimal Algorithm~\citep{L87}}
\begin{algorithmic}[1]
\STATE Input: hypothesis class $\calH$.
\STATE Initialize version space $V \gets \calH$.
\FOR{$t = 1,2,\ldots,$}
\STATE Receive example $x_t \in \calX$.
\STATE Make prediction $\hat{y}_t = \arg\max_y \Ldim(V[(x,y)])$.
\STATE Receive label $y_t$.
\IF {$\hat{y}_t = -y_t$}
    \STATE Update version space $V \gets V[(x_t, y_t)]$.
\ENDIF
\ENDFOR
\end{algorithmic}
\label{alg:soa}
\end{algorithm}

\paragraph{Recursive Definition.} For finite $\calH$\footnote{For infinite $\calH$, the recurrence may not reach the base case.} one also has the following recurrence for its Littlestone's dimension:
\[ \Ldim(\calH) = \begin{cases} 0 & |\calH| = 1\\ 1 + \max_{x \in \DIS(\calH)} \min_{y \in \cbr{\pm 1}} \Ldim(\calH[(x,y)]) & |\calH| > 1 \end{cases}\]

\subsection{Extended Littlestone's Dimension}


We now define extended Littlestone's Dimension, which measures the difficulty of online learning a hypothesis class in the $k$-SZB model.

\paragraph{Extended Mistake Trees.} An adversary's strategy in response to a deterministic learner in the $k$-SZB model can be succinctly represented by extended mistake trees. An extended mistake tree for $\calH$ is a full\footnote{A full binary tree is one in which every internal node has exactly two children.} binary tree, whose leaves are classifiers in $\calH$ and whose internal nodes are examples in $\calX$. An extended mistake tree may have no internal node, in which case it only contains a leaf corresponding to a classifier $h$ in $\calH$ -- we call it a zeroth order extended mistake tree. Unlike mistake trees, now, there are two type of edges: solid and dashed, representing mistakes and abstentions, respectively. Each node is associated with two downward solid edges, one to each child. Additionally, each node is associated with exactly one downward dashed edge connecting to one of its two children. For a downward edge of a node, whether solid or dashed, if it is connected with the node's left child, then it is labeled $-1$, and vice versa. Just as in mistake trees, for each leaf, the associated classifier agrees with the internal nodes and edges along the path up to the root. See Figure~\ref{fig:extendedmistaketree} for an illustration.

A root to leaf path $p$ of an extended mistake tree $T$ is a sequence of nodes and edges $v_1e_1v_2e_2\ldots v_ne_nv_{n+1}$, where $v_1, \ldots, v_n$ are internal nodes in $T$ corresponding to examples in $\calX$, $v_1$ is the root node of $T$, each $e_i$ is an edge in $T$ that connects $v_i$ and $v_{i+1}$, $v_{n+1} = h$ is a classifier in $\calH$ corresponding to a leaf in $T$. Here if there are multiple edges between $v_i$ and $v_{i+1}$, any one of them can be used by $p$.

Given an extended mistake tree $T$, the associated adversarial strategy can be described as follows.
At $t=1$, the adversary chooses the example $x_1$ corresponding to the root node to show to the learner. If the learner predicts $\hat{y}_1 = -1$, it reveals label $y_1 = +1$, follows the downward solid edge labeled $+1$, and vice versa.  Otherwise, if $\hat{y}_1 = \bot$, it reveals $y_1$ as the label on the dashed edge and follows the downward dashed edge.
At time $t \geq 2$, if the adversary reaches a node with example $x_t$, then $x_t$ is shown to the learner, and one of its adjacent downward edges is followed.
The interaction comes to an end when a leaf is reached.
It can be seen that with this strategy, the adversary forces every round to be nontrivial. If the depth of the leaf reached is $d$, then the number of nontrivial rounds is $d$.

As an example, the extended mistake tree in Figure~\ref{fig:extendedmistaketree} can be used by the adversary as follows.
Initially $x_1 = 2$ is presented to the learner.
If the learner predicts $\hat{y}_1 = -1$, the adversary reveals label $y_1 = +1$ and follows the right downward solid edge to reach node $x_2 = 3$.
At time $t = 2$, the learner now shows example $x_2 = 3$;
If the learner predicts $\hat{y}_2 = \bot$, the adversary reveals $y_2 = +1$ according to the label on the dashed edge and follows the edge to reach node $x_3 = 4$.
At time $t = 3$, the learner shows example $x_3 = 4$;
If the learner predicts $\hat{y}_2 = +1$, the adversary reveals label $y_2 = -1$ and follows the left downward solid edge to reach a leaf containing hypothesis $h_3$.
This concludes the interaction, and the learner makes a total of 3 nontrivial rounds: 2 mistakes and 1 abstentions.
Note that realizability assumption is maintained, as $h_3 \in \calH$ agrees with the examples $(2,+1)$, $(3,+1)$, $(4,-1)$ shown.
More generally, one can show that if the learner is not allowed to make any mistakes, then the adversary is able to force 3 nontrivial rounds by following this strategy. This motivates the definition below.

\begin{definition}
We say that an extended mistake tree $T$ is $(k,m)$-difficult for integers $k, m \geq 0$, if all its root to leaf paths in $T$ using at most $k$ solid edges have length at least $m$.
\end{definition}


For example, the extended mistake tree in Figure~\ref{fig:extendedmistaketree} is $(0,3)$-difficult.


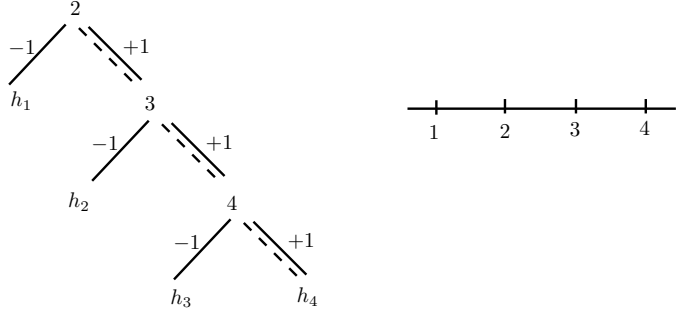
\begin{figure}
\centering
\scalebox{0.8} 
{
\begin{pspicture}(0,-2.6292188)(11.37,2.6292188)
\usefont{T1}{ppl}{m}{n}
\rput(2.6145313,0.84578127){$3$}
\usefont{T1}{ppl}{m}{n}
\rput(3.9745312,-0.81421876){$4$}
\usefont{T1}{ppl}{m}{n}
\rput(3.1345313,-2.3942187){$h_3$}
\usefont{T1}{ppl}{m}{n}
\rput(5.2345314,-2.3742187){$h_4$}
\usefont{T1}{ppl}{m}{n}
\rput(1.4345312,-0.7942188){$h_2$}
\usefont{T1}{ppl}{m}{n}
\rput(1.3745313,2.4257812){$2$}
\psline[linewidth=0.04cm](6.89,0.75578123)(11.35,0.75578123)
\psline[linewidth=0.04cm](7.37,0.8957813)(7.37,0.65578127)
\psline[linewidth=0.04cm](8.51,0.91578126)(8.51,0.67578125)
\psline[linewidth=0.04cm](9.69,0.91578126)(9.69,0.67578125)
\psline[linewidth=0.04cm](10.85,0.91578126)(10.85,0.67578125)
\usefont{T1}{ppl}{m}{n}
\rput(7.3345313,0.36578125){$1$}
\usefont{T1}{ppl}{m}{n}
\rput(8.514531,0.38578126){$2$}
\usefont{T1}{ppl}{m}{n}
\rput(9.674531,0.40578124){$3$}
\usefont{T1}{ppl}{m}{n}
\rput(10.834531,0.42578125){$4$}
\usefont{T1}{ppl}{m}{n}
\rput(0.47453126,0.8857812){$h_1$}
\psline[linewidth=0.04cm](1.21,2.1557813)(0.27,1.1557813)
\psline[linewidth=0.04cm](1.59,2.1157813)(2.47,1.2357812)
\usefont{T1}{ppl}{m}{n}
\rput(0.48453125,1.7657813){$-1$}
\usefont{T1}{ppl}{m}{n}
\rput(2.3845313,1.7857813){$+1$}
\psline[linewidth=0.04cm,linestyle=dashed,dash=0.16cm 0.16cm](1.43,2.0957813)(2.31,1.2157812)
\psline[linewidth=0.04cm](2.59,0.55578125)(1.65,-0.44421875)
\psline[linewidth=0.04cm](2.97,0.5157812)(3.85,-0.36421874)
\usefont{T1}{ppl}{m}{n}
\rput(1.8645313,0.16578124){$-1$}
\usefont{T1}{ppl}{m}{n}
\rput(3.7645311,0.18578126){$+1$}
\psline[linewidth=0.04cm,linestyle=dashed,dash=0.16cm 0.16cm](2.81,0.49578124)(3.69,-0.38421875)
\psline[linewidth=0.04cm](3.95,-1.0842187)(3.01,-2.0842187)
\psline[linewidth=0.04cm](4.33,-1.1242187)(5.21,-2.0042188)
\usefont{T1}{ppl}{m}{n}
\rput(3.2245312,-1.4742187){$-1$}
\usefont{T1}{ppl}{m}{n}
\rput(5.1245313,-1.4542187){$+1$}
\psline[linewidth=0.04cm,linestyle=dashed,dash=0.16cm 0.16cm](4.17,-1.1442188)(5.05,-2.0242188)
\end{pspicture} 
}
\caption{An extended mistake tree with respect to set of threshold classifiers $\calH = \cbr{h_i = 2 I(x \leq i) - 1: i =1,2,3,4}$.}
\label{fig:extendedmistaketree}
\end{figure}

\paragraph{Extended Standard Optimal Algorithm.} Algorithm~\ref{alg:esoa} presents the Extended Standard Optimal Algorithm (SOA.DK), which, as we will show, is an optimal deterministic algorithm for online prediction in the $k$-SZB model in the realizable case. Note that it works even when the hypothesis class $\calH$ is infinite.
Similar to the Standard Optimal Algorithm, it maintains a version space $V$. For a new example $x_t$, it predicts $\hat{y}_t \in \cbr{-1,+1,\bot}$ by computing function $\ELdim$ over subsets of $V$. The function $\ELdim$ is defined as follows.

\begin{definition}[Extended Littlestone's Dimension]
For a hypothesis class $V$ and integer $k \geq 0$, the extended Littlestone's dimension $\ELdim(V, k)$ is defined as:
\[ \ELdim(V,k) := \sup\cbr{m \in \N: \text{There exists a $(k,m)$-difficult extended mistake tree for $V$}}\]
\end{definition}
We remark that if for every integer $m$, $V$ has a $(k,m)$-extended mistake tree, then $\ELdim(V,k) = \infty$; If $V = \emptyset$, then $\ELdim(V,k) = -\infty$. Since for $k' < k$, a $(k,m)$-difficult extended mistake tree is also $(k',m)$-difficult, $\ELdim(V,k)$ is monotonically nonincreasing with respect to $k$.

We first show that when $\ELdim(V,k)$ is high, then an adversary can force a large number of nontrivial rounds by showing a $V$-realizable sequence, to any deterministic algorithms that guarantees at most $k$ mistakes.
\begin{lemma}
Suppose we are given a hypothesis set $V$ and integers $k \geq 0, m \geq 0$. If $\ELdim(V,k) \geq m$, then there is a strategy of the adversary that presents a $V$-realizable sequence and that can force any deterministic algorithm that guarantees $\leq k$ mistakes to have $\geq m$ nontrivial rounds.
\label{lem:emtusage}
\end{lemma}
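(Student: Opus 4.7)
The plan is to use the existence of a $(k,m)$-difficult extended mistake tree $T$ for $V$ (guaranteed by the hypothesis $\ELdim(V,k) \geq m$; note that any $(k,m')$-difficult tree with $m' \geq m$ is automatically $(k,m)$-difficult, so we may pick a tree whose difficulty parameter is exactly $m$) to construct the adversary's strategy directly from $T$. This is a direct generalization of the classical argument showing that a mistake tree of depth $d$ forces $d$ mistakes.

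Concretely, the adversary maintains a current node in $T$, initialized to the root. On round $t$, if the current node is an internal node associated with example $x$, the adversary presents $x$ to the learner and reacts to the learner's response $\hat{y}_t$ as follows. If $\hat{y}_t \in \cbr{-1,+1}$, the adversary reveals $y_t = -\hat{y}_t$ and descends along the downward solid edge labeled $y_t$; this produces a mistake. If $\hat{y}_t = \bot$, the adversary descends along the unique downward dashed edge out of the current node and reveals $y_t$ to be the label of that edge; this produces an abstention. The interaction halts as soon as the current node is a leaf.

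Two things need to be verified. First, the sequence presented is $V$-realizable: the traversal traces out a root-to-leaf path in $T$, and by definition of an extended mistake tree, the classifier $h \in V$ at the terminating leaf agrees with every example-label pair $(x_t, y_t)$ revealed along this path. Second, the round count: every round along the traversed path is nontrivial (mistake on a solid edge, abstention on a dashed edge). Since the learner is promised to make at most $k$ mistakes, the traversed path uses at most $k$ solid edges, so by the $(k,m)$-difficulty of $T$ its length must be at least $m$, giving at least $m$ nontrivial rounds.

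No step is a serious obstacle, since the argument is essentially an unfolding of the definitions; the whole point of defining $(k,m)$-difficulty and extended mistake trees was to make precisely this adversarial strategy work. The only minor wrinkle is the one noted above about interpreting $\ELdim(V,k) \geq m$ via the supremum when the sup is not attained at exactly $m$, which is handled by the monotonicity observation that $(k,m')$-difficulty implies $(k,m)$-difficulty for $m' \geq m$.
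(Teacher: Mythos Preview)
Your proposal is correct and follows essentially the same approach as the paper: both use the $(k,m)$-difficult extended mistake tree guaranteed by $\ELdim(V,k)\ge m$ to define the adversary's strategy, then argue that the resulting root-to-leaf path has at most $k$ solid edges and hence length at least $m$. Your write-up is in fact slightly more careful, since you explicitly handle the supremum issue via the monotonicity of $(k,m)$-difficulty in $m$ and explicitly verify $V$-realizability via the leaf hypothesis, points the paper's proof leaves implicit.
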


In the following lemma, we show that given a mistake budget $k$, if the extended Littlestone's dimension of $V$ is small, then SOA.DK has a small number of nontrivial rounds for $V$-realizable sequences.
\begin{lemma}[Performance Guarantees of SOA.DK]
Suppose we are given a hypothesis class $V$ and integers $k \geq 0, m \geq 0$. If $\ELdim(V,k) \leq m$, then Algorithm~\ref{alg:esoa}, when run on $V$ with mistake budget $k$, achieves a $(k,m)$-SZB bound with respect to any adversary that shows $V$-realizable sequences.
\label{lem:perfguar}
\end{lemma}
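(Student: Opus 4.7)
The plan is to identify a key dichotomy about the $\ELdim$ recursion and propagate it as a potential along the algorithm's execution.

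Key dichotomy: whenever $\ELdim(V, k) \leq m$ with $m \geq 1$, for every $x \in \calX$ at least one of the following holds: (a) there exists $\hat{y} \in \{-1, +1\}$ with $\ELdim(V[(x, -\hat{y})], k-1) \leq m - 1$ (a safe prediction); or (b) $\ELdim(V[(x, y)], k) \leq m - 1$ for both $y \in \{-1, +1\}$ (a safe abstention). I would prove this by contradiction: if neither holds, then $\ELdim(V[(x, y)], k - 1) \geq m$ for both $y$, and, without loss of generality, $\ELdim(V[(x, +1)], k) \geq m$. Let $T_{-1}$ witness the former with a $(k-1, m)$-difficult tree on $V[(x, -1)]$, and let $T_{+1}$ witness the latter with a $(k, m)$-difficult tree on $V[(x, +1)]$. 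Build a new tree $T$ rooted at $x$, with solid edges into $T_{-1}$ and $T_{+1}$ and the single dashed edge pointing to the $+1$ child (hence also into $T_{+1}$, with label $+1$). Every root-to-leaf path of $T$ using at most $k$ solid edges has length $\geq m + 1$: either it spends one solid edge at the root and then at most $k - 1$ solid edges in a $(k-1, m)$-difficult subtree, or it takes the dashed edge and then at most $k$ solid edges in a $(k, m)$-difficult subtree. Thus $\ELdim(V, k) \geq m + 1$, contradicting the hypothesis.

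Given the dichotomy, SOA.DK's rule of picking a safe $\hat{y}$ when (a) holds and otherwise abstaining maintains the following invariant: at the start of round $t$, $\ELdim(V_t, k_t) \leq m - n_t$, where $V_t$ is the current version space, $k_t$ is the remaining mistake budget, and $n_t$ is the number of nontrivial rounds so far. The initial case is exactly the lemma's hypothesis. Trivial rounds preserve the invariant by monotonicity of $\ELdim$ in its first argument. Nontrivial rounds drop the left-hand side by at least one: in the mistake case, $V_{t+1} = V_t[(x_t, -\hat{y}_t)]$ and $k_{t+1} = k_t - 1$, and (a) supplies the drop; in the abstention case, $V_{t+1} = V_t[(x_t, y_t)]$ with $k_{t+1} = k_t$, and (b) supplies the drop.

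Since the sequence is $V$-realizable, $V_t$ is always nonempty and so $\ELdim(V_t, k_t) \geq 0$; the invariant then gives $n_t \leq m$, which is the nontrivial-round bound. For the mistake bound, the algorithm only risks a mistake via (a), which requires $\ELdim(V[(x, -\hat{y})], k_t - 1) < \infty$; with the convention that $\ELdim(V', -1) = \infty$ on nonempty $V'$, this can hold when $k_t = 0$ only if $V_t[(x_t, -\hat{y}_t)] = \emptyset$, forcing a correct prediction and preventing the mistake count from exceeding $k$. The main obstacle is the dichotomy's tree construction, where the $+1$ child of the new root must serve as the endpoint of both a solid edge and the dashed edge; using the strictly stronger $(k, m)$-difficult subtree $T_{+1}$ on that side is precisely what makes the combined tree $(k, m+1)$-difficult.
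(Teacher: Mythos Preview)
Your approach is essentially the paper's: both hinge on the recursive property of $\ELdim$ (your dichotomy is the contrapositive of the paper's Lemma on the recursive property of $\ELdim$, and your tree construction is essentially its proof), and both argue that each nontrivial round forces the relevant $\ELdim$ to drop by one. You package this as a potential invariant $\ELdim(V_t,k_t)\le m-n_t$; the paper packages it as joint induction on $(k,m)$. These are the same argument in different clothes.

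There is, however, a real gap in the step linking the dichotomy to the algorithm. You write that ``SOA.DK's rule of picking a safe $\hat y$ when (a) holds and otherwise abstaining'' maintains the invariant, and later that ``(a) supplies the drop'' in the mistake case. That is not what SOA.DK does: it computes $m_{+1},m_{-1},m_{\bot}$ and outputs the $\arg\min$. Your dichotomy only guarantees that \emph{some} action is safe; you have not argued that the action SOA.DK actually selects is safe. The missing observation is that the dichotomy is equivalent to $\min(m_{+1},m_{-1},m_{\bot})\le (m-n_t)-1$, and that after a nontrivial round the new potential $\ELdim(V_{t+1},k_{t+1})$ is exactly $m_{\hat y_t}$ when $\hat y_t\in\{\pm1\}$ and is bounded by $m_{\bot}$ when $\hat y_t=\bot$; since $\hat y_t$ realizes the minimum, the drop follows. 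The paper handles precisely this point by casing on $\hat y_t\in\{-1,+1,\bot\}$ and using that $m_{\hat y_t}\le m_y$ for the other two options in the contradiction. Likewise, your mistake-bound argument via the convention $\ELdim(\cdot,-1)=\infty$ again presumes the algorithm follows your (a)/(b) rule; in the actual algorithm the mistake bound holds simply because SOA.DK abstains by fiat whenever $k_t=0$ and $x_t\in\DIS(V_t)$.
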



\begin{algorithm}[h]
\caption{Extended Standard Optimal Algorithm: SOA.DK}
\begin{algorithmic}[1]
\STATE Input: hypothesis class $\calH$, mistake budget $k$.
\STATE Initialize version space $V \gets \calH$.
\FOR{$t = 1,2,\ldots,$}
\STATE Receive example $x_t \in \calX$.
\IF[All classifiers in $V$ predict unanimously]{$x_t \in \DIS(V)$}
    \STATE Predict $\hat{y}_t = h(x_t)$, where $h$ is an arbitrary hypothesis in $V$.
\ELSE[There is disagreement among $V$]
    \IF[Zero mistake budget, must output $\bot$]{$k = 0$}
        \STATE Predict $\hat{y}_t = \bot$.
    \ELSE[Predict by minimizing the $\ELdim$ of future version space]
        \STATE Compute $m_{+1} = \ELdim(V[(x_t,-1)], k-1)$, $m_{-1} = \ELdim(V[(x_t,+1)], k-1)$, and $m_{\bot} = \max(\ELdim(V[(x_t,-1)], k), \ELdim(V[(x_t,+1)], k))$
        \STATE Predict $\hat{y}_t = \arg\min\cbr{m_y: y \in \cbr{-1,+1,\bot}} $.
    \ENDIF
\ENDIF
\STATE Receive label $y_t$.
\IIF{$\hat{y}_t = -y_t$ or $\hat{y}_t = \bot$} $V \gets V[(x_t, y_t)]$ \ENDIIF \COMMENT{Update version space}
\IIF{$\hat{y}_t = -y_t$} $k \gets k - 1$ \ENDIIF \COMMENT{Update mistake budget}

\ENDFOR
\end{algorithmic}
\label{alg:esoa}
\end{algorithm}


An immediate consequence of Lemma~\ref{lem:perfguar} is that SOA.DK is optimal, in the sense that it has the smallest number of worst case nontrivial rounds, amongst all {\em deterministic} algorithms that work in $k$-SZB model.

\begin{theorem}[Optimality of SOA.DK]
Suppose we are given a hypothesis class $\calH$ and integers $k \geq 0$, $m \geq 1$ such that $\ELdim(\calH, k) = m$. Then:
\begin{enumerate}[(a)]
\item SOA.DK achieves a $(k,m)$-SZB bound for any adversary that shows $\calH$-realizable sequences.
\item There exists an adversary showing $\calH$-realizable sequences, such that no deterministic algorithm $\calA$ can achieve a $(k,m-1)$-SZB bound.
\end{enumerate}
\label{thm:optimality}
\end{theorem}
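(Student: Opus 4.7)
The plan is to derive both parts as near-immediate consequences of Lemmas~\ref{lem:emtusage} and~\ref{lem:perfguar} applied at the critical value $m = \ELdim(\calH, k)$. Since the two lemmas already package the hard combinatorial and algorithmic content (the extended-mistake-tree adversary construction on one side, and the SOA.DK analysis on the other), the theorem itself is essentially a bookkeeping statement that pins them together.

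For part~(a), I would invoke Lemma~\ref{lem:perfguar} with $V = \calH$ and the given $k$ and $m$. Because $\ELdim(\calH, k) = m$, the hypothesis $\ELdim(V,k) \leq m$ of the lemma is satisfied, so SOA.DK run on $\calH$ with mistake budget $k$ achieves the $(k,m)$-SZB bound against every adversary presenting $\calH$-realizable sequences. Nothing further is needed.

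For part~(b), I would apply Lemma~\ref{lem:emtusage} with the same $V = \calH$, $k$, and $m$. Because $\ELdim(\calH, k) = m \geq m$, the lemma supplies a single adversary strategy that always presents an $\calH$-realizable sequence and that forces every deterministic algorithm guaranteeing at most $k$ mistakes to incur at least $m$ nontrivial rounds. Fix this adversary and consider an arbitrary deterministic $\calA$. If the run of $\calA$ against this adversary makes more than $k$ mistakes, then $\calA$ already violates the mistake part of the $(k, m-1)$-SZB bound. Otherwise $\calA$ satisfies the $\leq k$-mistake premise of Lemma~\ref{lem:emtusage}, so the adversary forces at least $m > m-1$ nontrivial rounds, violating the nontrivial-round part of the $(k, m-1)$-SZB bound. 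Either way $\calA$ fails to attain $(k, m-1)$-SZB, which establishes~(b).

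The only subtlety worth flagging is the quantifier management in part~(b): Lemma~\ref{lem:emtusage} is phrased for algorithms that a priori guarantee at most $k$ mistakes, whereas the theorem must rule out \emph{every} deterministic algorithm, including those that might overshoot the mistake budget. The short two-case split above bridges this gap cleanly, and I expect no further obstacle.
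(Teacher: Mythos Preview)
Your proposal is correct and follows essentially the same approach as the paper: part~(a) is an immediate application of Lemma~\ref{lem:perfguar}, and part~(b) follows from Lemma~\ref{lem:emtusage}. The two-case split you flag for part~(b) is a welcome bit of extra care---the paper leaves that quantifier reconciliation implicit---but it does not constitute a different route.
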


The following simple property relates extended Littlestone's dimension to Littlestone's dimension.

\begin{theorem}[Relating $\Ldim$ to $\ELdim$]
Suppose we are given a hypothesis class $\calH$. If $\Ldim(\calH) = d < \infty$, then
\[ \ELdim(\calH, d) = d \]
\label{thm:eldimldim}
\end{theorem}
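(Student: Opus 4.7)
The plan is to prove the two inequalities $\ELdim(\calH, d) \geq d$ and $\ELdim(\calH, d) \leq d$ separately, using the hypotheses and the lemmas already established.

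For the lower bound $\ELdim(\calH, d) \geq d$, I would produce a concrete $(d,d)$-difficult extended mistake tree by massaging a witnessing mistake tree for $\Ldim(\calH)=d$. That mistake tree $T$ is a complete binary tree of depth exactly $d$ whose internal nodes are labeled with examples, whose leaves are classifiers in $\calH$ consistent with the labels on the root-to-leaf path, and whose two downward edges at each node are labeled $-1$ (left) and $+1$ (right). I would convert $T$ into an extended mistake tree $T'$ by keeping the same nodes, leaves, and two solid edges at each internal node, and additionally introducing at each internal node a dashed edge pointing to the left child (labeled $-1$). The key observation is that because the dashed edge carries the same label as the coinciding solid edge, each leaf hypothesis still agrees with the edge labels on every root-to-leaf path, so $T'$ satisfies the structural constraints of an extended mistake tree. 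Since every root-to-leaf path in $T'$ has length exactly $d$, in particular every such path using at most $d$ solid edges has length at least $d$, so $T'$ is $(d,d)$-difficult, giving $\ELdim(\calH, d) \geq d$.

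For the upper bound $\ELdim(\calH, d) \leq d$, I would leverage the classical standard optimal algorithm (Algorithm~\ref{alg:soa}), viewed as a deterministic $k$-SZB algorithm with $k = d$ that never outputs $\bot$. On any $\calH$-realizable sequence it makes at most $d$ mistakes and zero abstentions, hence achieves a $(d,d)$-SZB bound. Now suppose for contradiction that $\ELdim(\calH, d) \geq d+1$. Then Lemma~\ref{lem:emtusage} supplies an adversary producing an $\calH$-realizable sequence that forces any deterministic algorithm guaranteeing at most $d$ mistakes to incur at least $d+1$ nontrivial rounds. Applied to SOA this contradicts the $(d,d)$-SZB bound it attains, so $\ELdim(\calH, d) \leq d$.

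This proof is essentially bookkeeping once the right objects are identified, and I do not foresee a genuine obstacle. The only point requiring mild care is verifying the construction in the lower bound: one must argue that the dashed-edge augmentation does not introduce any new labeling constraint on the leaves, which holds precisely because the dashed edge duplicates the direction and label of one of the two solid edges. With that remark, the rest of the argument combines the definition of $(d,d)$-difficulty with Lemma~\ref{lem:emtusage} and the mistake-bound guarantee of the classical SOA.
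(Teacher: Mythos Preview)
Your proposal is correct and follows essentially the same approach as the paper: both directions match the paper's argument, using SOA together with Lemma~\ref{lem:emtusage} for the upper bound and augmenting a depth-$d$ mistake tree with dashed edges for the lower bound. The only cosmetic difference is that the paper attaches each dashed edge to the right child rather than the left, which is immaterial since in a complete depth-$d$ tree every root-to-leaf path has length exactly $d$ regardless of which edges are traversed.
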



\paragraph{Recursive Definition.} We provide a recursive characterization of Extended Littlestone's dimension. For finite $\calH$\footnote{Just as with Littlestone's dimension, for infinite $\calH$, the recurrence may not reach the base case.}, the following recurrence holds for its extended Littlestone's dimension:
\begin{eqnarray*}
&&\ELdim(\calH, k) = \\
&&\begin{cases} 0 & |\calH| = 1
\\ 1 + \max_{x \in \DIS(\calH)} \max_{y \in \cbr{\pm 1}} \ELdim(\calH[(x,y)], 0) & |\calH| > 1, k = 0 \\
1 + \max_{x \in \DIS(\calH)} \max_{y \in \cbr{\pm 1}} \min (\ELdim(\calH[(x,y)], k-1), \ELdim(\calH[(x,-y)], k)) & |\calH| > 1, k \geq 1  \end{cases}
\end{eqnarray*}
The recurrence is an immediate consequence of Lemma~\ref{lem:eldimrecursive} in Appendix~\ref{sec:pfrealizable}.


\section{Properties of Extended Littlestone's Dimension}
\label{sec:eldimension}


We next present upper bounds on the Extended Littlestone's Dimension of a hypothesis class $\calH$. Our upper bounds depend on the {\em{tree shattering coefficient}}, a notion analogous to the growth function, which is implicit in~\cite{BPS09}. We also present some examples of Extended Littlestone's Dimension.

\subsection{Tree Shattering Coefficient}
\label{sub:sgf}
The shattering coefficient (also known as the growth function), initially studied in~\citep{VC71}, is a key notion in PAC learnability.
\begin{definition} 
Given a hypothesis $\calH$, the shattering coefficient of $\calH$, $\Pi(\calH,t)$ is defined as the maximum number of labelings achievable by $\calH$ over $t$ points. Formally,
\[ \Pi(\calH,t) := \max_{x_1, \ldots, x_t} |\cbr{ (h(x_1), \ldots, h(x_t)): h \in \calH }|\]
\end{definition}

Inspired by the shattering coefficient, in online learning, we define the notion of {\em tree shattering coefficient} below, implicit in~\citep{BPS09}. As we will see, this notion is crucial to online learnability in both the mistake bound and the $k$-SZB models.
First we set up our notation by adopting the notion of trees in~\citep{RST10}. 
\begin{definition}[$\calX$-valued Trees, see~\citep{RST10}]
A depth-$t$ $\calX$-valued tree $\x$ is a series of mappings $(\x_1, \ldots, \x_t)$, where $\x_i: \cbr{\pm 1}^{i-1} \to \calX$. The root of the tree $\x$ is the constant function $\x_1 \in \calX$. For integer $t$, the mapping $\x_t(\cdot)$ takes care of the nodes in level $t$.
\end{definition}

To see why a series of mappings corresponds to a tree, we first note that a tuple $(\epsilon_1, \ldots, \epsilon_{s-1})$ in $\cbr{\pm 1}^{s-1}$ can be thought of as a left/right sequence of length $s-1$, where $-1$ stands for left and $+1$ stands for right, respectively. The node reached from the root following the path corresponding to the left/right sequence corresponds to $\x_s(\epsilon_1, \ldots, \epsilon_{s-1}) \in \calX$.
For example, the root node corresponds to $\x_1 \in \calX$, the left child of the root corresponds to $\x_2(-1) \in \calX$, the right child of the left child of the root corresponds to $\x_3(-1,+1) \in \calX$, etc. See Figure~\ref{fig:tree} for an illustration. We slightly abuse the notation to let $\x_t(\epsilon)$ denote $\x_t(\epsilon_1, \ldots, \epsilon_{t-1})$. 

\begin{figure}
\centering
\scalebox{1} 
{
\begin{pspicture}(0,-1.8492187)(9.169063,1.8492187)
\usefont{T1}{ppl}{m}{n}
\rput(4.2545314,1.6457813){$\x_1$}
\psline[linewidth=0.04cm](4.03,1.2957813)(3.09,0.29578125)
\psline[linewidth=0.04cm](4.43,1.3157812)(5.39,0.31578124)
\usefont{T1}{ppl}{m}{n}
\rput(3.3045313,0.90578127){$-1$}
\usefont{T1}{ppl}{m}{n}
\rput(5.204531,0.92578125){$+1$}
\usefont{T1}{ppl}{m}{n}
\rput(3.0145311,0.01578125){$\x_2(-1)$}
\usefont{T1}{ppl}{m}{n}
\rput(5.974531,0.01578125){$\x_2(+1)$}
\usefont{T1}{ppl}{m}{n}
\rput(1.0945313,-1.6142187){$\x_3(-1,-1)$}
\usefont{T1}{ppl}{m}{n}
\rput(3.5345314,-1.6142187){$\x_3(-1,+1)$}
\usefont{T1}{ppl}{m}{n}
\rput(5.434531,-1.6142187){$\x_3(+1,-1)$}
\usefont{T1}{ppl}{m}{n}
\rput(7.7545314,-1.6142187){$\x_3(+1, +1)$}
\psline[linewidth=0.04cm](2.63,-0.36421874)(1.69,-1.3642187)
\psline[linewidth=0.04cm](3.03,-0.34421876)(3.99,-1.3442187)
\usefont{T1}{ppl}{m}{n}
\rput(1.9045312,-0.75421876){$-1$}
\usefont{T1}{ppl}{m}{n}
\rput(3.8045313,-0.7342188){$+1$}
\psline[linewidth=0.04cm](5.61,-0.32421875)(4.67,-1.3242188)
\psline[linewidth=0.04cm](6.01,-0.30421874)(6.97,-1.3042188)
\usefont{T1}{ppl}{m}{n}
\rput(4.884531,-0.71421874){$-1$}
\usefont{T1}{ppl}{m}{n}
\rput(6.784531,-0.69421875){$+1$}
\end{pspicture} 
}
\caption{A depth-$3$ $\calX$-valued tree $\x$.}
\label{fig:tree}
\end{figure}

Note that a $\calX$-valued tree is not a mistake tree or an extended mistake tree, since it does not have leaves corresponding to hypotheses in $\calH$. 

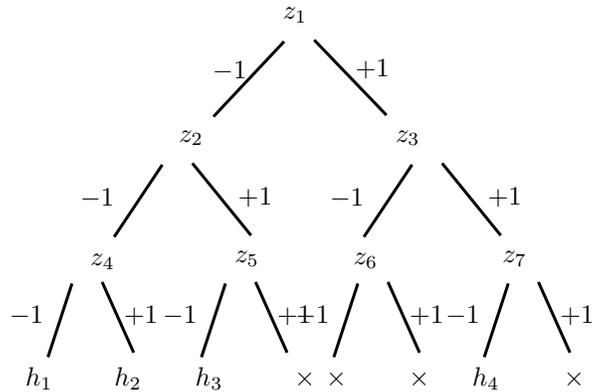
\begin{figure}
\centering
\scalebox{1} 
{
\begin{pspicture}(0,-2.6332421)(8.469063,2.6332421)
\usefont{T1}{ppl}{m}{n}
\rput(3.9545312,2.4298048){$z_1$}
\psline[linewidth=0.04cm](3.81,2.0798047)(2.87,1.0798047)
\psline[linewidth=0.04cm](4.21,2.0998046)(5.17,1.0998046)
\usefont{T1}{ppl}{m}{n}
\rput(3.0845313,1.6898047){$-1$}
\usefont{T1}{ppl}{m}{n}
\rput(4.9845314,1.7098047){$+1$}
\usefont{T1}{ppl}{m}{n}
\rput(2.5745313,0.7998047){$z_2$}
\usefont{T1}{ppl}{m}{n}
\rput(5.454531,0.7998047){$z_3$}
\usefont{T1}{ppl}{m}{n}
\rput(1.3945312,-0.8501953){$z_4$}
\usefont{T1}{ppl}{m}{n}
\rput(3.3145313,-0.8101953){$z_5$}
\usefont{T1}{ppl}{m}{n}
\rput(4.8945312,-0.8301953){$z_6$}
\usefont{T1}{ppl}{m}{n}
\rput(6.8745313,-0.8101953){$z_7$}
\psline[linewidth=0.04cm](0.99,-1.1401954)(0.67,-2.1201954)
\psline[linewidth=0.04cm](1.39,-1.1201953)(1.81,-2.0601952)
\usefont{T1}{ppl}{m}{n}
\rput(0.38453126,-1.5701953){$-1$}
\usefont{T1}{ppl}{m}{n}
\rput(1.9045312,-1.5701953){$+1$}
\usefont{T1}{ppl}{m}{n}
\rput(0.5545313,-2.3982422){$h_1$}
\usefont{T1}{ppl}{m}{n}
\rput(1.7345313,-2.3982422){$h_2$}
\usefont{T1}{ppl}{m}{n}
\rput(2.8145313,-2.3982422){$h_3$}
\usefont{T1}{ppl}{m}{n}
\rput(6.494531,-2.3982422){$h_4$}
\usefont{T1}{ppl}{m}{n}
\rput(4.0845313,-2.414336){$\times$}
\usefont{T1}{ppl}{m}{n}
\rput(4.5045314,-2.414336){$\times$}
\usefont{T1}{ppl}{m}{n}
\rput(5.6045313,-2.414336){$\times$}
\usefont{T1}{ppl}{m}{n}
\rput(7.664531,-2.414336){$\times$}
\psline[linewidth=0.04cm](2.19,0.43980467)(1.55,-0.5201953)
\psline[linewidth=0.04cm](2.59,0.45980468)(3.37,-0.5001953)
\usefont{T1}{ppl}{m}{n}
\rput(1.3245312,-0.010195312){$-1$}
\usefont{T1}{ppl}{m}{n}
\rput(3.4245312,0.009804687){$+1$}
\psline[linewidth=0.04cm](5.51,0.43980467)(4.87,-0.5201953)
\psline[linewidth=0.04cm](5.91,0.45980468)(6.69,-0.5001953)
\usefont{T1}{ppl}{m}{n}
\rput(4.6445312,-0.010195312){$-1$}
\usefont{T1}{ppl}{m}{n}
\rput(6.744531,0.009804687){$+1$}
\psline[linewidth=0.04cm](3.03,-1.1401954)(2.71,-2.1201954)
\psline[linewidth=0.04cm](3.43,-1.1201953)(3.85,-2.0601952)
\usefont{T1}{ppl}{m}{n}
\rput(2.4245312,-1.5701953){$-1$}
\usefont{T1}{ppl}{m}{n}
\rput(3.9445312,-1.5701953){$+1$}
\psline[linewidth=0.04cm](4.79,-1.1401954)(4.47,-2.1201954)
\psline[linewidth=0.04cm](5.19,-1.1201953)(5.61,-2.0601952)
\usefont{T1}{ppl}{m}{n}
\rput(4.184531,-1.5701953){$-1$}
\usefont{T1}{ppl}{m}{n}
\rput(5.704531,-1.5701953){$+1$}
\psline[linewidth=0.04cm](6.79,-1.1401954)(6.47,-2.1201954)
\psline[linewidth=0.04cm](7.19,-1.1201953)(7.61,-2.0601952)
\usefont{T1}{ppl}{m}{n}
\rput(6.184531,-1.5701953){$-1$}
\usefont{T1}{ppl}{m}{n}
\rput(7.704531,-1.5701953){$+1$}
\end{pspicture} 
}
\caption{A concrete depth-3 $\calX$-valued tree $\x$, where $\x_1 = z_1$, $\x_2(-1) = z_2$, $\x_2(+1) = z_3$, $\x_3(-1,-1) = z_4$, $\x_3(-1,+1) = z_5$, $\x_3(+1,-1) = z_6$, $\x_3(+1,+1) = z_7$. There are 4 root to leaf paths that agrees with some hypothesis in $\calH$ ($\times$ in a leaf indicates that no hypothesis in $\calH$ agree with the path from root to it), i.e. $|S(\calH, \x)| = 4$.}
\label{fig:tree_cons}
\end{figure}

Given a $\calX$-valued tree $\x$, we add an extra level of edges at the bottom. Specifically for each leaf $\x_t(\epsilon)$, we attach a left and a right downward edge onto it, labeled $-1$ and $+1$ respectively. Now, consider every root to leaf path in the tree. If there is some classifier $h$ in $\calH$ that agrees with the path, we label the leaf with $h$; otherwise we label the leaf with symbol $\times$. We count the number of leaves not labeled $\times$, denoted by function $S(\calH, \x)$. See Figure~\ref{fig:tree_cons} for an example.

\begin{definition}
For a depth-$t$ $\calX$-valued tree $\x$, and a hypothesis class $\calH$, define function $S(\calH, \x)$ as the maximum number of labelings achievable by $\calH$ on $\x$. Formally,
\[ S(\calH, \x) := \cbr{ (\epsilon_1, \epsilon_2, \ldots, \epsilon_t) \in \cbr{\pm 1}^t: \epsilon_1 = h(\x_1(\epsilon)), \epsilon_2 = h(\x_2(\epsilon)), \ldots, \epsilon_t = h(\x_t(\epsilon)), \text{ for some } h \in \calH } \]
\end{definition}

\begin{definition} 
Given hypothesis class $\calH$, and integer $t \geq 1$, the tree shattering coefficient of $\calH$, $\calS(\calH,t)$ is defined as the maximum number of labelings achievable by $\calH$ over all depth-$t$ trees. Formally, 
\[\calS(\calH,t) := \max_{\x}  |S(\calH, \x)| \]
Additionally, define $\calS(\calH, 0) := 1$ if $\calH$ is nonempty, $\calS(\calH, 0) := 0$ if $\calH$ is empty.
\end{definition}

In other words, given hypothesis class $\calH$ and a depth-$t$ tree $\x$ with internal nodes only, there are at most $\calS(\calH,t)$ distinct paths in $T$ consistent with some classifier $h \in \calH$. Since a depth-$t$ has at most $2^t$ root to leaf paths, $\calS(\calH, t) \leq 2^t$. Note that if $\calH$ has a depth-$t$ mistake tree, then $\calS(\calH, t) = 2^t$. If we constrain the trees chosen to be constant among nodes in the same depth, then the tree shattering coefficient is equivalent to the shattering coefficient. In Appendix~\ref{sec:tscsgf}, we show that the tree shattering coefficient is connected with the sequential growth function(maximal sequential zero covering number), defined in~\citep{RST10}.

The following two lemmas give bounds on tree shattering coefficients, implicit in~\cite{RST10, BPS09}. For finite hypothesis class $\calH$, its tree shattering coefficient is at most the size of $|\calH|$.
\begin{lemma}
If $\calH$ is finite, then for any $t \geq 0$, $\calS(\calH,t) \leq |\calH|$.
\label{lem:finitegrowth}
\end{lemma}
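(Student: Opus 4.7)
The plan is to exhibit, for every depth-$t$ $\calX$-valued tree $\x$, an injection from $S(\calH, \x)$ into $\calH$, from which $|S(\calH, \x)| \leq |\calH|$ follows immediately, and then to take the maximum over $\x$ to conclude $\calS(\calH, t) \leq |\calH|$.

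Concretely, fix an arbitrary depth-$t$ tree $\x$. For each sign sequence $\epsilon \in S(\calH, \x)$, invoke the definition of $S(\calH, \x)$ to pick a witness $h_\epsilon \in \calH$ satisfying $h_\epsilon(\x_i(\epsilon)) = \epsilon_i$ for all $i \in \{1, \ldots, t\}$ (using e.g.\ the axiom of choice or, more simply, any fixed ordering on $\calH$). Define the map $\Phi : S(\calH, \x) \to \calH$ by $\Phi(\epsilon) = h_\epsilon$.

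The main (and only substantive) step is showing $\Phi$ is injective. Suppose for contradiction that $\epsilon \neq \epsilon'$ but $h_\epsilon = h_{\epsilon'} =: h$. Let $j$ be the smallest index where $\epsilon$ and $\epsilon'$ differ, so $\epsilon_i = \epsilon'_i$ for all $i < j$ while $\epsilon_j \neq \epsilon'_j$. The key observation is that $\x_j$ depends only on the first $j-1$ coordinates of its argument, hence
\[ \x_j(\epsilon) = \x_j(\epsilon_1, \ldots, \epsilon_{j-1}) = \x_j(\epsilon'_1, \ldots, \epsilon'_{j-1}) = \x_j(\epsilon'). \]
But then $\epsilon_j = h(\x_j(\epsilon)) = h(\x_j(\epsilon')) = \epsilon'_j$, contradicting $\epsilon_j \neq \epsilon'_j$. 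Hence $\Phi$ is injective, yielding $|S(\calH, \x)| \leq |\calH|$.

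Finally, since this holds for every depth-$t$ tree $\x$, taking the maximum gives $\calS(\calH, t) = \max_{\x} |S(\calH, \x)| \leq |\calH|$, as required. I do not anticipate any real obstacle here: the whole content is the prefix-agreement observation about tree-valued mappings, and the argument is essentially the standard "sign pattern determines a distinct hypothesis" injection used for the Sauer--Shelah type inequality, adapted to tree structures by exploiting that level-$j$ nodes are functions of the length-$(j{-}1)$ prefix.
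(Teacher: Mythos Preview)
Your proof is correct and follows the same approach as the paper. The paper's proof simply asserts $|S(\calH,\x)| \leq |\calH|$ for every depth-$t$ tree $\x$ and then takes the maximum; you have supplied the natural justification for that assertion via the injection $\epsilon \mapsto h_\epsilon$ and the prefix-agreement observation, which is exactly the argument one would expect (and is the same mechanism the paper spells out elsewhere, in the proof of Lemma~\ref{lem:cardxv}).
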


Furthermore, if an infinite hypothesis class $\calH$ has Littlestone's dimension $d < \infty$, its tree shattering coefficient is polynomial in $t$, that is, $O(t^d)$.

\begin{lemma}
If $\calH$ has Littlestone's dimension $d < \infty$, then for any $t \geq 0$, $\calS(\calH,t) \leq \binom{t}{\leq d}$.
\label{lem:ldimgrowth}
\end{lemma}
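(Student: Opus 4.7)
The plan is to prove this by induction on the tree depth $t$, following the Sauer--Shelah--Perles--style argument used to bound the classical growth function by VC dimension. The base case $t=0$ is immediate since $\calS(\calH, 0) = 1 = \binom{0}{\leq d}$ whenever $\calH$ is nonempty. For the inductive step, fix an arbitrary depth-$t$ $\calX$-valued tree $\x$ and split $\calH$ at its root:
\[
\calH_{-1} = \{h \in \calH : h(\x_1) = -1\}, \qquad \calH_{+1} = \{h \in \calH : h(\x_1) = +1\}.
\]

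A pattern $(\epsilon_1, \ldots, \epsilon_t)$ with $\epsilon_1 = -1$ lies in $S(\calH, \x)$ if and only if $(\epsilon_2, \ldots, \epsilon_t) \in S(\calH_{-1}, \x^-)$, where $\x^-$ is the depth-$(t-1)$ left subtree of $\x$ rooted at $\x_2(-1)$, and symmetrically for $\epsilon_1 = +1$ and $\x^+$. Hence
\[
|S(\calH, \x)| = |S(\calH_{-1}, \x^-)| + |S(\calH_{+1}, \x^+)|.
\]
The key structural step is that $\calH_{-1}$ and $\calH_{+1}$ cannot both have Littlestone's dimension $d$. If either is empty, the remaining term is at most $\binom{t-1}{\leq d} \leq \binom{t}{\leq d}$ by induction. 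Otherwise $\x_1 \in \DIS(\calH)$, and truncating optimal mistake trees of $\calH_{-1}$ and $\calH_{+1}$ to their common minimum depth and grafting them under $\x_1$ produces a complete mistake tree for $\calH$ of depth $1 + \min\{\Ldim(\calH_{-1}), \Ldim(\calH_{+1})\}$. Since $\Ldim(\calH) \leq d$, this forces $\min\{\Ldim(\calH_{-1}), \Ldim(\calH_{+1})\} \leq d-1$; WLOG $\Ldim(\calH_{-1}) \leq d-1$. The inductive hypothesis applied to the two depth-$(t-1)$ subtrees then gives
\[
|S(\calH, \x)| \leq \binom{t-1}{\leq d-1} + \binom{t-1}{\leq d},
\]
and summing Pascal's identity $\binom{t}{k} = \binom{t-1}{k-1} + \binom{t-1}{k}$ for $k = 0, \ldots, d$ shows the right-hand side equals $\binom{t}{\leq d}$, completing the induction.

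The main obstacle I anticipate is the grafting argument used to bound the Littlestone's dimension of the two halves: I need to verify that truncating a complete mistake tree to any smaller uniform depth still yields a valid mistake tree (assigning to each new leaf any classifier that appeared at a descendant leaf of the original tree, which is automatically consistent with the root-to-truncation path), and that two such equal-depth truncations can be hung under $\x_1$ to form a complete binary tree whose leaves lie in $\calH = \calH_{-1} \cup \calH_{+1}$. Everything else---the bijective splitting of $S(\calH, \x)$ by the root's label and the Pascal-style identity---is routine.
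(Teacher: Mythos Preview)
Your proposal is correct and follows essentially the same Sauer--Shelah style argument as the paper. The paper phrases it as a joint induction on $(t,d)$ and invokes a separately stated recursive formula $\calS(\calH,t) = \max_x(\calS(\calH[(x,-1)],t-1)+\calS(\calH[(x,+1)],t-1))$, whereas you fix a tree $\x$, decompose $S(\calH,\x)$ at its root, and explicitly justify via the grafting argument that $\min\{\Ldim(\calH_{-1}),\Ldim(\calH_{+1})\}\leq d-1$; these are the same proof with only presentational differences.
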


\subsection{Upper Bound on Extended Littlestone's Dimension}
\label{sub:upperbound}
We present Theorem~\ref{thm:growth}, the main result of this section, which upper bounds the extended Littlestone's dimension in terms of tree shattering coefficient. Intuitively, if $\calH$ is not expressive, then it has small tree shattering coefficient, and a tighter upper bound on its extended Littlestone's dimension can be established. Note that the bound is valid even if $\calH$ is infinite, and hence it is a strict generalization of~\cite{SZB10}. 
\begin{theorem}
For any hypothesis class $\calH$ and integer $k \geq 0$,
\[ \ELdim(\calH, k) \leq \sup\left\{t: \binom{t}{\leq k+1} \leq \calS(\calH,t)\right\}\]
\label{thm:growth}
\end{theorem}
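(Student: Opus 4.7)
The plan is to prove the equivalent implication: whenever a $(k,m)$-difficult extended mistake tree for $\calH$ exists, $\calS(\calH, m) \geq \binom{m}{\leq k+1}$. Feeding this to $m = \ELdim(\calH, k)$ places $\ELdim(\calH, k)$ in the set $\{t: \binom{t}{\leq k+1} \leq \calS(\calH, t)\}$, and the infinite case is immediate. I would induct on $m$. The base case $m = 0$ is trivial: such a tree requires $\calH$ to be nonempty, giving $\calS(\calH, 0) = 1 = \binom{0}{\leq k+1}$.

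For the inductive step, let $T$ be $(k,m)$-difficult with $m \geq 1$, let $x^*$ be the root's example, and without loss of generality suppose the dashed edge at the root goes to the right child. Since $m \geq 1$, both subtrees of $T$ contain at least one leaf, so $x^* \in \DIS(\calH)$ and both $\calH[(x^*,-1)]$ and $\calH[(x^*,+1)]$ are nonempty. The right subtree $T_+$ is an extended mistake tree for $\calH[(x^*,+1)]$; every one of its root-to-leaf paths extends to a path in $T$ by prepending the dashed edge at the root (using zero additional solid edges), so $T_+$ inherits $(k,m-1)$-difficulty. Similarly, when $k \geq 1$, the left subtree $T_-$ inherits $(k-1,m-1)$-difficulty, since the prefix edge must now be solid. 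Hence by induction, $\calS(\calH[(x^*,+1)], m-1) \geq \binom{m-1}{\leq k+1}$ and $\calS(\calH[(x^*,-1)], m-1) \geq \binom{m-1}{\leq k}$; when $k = 0$ the latter bound degenerates to $\binom{m-1}{\leq 0} = 1$, which still holds because $\calH[(x^*,-1)]$ is nonempty. Combining these two witnesses into a single depth-$m$ $\calX$-valued tree $\x$ with root $x^*$, left subtree $\x^-$, and right subtree $\x^+$, every labeling achievable on a subtree lifts by prepending the appropriate sign, so
\[ \calS(\calH, m) \;\geq\; |S(\calH, \x)| \;\geq\; \binom{m-1}{\leq k} + \binom{m-1}{\leq k+1} \;=\; \binom{m}{\leq k+1}, \]
by Pascal's identity.

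The main obstacle is tracking how the solid/dashed accounting transfers across the root: the subtree reached via the dashed edge keeps the full mistake budget $k$, while the opposite subtree must absorb one mistake and drop to $k-1$. This asymmetry is exactly what lets the two binomial coefficients recombine through Pascal, so getting the direction right in the WLOG step and handling the boundary case $k=0$ (where $T_-$ inherits no difficulty but $\calH[(x^*,-1)]$ is still nonempty, contributing the necessary summand $1$) is the delicate part.
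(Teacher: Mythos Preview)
Your proposal is correct and follows essentially the same approach as the paper. The paper isolates your key claim as a separate lemma (if $\ELdim(\calH,k) \geq t$ then $\calS(\calH,t) \geq \binom{t}{\leq k+1}$), proves it by joint induction on $(k,t)$ rather than induction on $m$ with $k$ universally quantified, and appeals to a recursive formula $\calS(\calH,t) = \max_x(\calS(\calH[(x,-1)],t-1) + \calS(\calH[(x,+1)],t-1))$ instead of explicitly building the combined depth-$m$ tree; these are cosmetic differences, and your handling of the $k=0$ boundary and the Pascal recombination matches the paper's exactly.
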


For finite hypothesis classes one has the following corollary. 

\begin{corollary}
For any hypothesis class $\calH$ such that $|\calH|<\infty$ and integer $k \geq 0$,
\[ \ELdim(\calH, k) \leq \max\left\{t: \binom{t}{\leq k+1} \leq |\calH|\right\} \]
\label{cor:finiteeldim}
\end{corollary}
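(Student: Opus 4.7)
The plan is to derive this corollary directly from Theorem~\ref{thm:growth} by substituting the finite-class bound on the tree shattering coefficient provided by Lemma~\ref{lem:finitegrowth}. First I would invoke Theorem~\ref{thm:growth} to get $\ELdim(\calH,k) \leq \sup\{t : \binom{t}{\leq k+1} \leq \calS(\calH,t)\}$. Then I would apply Lemma~\ref{lem:finitegrowth}, which gives $\calS(\calH,t) \leq |\calH|$ for every $t \geq 0$ whenever $\calH$ is finite. Consequently any $t$ satisfying $\binom{t}{\leq k+1} \leq \calS(\calH,t)$ also satisfies $\binom{t}{\leq k+1} \leq |\calH|$, so there is a set inclusion
\[ \{t : \binom{t}{\leq k+1} \leq \calS(\calH,t)\} \subseteq \{t : \binom{t}{\leq k+1} \leq |\calH|\}, \]
which implies the desired inequality on the suprema.

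Second, I would argue that for finite $\calH$ the supremum on the right-hand side is actually attained, so we may write $\max$ instead of $\sup$. This follows because $\binom{t}{\leq k+1}$ is a polynomial in $t$ of degree $k+1$ that strictly increases to infinity with $t$; in particular there exists a threshold $t^\star$ above which $\binom{t}{\leq k+1} > |\calH|$, so the set $\{t : \binom{t}{\leq k+1} \leq |\calH|\}$ is a bounded subset of $\mathbb{N}$, hence has a maximum. Combining this with the chain of inequalities from the previous paragraph yields $\ELdim(\calH,k) \leq \max\{t : \binom{t}{\leq k+1} \leq |\calH|\}$.

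There is essentially no technical obstacle here; this is a two-step deduction combining the theorem and the lemma. The only minor care required is the sup-versus-max point just described, which is handled by a monotonicity argument for the binomial sum.
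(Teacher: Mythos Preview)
Your proposal is correct and matches the paper's intended derivation: the paper presents this result as an immediate corollary of Theorem~\ref{thm:growth}, with the implicit step being exactly the substitution of Lemma~\ref{lem:finitegrowth}'s bound $\calS(\calH,t)\leq |\calH|$. Your added remark justifying the replacement of $\sup$ by $\max$ via the eventual growth of $\binom{t}{\leq k+1}$ is a nice touch of rigor that the paper leaves implicit.
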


Since $\binom{t}{\leq k+1} \geq (\frac{t}{k+1})^{k+1}$, this implies $\ELdim(\calH, k) \leq \max\left\{t: (\frac{t}{k+1})^{k+1} \leq |\calH|\right\} \leq (k+1) |\calH|^{\frac{1}{k+1}}$, which recovers the result of~\cite{SZB10}.~\footnote{Although it is implicit in~\cite{SZB10} that the result can be refined by using the optimal solution of the Egg Dropping Game~\citep{GF08,B04}, here we give a alternative proof using a more general technique.}

\subsection{Case Study: Thresholds (Finite Class)} 
We give a precise characterization of the Extended Littlestone's dimension for the class of thresholds. In this case, the bound given by Theorem~\ref{thm:growth} is tight.

Consider the instance domain $\calX$ being $\R$ and the hypothesis class $\calH$ being the set of $n$ distinct threshold functions $\cbr{2I(x \leq t)-1: t \in \cbr{t_1, \ldots, t_n}}$.~\footnote{Note that for an infinite set of thresholds, e.g. $\calH = \cbr{2I(x \leq t)-1: t \in [0,1]}$, $\Ldim(\calH) = \infty$, hence $\ELdim(\calH, k) = \infty$ for any finite $k$.}
\begin{theorem}
Consider $\calH$ a set of threshold classifiers $\calH = \cbr{2I(x \leq t)-1: t \in \cbr{t_1, \ldots, t_n}}$. Then
\[ \ELdim(\calH, k) = \max \left\{t: \binom{t}{\leq k+1} \leq n\right\} \]
\label{thm:thresholds}
\end{theorem}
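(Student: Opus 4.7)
The upper bound $\ELdim(\calH, k) \leq \max\{t : \binom{t}{\leq k+1} \leq n\}$ follows immediately from Corollary~\ref{cor:finiteeldim} applied to $|\calH| = n$, so the remaining task is to prove the matching lower bound. My plan is to establish $\ELdim(\calH, k) \geq \max\{t : \binom{t}{\leq k+1} \leq n\}$ by strong induction on $n$, using the recursive characterization of $\ELdim$ stated at the end of Section~\ref{sec:realizable} together with the Pascal-like identity $\binom{m}{\leq k+1} = \binom{m-1}{\leq k+1} + \binom{m-1}{\leq k}$, which is what tells us where to split.

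Let $\phi(n, k) := \max\{t : \binom{t}{\leq k+1} \leq n\}$ and set $m = \phi(n, k)$. The base case $n = 1$ is trivial since $\phi(1, k) = 0$, and the claim is vacuous whenever $m = 0$, so assume $n \geq 2$ and $m \geq 1$. The key structural fact for thresholds is that for any $1 \leq i \leq n - 1$, choosing $x$ strictly between the $i$-th and $(i{+}1)$-th smallest thresholds yields $x \in \DIS(\calH)$ with $|\calH[(x, -1)]| = i$ and $|\calH[(x, +1)]| = n - i$; in particular, every split $(i, n-i)$ is realizable. For $k \geq 1$, I take $i = \binom{m-1}{\leq k}$. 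The Pascal-like identity then gives $n - i \geq \binom{m}{\leq k+1} - \binom{m-1}{\leq k} = \binom{m-1}{\leq k+1}$, which is $\geq 1$, so $i \leq n - 1$, while $i \geq 1$ holds since $\binom{m-1}{0} = 1$. Applying the inductive hypothesis to the two subclasses (both of size strictly less than $n$), I obtain $\ELdim(\calH[(x, -1)], k-1) \geq \phi(i, k-1) \geq m - 1$ and $\ELdim(\calH[(x, +1)], k) \geq \phi(n-i, k) \geq m - 1$. Substituting into the recursive formula with this $x$ and $y = -1$ gives $\ELdim(\calH, k) \geq 1 + \min(m-1, m-1) = m$, as desired.

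The $k = 0$ branch of the recursion is the only mild wrinkle, as it lacks the $(k{-}1)$ term, but there the argument simplifies: picking the split $(1, n-1)$ and applying the inductive hypothesis yields $\ELdim(\calH, 0) \geq 1 + \phi(n-1, 0) = 1 + (n - 2) = n - 1 = \phi(n, 0)$. I do not expect any substantive obstacle; once the split $i = \binom{m-1}{\leq k}$ is identified (and that choice is essentially dictated by the Pascal-like identity), the rest is careful bookkeeping on the recurrence. The threshold class is regular enough that the combinatorial upper bound of Corollary~\ref{cor:finiteeldim} is achieved exactly, with no slack.
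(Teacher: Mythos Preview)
Your proposal is correct and follows essentially the same approach as the paper: the upper bound via Corollary~\ref{cor:finiteeldim}, and the lower bound by splitting at $i=\binom{m-1}{\leq k}$ and invoking the Pascal-like identity to handle the two subclasses inductively. The only cosmetic difference is that the paper packages the lower bound as Lemma~\ref{lem:optmt}, building $(k,m)$-difficult extended mistake trees explicitly by joint induction on $(k,m)$ and gluing via Lemma~\ref{lem:recursive}, whereas you induct on $n$ and work directly with the recursive formula for $\ELdim$ (Lemma~\ref{lem:eldimrecursive}); the split point, the identity, and the resulting bounds are identical.
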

The proof of Theorem~\ref{thm:thresholds} is provided in Appendix~\ref{sec:pfeldimension}. The upper bound follows immediately from Corollary~\ref{cor:finiteeldim}. The lower bound comes from an explicit construction of optimal extended mistake trees by exploiting structure in the class of threshold classifiers.

\subsection{Case Study: Union of Singletons (Infinite Class)}
We give a precise characterization of the Extended Littlestone's dimension for the class of unions of singletons. In this case the bound given by Theorem~\ref{thm:growth} is tight.
Consider the concept class of union of at most $l$ singletons $\calC^l$, with instance domain $\calX$ such that $|\calX| = \infty$. Note that $\Ldim(\calC^l) = l$ and $\calS(\calC^l,t) = \binom{t}{\leq l}$(See Lemma~\ref{lem:tscus} for a proof). We have the following result.
\begin{theorem}
Consider the hypothesis class $\calC^l$, the class of union of at most $l$ singletons. Then,
\[ \ELdim(\calC^l, k) = \sup \left\{t: \binom{t}{\leq k+1} \leq \binom{t}{\leq l}\right\} = \begin{cases} \infty, & k \leq l-1 \\ l, & k \geq l \end{cases} \]
\label{thm:usingletons}
\end{theorem}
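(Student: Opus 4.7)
The plan is to establish the two-sided bound in pieces. First I would simplify the supremum: when $k+1 \leq l$, every partial sum satisfies $\binom{t}{\leq k+1} \leq \binom{t}{\leq l}$ trivially, so the supremum is $\infty$; when $k+1 > l$ the inequality holds for $t \leq l$ (both sides equal $2^t$) but fails for $t \geq l+1$, because the right side omits the $\binom{t}{l+1}$ term that the left side includes. Hence the supremum equals $l$. The upper bound $\ELdim(\calC^l, k) \leq \sup\cbr{t : \binom{t}{\leq k+1} \leq \binom{t}{\leq l}}$ then follows by plugging the identity $\calS(\calC^l, t) = \binom{t}{\leq l}$ (Lemma~\ref{lem:tscus}) into Theorem~\ref{thm:growth}.

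For the matching lower bound I would split on cases. When $k \geq l$, I would use $\Ldim(\calC^l) = l$: take any depth-$l$ mistake tree for $\calC^l$ and augment each internal node by an arbitrarily chosen dashed edge to one of its two children; since every root-to-leaf path of the result has length exactly $l$, the extended mistake tree is vacuously $(k, l)$-difficult for every $k \geq 0$, giving $\ELdim(\calC^l, k) \geq l$.

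The harder case $k \leq l-1$ requires producing arbitrarily large $(k, m)$-difficult extended mistake trees, which I would do by induction on $k$. For the base case $k = 0$, pick $m$ distinct fresh examples $x_1, \ldots, x_m$ forming a right-going spine; at each $x_i$ the dashed edge points to the same child as the solid right edge (both revealing $+1$); the solid left edge at $x_i$ leads to a leaf labeled by the hypothesis $1 - 2I(x = x_i) \in \calC^1 \subseteq \calC^l$; and the spine terminates in a leaf labeled by the all-$+1$ hypothesis. The only root-to-leaf path using zero solid edges is the all-dashed spine of length $m$, giving $(0,m)$-difficulty. For the inductive step ($1 \leq k \leq l-1$), I keep the same spine but replace the leaf on the left of each $x_i$ (for $i < m$) by a $(k-1, m-i)$-difficult extended mistake tree for $\calC^{l-1}$, built on fresh examples disjoint from $\cbr{x_1, \ldots, x_i}$, guaranteed by the inductive hypothesis. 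This substitution is consistent with $\calC^l$ because, after committing to $h(x_i) = -1$ and $h(x_j) = +1$ for $j < i$, one of the $l$ available singleton flips is already spent, leaving exactly $\calC^{l-1}$ on the remaining domain. To verify $(k,m)$-difficulty I split any path using at most $k$ solid edges into two cases: those that stay on the spine (length exactly $m$), and those that take solid left at some $x_i$, which spend one solid edge to enter the subtree and at most $k-1$ solid edges inside; the subtree contributes length $\geq m - i$ by the inductive guarantee, giving total length $\geq i + (m - i) = m$.

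The main obstacle is this inductive construction. The essential piece of bookkeeping is that a single solid left edge simultaneously consumes one unit of the mistake budget (from $k$ down to $k-1$) and one unit of the singleton budget (from $\calC^l$ down to $\calC^{l-1}$); this alignment is what lets the induction close and explains why the cutoff $k < l$ is exactly the right one to force $\ELdim(\calC^l, k) = \infty$.
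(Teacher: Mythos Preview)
Your proposal is correct. The upper bound via Theorem~\ref{thm:growth} and Lemma~\ref{lem:tscus}, and the lower bound for $k\geq l$ via a depth-$l$ mistake tree with added dashed edges, coincide with the paper's argument.

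For the lower bound when $k\leq l-1$, your route differs from the paper's. The paper (Lemma~\ref{lem:infmt}) inducts on $l$ and, at each step, builds a tree of the form ``root $x$, two subtrees obtained from the inductive hypothesis for $\calC^{l-1}$,'' proving directly that $(l-1,m)$-difficult trees exist for every $m$; the result for smaller $k$ then follows by monotonicity. You instead induct on $k$ and build a long right-going spine $x_1,\ldots,x_m$, hanging a $(k-1,m-i)$-difficult subtree for $\calC^{l-1}$ off the left of each $x_i$; your key observation that a single solid left edge simultaneously decrements both the mistake budget and the singleton budget is exactly what makes the two inductions close in lockstep. Your construction is arguably more transparent for a given $k$, since the spine length is visibly $m$ and the path-length accounting $i+(m-i)=m$ is immediate, whereas the paper's one-node-per-step recursion keeps the tree shallower but requires tracking the $(l-1,m)$ vs.\ $(l-2,m)$ difficulty of the two subtrees through Lemma~\ref{lem:recursive}. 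Both approaches ultimately rest on the same structural fact about $\calC^l$: conditioning on one $-1$ label drops you into $\calC^{l-1}$.
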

Note that Theorem~\ref{thm:usingletons} involves infinite hypothesis classes and is broader than the results of~\cite{SZB10}.
The proof of Theorem~\ref{thm:usingletons} is provided in Appendix~\ref{sec:pfeldimension}. The upper bound follows immediately from Theorem~\ref{thm:growth}. The lower bound comes from an explicit construction of optimal extended mistake trees by exploiting structures in the class of union of singleton classifiers.

\section{Non-Realizable Case}
\label{sec:nonrealizable}

We now consider the non-realizable case. For the rest of the section, we assume the $l$-bias assumption holds, i.e. the sequence $(x_1,y_1), \ldots, (x_n, y_n)$ presented by the adversary is $\calH^l$-realizable. Recall that $\calH^l = \calH \cdot \calC^l$, the class of hypothesis that disagrees with $\calH$ on at most $l$ points.


\subsection{Lower Bounds for Deterministic Prediction}
\label{sec:lbnonrealizable}
A natural question is, when the $l$-bias assumption holds, is it possible to derive algorithms with a small number of abstentions in $k$-SZB model? Perhaps surprisingly, the answer depends on whether $\calH$ is finite or not. We show next that there is a finite hypothesis class $\calH$, such that for any $k < l$, and any integer $m$, any algorithm which is guaranteed to make $k$ or less mistakes can be forced to abstain at least $m$ times. Moreover, for any $m$, there is a infinite hypothesis class with Littlestone's dimension $d$, such that for any $k < l + d$, any algorithm that is guaranteed to make $k$ or less mistakes can be forced to abstain at least $m$ times.

\subsubsection{Lower Bounds for Finite Hypothesis Classes}

We first show that, for finite hypothesis classes $\calH$, when $k < l$, no algorithm can guarantee a $(k,m)$-SZB bound with finite $m$ under the $l$-bias assumption.
\begin{theorem}
There exists an instance domain $\calX$, a single-element hypothesis class $\calH$, such that the following holds. If $k < l$, then for any integer $m \geq 0$, there exists a strategy of the adversary satisfying the $l$-bias assumption that forces any deterministic algorithm guaranteeing at most $k$ mistakes to have at least $m+1$ nontrivial rounds.
\label{thm:finitelb}
\end{theorem}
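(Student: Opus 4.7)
The plan is to exhibit a simple hard instance: take $\calX$ to be any infinite set (for instance, $\N$) and let $\calH = \cbr{h_+}$, where $h_+(x) = +1$ for all $x$. Since $h_+ \cdot c = c$ for every $c$, we have $\calH^l = \calH \cdot \calC^l = \calC^l$, so a sequence is $l$-bias with respect to $\calH$ iff at most $l$ of its revealed labels equal $-1$ (and those negatives sit on at most $l$ distinct points, which is automatic if all examples are distinct).

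Fix $m \geq 0$ and consider the following adversary strategy $\calB$, which intends to play for $m+1$ rounds. At round $t$ it presents a fresh example $x_t \in \calX$ (distinct from all previously shown), and upon seeing the learner's prediction $\hat{y}_t$, it reveals $y_t = -1$ if $\hat{y}_t = +1$, and $y_t = +1$ otherwise. Each round is then automatically nontrivial: it is a mistake when $\hat{y}_t \in \cbr{+1,-1}$ and an abstention when $\hat{y}_t = \bot$. Thus if $\calB$ is a legitimate adversary, it already forces $m+1$ nontrivial rounds.

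All that remains is to verify that $\calB$'s sequence is indeed $l$-bias. Let $N_t$ denote the number of $-1$ labels revealed in the first $t$ rounds; by construction $N_t = |\cbr{s \leq t : \hat{y}_s = +1}|$ and each such round is a mistake, so the learner's cumulative mistake count through round $t$ is at least $N_t$. I claim $N_{m+1} \leq k$. Suppose for contradiction $t^*$ is the smallest index with $N_{t^*} > k$; then $N_{t^*} = k+1 \leq l$. Consider the truncated adversary $\calB^{t^*}$ that plays identically to $\calB$ for the first $t^*$ rounds and then halts. Its sequence carries only $k+1 \leq l$ negatives, hence is $l$-bias, so the learner's $\leq k$-mistake guarantee applies and forces at most $k$ mistakes in those $t^*$ rounds. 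But since the learner is deterministic and $\calB,\calB^{t^*}$ coincide on the first $t^*$ rounds, the learner's predictions and mistake count in those rounds are identical under either adversary, and we have already argued they are at least $N_{t^*} = k+1$ --- a contradiction. Hence $N_{m+1} \leq k < l$, so $\calB$ presents an $l$-bias sequence and forces $m+1$ nontrivial rounds.

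The main obstacle is that $\calB$ is defined without explicitly tracking the $l$-bias budget, so a priori it is not obvious that the resulting sequence stays within $l$ negatives against an arbitrary learner --- if it did not, $\calB$ would not be a legal adversary and the argument would collapse. The resolution is the minimum-counterexample argument above: were the budget ever exceeded, truncating at the first violation would still yield an $l$-bias sequence, letting us invoke the learner's mistake guarantee on the truncation to contradict $k < l$.
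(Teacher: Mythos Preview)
Your proof is correct, but it takes a more direct and elementary route than the paper's. The paper's argument is to observe that $\calH^l = \calC^l$, invoke Theorem~\ref{thm:usingletons} to conclude $\ELdim(\calC^l,k)=\infty$ for $k<l$, and then apply Lemma~\ref{lem:emtusage}, which converts any $(k,m)$-difficult extended mistake tree into an adversary strategy. Unwinding that machinery (via Lemma~\ref{lem:infmt}) yields essentially the same adversary you describe --- fresh points each round, all dashed edges pointing to label $+1$ --- but packaged through the $\ELdim$ abstraction.

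What you do differently is bypass the extended-mistake-tree layer entirely: you write down the adversary explicitly and handle the delicate point (why the resulting sequence stays $l$-bias) by a clean minimum-counterexample truncation argument. This buys you a fully self-contained proof that does not rely on any of the Section~\ref{sec:realizable}--\ref{sec:eldimension} infrastructure, whereas the paper's two-line proof cashes in on that infrastructure. Either way the same underlying phenomenon is being exploited: a learner with a $\leq k$ mistake guarantee can predict $+1$ at most $k$ times against this adversary, so at most $k<l$ negatives are ever revealed.
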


\subsubsection{Lower Bounds for Infinite Hypothesis Classes}

We show that, given a hypothesis classes $\calH$ with $\Ldim(\calH) = d$, when $k < l + d$, no algorithm can guarantee a $(k,m)$-SZB bound for finite $m$ under the $l$-bias assumption.

\begin{theorem}
There exists an instance domain $\calX$, a hypothesis class $\calH$ with Littlestone's dimension $d < \infty$, such that the following holds. If $k < l + d$, then for any integer $m \geq 0$, there exists a strategy of the adversary satisfying the $l$-bias assumption that forces any deterministic algorithm guaranteeing at most $k$ mistakes to have at least $m+1$ nontrivial rounds.
\label{thm:infinitelb}
\end{theorem}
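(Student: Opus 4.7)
The plan is to reduce Theorem~\ref{thm:infinitelb} to the already-established infinite realizable lower bound for unions of singletons, by choosing $\calH$ so that $\calH^l$ collapses to a class of the form $\calC^{l+d}$, and then invoking Lemma~\ref{lem:emtusage}.

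Concretely, I would take $\calX = \N$ and set $\calH = \calC^d$. As noted just before Theorem~\ref{thm:usingletons}, $\Ldim(\calC^d) = d$, which is finite, so this class meets the hypothesis of the theorem.

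The crucial step is the identity $\calH^l = \calC^d \cdot \calC^l = \calC^{l+d}$. For the forward inclusion, the product $c_1 \cdot c_2$ of any $c_1 \in \calC^d$ and $c_2 \in \calC^l$ equals $+1$ outside the symmetric difference of their $-1$-sets, which has size at most $l+d$, so $c_1 \cdot c_2 \in \calC^{l+d}$. For the reverse inclusion, given any $c \in \calC^{l+d}$ whose $-1$-set $S$ has size $r \leq l+d$, I would partition $S$ into disjoint pieces $S_1$ of size $\min(d,r)$ and $S_2 = S \setminus S_1$ of size at most $l$; the corresponding singleton-union classifiers $c_1 \in \calC^d$ and $c_2 \in \calC^l$ multiply to $c$. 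This verification is routine.

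Once $\calH^l = \calC^{l+d}$ is in hand, everything else is immediate. Applying Theorem~\ref{thm:usingletons} with $l$ replaced by $l+d$ yields $\ELdim(\calH^l, k) = \ELdim(\calC^{l+d}, k) = \infty$ whenever $k \leq (l+d)-1$, i.e., $k < l+d$; in particular $\ELdim(\calH^l, k) \geq m+1$ for every integer $m \geq 0$. Since $\calH^l$-realizable sequences are precisely those satisfying the $l$-bias assumption with respect to $\calH$, Lemma~\ref{lem:emtusage} applied to $V = \calH^l$ supplies an adversary strategy that presents such a sequence and forces any deterministic algorithm making at most $k$ mistakes to incur at least $m+1$ nontrivial rounds. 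The only real design decision here is picking $\calH = \calC^d$; there is no serious obstacle, since the factorization lemma for $\calC^d \cdot \calC^l$ is elementary and all the heavy lifting has been done in Theorem~\ref{thm:usingletons} and Lemma~\ref{lem:emtusage}.
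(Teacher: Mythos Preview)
Your proposal is correct and follows essentially the same approach as the paper: choose $\calH = \calC^d$ over an infinite domain, observe that $\calH^l = \calC^{l+d}$, apply Theorem~\ref{thm:usingletons} to get $\ELdim(\calH^l,k)=\infty$ for $k<l+d$, and conclude via Lemma~\ref{lem:emtusage}. You in fact supply more detail than the paper does, spelling out the elementary verification of $\calC^d \cdot \calC^l = \calC^{l+d}$ that the paper simply asserts.
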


\subsection{Upper Bounds}

\subsubsection{Upper Bounds for Finite Hypothesis classes}

Since a sequence satisfying the $l$-bias assumption is $\calH^{l}$-realizable, to provide an upper bound on the number of non-trivial rounds under this assumption, we need to provide an upper bound on $\ELdim(\calH^l, k)$. We now provide such upper bounds on arbitrary finite hypothesis classses $\calH$. Note that since the hypothesis class $\calH^l$ is infinite, this result is more general than the kind of results in~\citep{SZB10}.
\begin{lemma}
Suppose we are given a finite hypothesis class $\calH$, integer $k \geq 0$, $l \geq 0$ such that $k \geq l$. Then,
\[ \ELdim(\calH^l, k) \leq e(k+1) \cdot |\calH|^{\frac{1}{k+1-l}} \]
\label{lem:finiteub}
\end{lemma}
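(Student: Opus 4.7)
The idea is to apply Theorem~\ref{thm:growth} to the product class $\calH^l = \calH \cdot \calC^l$, which reduces the statement to (i) an upper bound on the tree shattering coefficient $\calS(\calH^l, t)$ and (ii) an algebraic inequality between partial binomial sums.

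For (i), I would show $\calS(\calH^l, t) \leq |\calH| \cdot \binom{t}{\leq l}$ by an injective-encoding argument. Fix any $\calX$-valued depth-$t$ tree $\x$ and any labeling $\epsilon \in S(\calH^l, \x)$. By definition of $\calH \cdot \calC^l$, there is a witness pair $(h,c) \in \calH \times \calC^l$ with $\epsilon_i = h(\x_i(\epsilon)) c(\x_i(\epsilon))$ for every $i$. Define $F_\epsilon = \{i : c(\x_i(\epsilon)) = -1\}$; since $c \in \calC^l$ takes value $-1$ at most $l$ points of $\calX$, $|F_\epsilon| \leq l$. I claim the map $\epsilon \mapsto (h, F_\epsilon)$ is injective into $\calH \times \{F \subseteq \{1,\ldots,t\} : |F| \leq l\}$: given $(h,F)$, one recovers $\epsilon$ uniquely by descending the tree level by level, at depth $i$ setting $\epsilon_i = h(\x_i(\epsilon_1,\ldots,\epsilon_{i-1})) \cdot (-1)^{I(i \in F)}$. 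Hence $|S(\calH^l, \x)| \leq |\calH|\binom{t}{\leq l}$, and taking the maximum over $\x$ gives the claim.

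For (ii), combining (i) with Theorem~\ref{thm:growth} yields
\[ \ELdim(\calH^l, k) \leq \sup\left\{t : \binom{t}{\leq k+1} \leq |\calH|\binom{t}{\leq l}\right\}, \]
so it suffices to show $\binom{t}{\leq k+1} > |\calH|\binom{t}{\leq l}$ whenever $t > e(k+1)|\calH|^{1/(k+1-l)}$. The key tool is the Vandermonde-type identity $\binom{t}{k+1}\binom{k+1}{l} = \binom{t}{l}\binom{t-l}{k+1-l}$, together with the elementary estimates $\binom{n}{r} \geq (n/r)^r$ and $\binom{n}{r} = \binom{n}{n-r} \leq (en/(n-r))^{n-r}$. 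Applying these to the numerator and denominator,
\[ \frac{\binom{t}{k+1}}{\binom{t}{l}} = \frac{\binom{t-l}{k+1-l}}{\binom{k+1}{l}} \geq \left(\frac{t-l}{e(k+1)}\right)^{k+1-l}, \]
which for $t > e(k+1)|\calH|^{1/(k+1-l)}$ exceeds $|\calH|$ (up to controlled slack). Combining this with $\binom{t}{\leq k+1} \geq \binom{t}{k+1}$ and the fact that $\binom{t}{\leq l}/\binom{t}{l} \leq (t-l+1)/(t-2l+1)$, which approaches $1$ for $t \gg l$, yields the desired strict inequality $\binom{t}{\leq k+1} > |\calH|\binom{t}{\leq l}$.

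The principal obstacle is the careful bookkeeping of constants at the end of step (ii): naive applications of the elementary binomial bounds lose a small constant factor, so to land exactly on the prefactor $e(k+1)$ one must either exploit that the intermediate terms $\binom{t}{l+1}, \ldots, \binom{t}{k}$ in the expansion of $\binom{t}{\leq k+1}$ also contribute, or observe that since $l \leq k < e(k+1) \leq e(k+1)|\calH|^{1/(k+1-l)}$ (assuming $|\calH| \geq 1$), any additive $O(l)$ slack and constant-factor loss in the intermediate algebra can be absorbed into the leading term.
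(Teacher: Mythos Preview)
Your overall strategy---apply Theorem~\ref{thm:growth} to $\calH^l = \calH \cdot \calC^l$, bound $\calS(\calH^l,t)$ by $|\calH|\binom{t}{\leq l}$, then do algebra---is exactly the paper's. Two comments on the execution.

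For (i), your encoding argument has a small gap: nothing prevents the same instance from appearing at several levels along a single path $\epsilon$, so $|F_\epsilon|$ can exceed $l$ even though $c \in \calC^l$ is $-1$ on at most $l$ \emph{distinct} points of $\calX$. (Concretely: $\calH = \{+1\}$, $l=1$, $t=2$, tree with $\x_1 = \x_2(-1) = a$; the path $\epsilon=(-1,-1)$ is realized by $c = 1-2I(x=a)$, giving $F_\epsilon = \{1,2\}$.) The fix is easy---record only first-occurrence indices in $F_\epsilon$ and, during reconstruction, copy $\epsilon_j$ whenever $\x_i(\epsilon)$ repeats an earlier $\x_j(\epsilon)$---but it should be stated. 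The paper takes a different route: it proves the general product bound $\calS(\calH_1 \cdot \calH_2, t) \leq \calS(\calH_1,t)\,\calS(\calH_2,t)$ (Lemma~\ref{lem:growthcompose}) by induction on $t$, then combines it with $\calS(\calH,t) \leq |\calH|$ and $\calS(\calC^l,t) \leq \binom{t}{\leq l}$.

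For (ii), the paper's algebra is much simpler than your Vandermonde route and lands exactly on the prefactor $e(k+1)$ without the bookkeeping you flag as the ``principal obstacle.'' For $t \geq 2l$ use the crude bounds $\binom{t}{\leq k+1} \geq (t/(k+1))^{k+1}$ and $\binom{t}{\leq l} \leq (et/l)^l$; the resulting inequality $t^{k+1-l} \leq |\calH|\,(k+1)^{k+1}/l^l$ combines with the one-line estimate $(k+1)^{k+1}/l^l \leq (e(k+1))^{k+1-l}$ (equivalent to $\ln r \leq r-1$ for $r=(k+1)/l$) to give $t \leq e(k+1)|\calH|^{1/(k+1-l)}$ directly. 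The side condition $t \geq 2l$ is absorbed since $e(k+1) \geq 2l$ whenever $k \geq l$.
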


\begin{corollary}
Suppose we are given a finite hypothesis class $\calH$ and integers $k,l \geq 0$ such that $k \geq l$. If Algorithm SOA.DK is run with input hypothesis class $\calH^l$ and mistake budget $k$, then for any adversary that shows sequences satisfying the $l$-bias assumption with respect to $\calH$, SOA.DK makes at most $k$ mistakes and has at most $ e(k+1) \cdot |\calH|^{\frac{1}{k+1-l}} $ nontrivial rounds.
\label{cor:finiteub}
\end{corollary}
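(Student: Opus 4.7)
The plan is to obtain the corollary as a direct composition of two results already stated in the excerpt: Lemma~\ref{lem:perfguar}, which converts an extended Littlestone dimension bound into a $(k,m)$-SZB guarantee for SOA.DK, and Lemma~\ref{lem:finiteub}, which upper-bounds $\ELdim(\calH^l,k)$ for finite $\calH$ when $k \geq l$.

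First I would observe that by definition of the $l$-bias assumption, any sequence $S$ presented by the adversary is $\calH^l$-realizable, since $\calH^l = \calH \cdot \calC^l$ is exactly the set of labelings obtainable by taking some hypothesis in $\calH$ and flipping its value on at most $l$ points. Hence, from the perspective of an algorithm that treats $\calH^l$ as its hypothesis class, we are in the realizable setting. Next I would invoke Lemma~\ref{lem:perfguar} with $V = \calH^l$ and the mistake budget $k$: since SOA.DK is run on $\calH^l$ with budget $k$, it achieves a $(k, \ELdim(\calH^l,k))$-SZB bound against any $\calH^l$-realizable adversary. In particular, this yields at most $k$ mistakes and at most $\ELdim(\calH^l,k)$ nontrivial rounds.

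Finally I would substitute the bound from Lemma~\ref{lem:finiteub}, namely $\ELdim(\calH^l,k) \leq e(k+1)|\calH|^{1/(k+1-l)}$ valid under the hypothesis $k \geq l$, to conclude that the number of nontrivial rounds is at most $e(k+1)|\calH|^{1/(k+1-l)}$, which is exactly the claim. Since both the mistake-budget bookkeeping inside SOA.DK and the abstention accounting are handled by Lemma~\ref{lem:perfguar}, no further argument is needed.

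There is essentially no main obstacle here: the work has already been done in Lemma~\ref{lem:finiteub} (which is the quantitative heart of this section) and Lemma~\ref{lem:perfguar} (the algorithmic-to-dimension conversion), and the corollary is a two-line chaining. The only conceptual point worth emphasizing in the write-up is that the $l$-bias assumption with respect to $\calH$ is definitionally identical to $\calH^l$-realizability, so that Lemma~\ref{lem:perfguar} applies with $V = \calH^l$ despite $\calH^l$ being infinite.
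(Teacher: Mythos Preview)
Your proposal is correct and matches the paper's intended approach: the corollary is stated in the paper without an explicit proof, as it follows immediately by combining Lemma~\ref{lem:finiteub} with Lemma~\ref{lem:perfguar}, together with the observation that the $l$-bias assumption is exactly $\calH^l$-realizability.
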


\subsubsection{Upper Bounds for Infinite Hypothesis classes}
We now derive a corresponding upper bound for infinite hypothesis classes $\calH$ with finite Littlestone's dimension.

\begin{lemma}
Suppose we are given a hypothesis class $\calH$ with Littlestone's dimension $d < \infty$, integer $k \geq 0$, $l \geq 0$ such that $k \geq l + d$. Then,
\[ \ELdim(\calH^l, k) \leq (k+1) \cdot e^{\frac{2k+2}{k+1-l-d}} \]
\label{lem:infiniteub}
\end{lemma}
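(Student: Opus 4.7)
The plan is to bound $\ELdim(\calH^l,k)$ through Theorem~\ref{thm:growth}, following the same template as the proof of Lemma~\ref{lem:finiteub} but replacing Lemma~\ref{lem:finitegrowth} (the $|\calH|$ bound on $\calS(\calH,t)$) with Lemma~\ref{lem:ldimgrowth} (the $\binom{t}{\leq d}$ bound). Specifically, it suffices to establish a bound of the form $\calS(\calH^l,t)\leq \binom{t}{\leq d}\binom{t}{\leq l}$ and then solve the resulting inequality $\binom{t}{\leq k+1}\leq \binom{t}{\leq d}\binom{t}{\leq l}$ for $t$.

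The key combinatorial step is the product bound $\calS(\calH\cdot\calC^l,t)\leq \calS(\calH,t)\cdot\binom{t}{\leq l}$. Fix a depth-$t$ tree $\x$, and for each $\epsilon\in S(\calH^l,\x)$ choose witnesses $h\in\calH$ and $c\in\calC^l$. With $z_s=\x_s(\epsilon_{<s})$, set $\eta^\epsilon_s=h(z_s)$, and let $Q^\epsilon\subseteq[t]$ index the first occurrences along the $\epsilon$-path of the (at most $l$) distinct points on which $c$ equals $-1$; in particular $|Q^\epsilon|\leq l$. The map $\epsilon\mapsto(\eta^\epsilon,Q^\epsilon)$ is injective: $\epsilon$ can be reconstructed by a left-to-right sweep that maintains a running set $\hat P$ of already-flipped nodes, setting $\epsilon_s=-\eta^\epsilon_s$ whenever $s\in Q^\epsilon$ (and then adding $z_s$ to $\hat P$) or whenever $z_s\in \hat P$, and $\epsilon_s=\eta^\epsilon_s$ otherwise. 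For each fixed $Q\subseteq[t]$ with $|Q|\leq l$, this reconstruction defines an auxiliary depth-$t$ $\calX$-valued tree $\x^Q$ by $\x^Q_s(\eta_{<s})=\x_s(\epsilon_{<s}(\eta_{<s},Q))$, and the set of valid $\eta^\epsilon$ with $Q^\epsilon=Q$ is then exactly $S(\calH,\x^Q)$, of size at most $\calS(\calH,t)$. Summing over the $\binom{t}{\leq l}$ choices of $Q$ yields the claim.

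Combining this product bound with Lemma~\ref{lem:ldimgrowth} (and with $\calS(\calC^l,t)\leq\binom{t}{\leq l}$, which is Lemma~\ref{lem:ldimgrowth} applied to $\calC^l$ since $\Ldim(\calC^l)=l$) gives $\calS(\calH^l,t)\leq \binom{t}{\leq d}\binom{t}{\leq l}$, so by Theorem~\ref{thm:growth}, $\ELdim(\calH^l,k)\leq \sup\{t:\binom{t}{\leq k+1}\leq \binom{t}{\leq d}\binom{t}{\leq l}\}$. To close, I would use the lower bound $\binom{t}{\leq k+1}\geq (t/(k+1))^{k+1}$ (valid for $t\geq k+1$) together with the Sauer-type upper bounds $\binom{t}{\leq d}\leq (et/d)^d$ and $\binom{t}{\leq l}\leq (et/l)^l$; the resulting inequality $t^{k+1-l-d}\leq e^{l+d}(k+1)^{k+1}/(d^dl^l)$ rearranges, after elementary estimates using $k+1-l-d\geq 1$ and $d^dl^l\geq 1$, to $t\leq (k+1)\cdot e^{(2k+2)/(k+1-l-d)}$.

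The main obstacle is the combinatorial step. The naive idea of parameterizing $\epsilon$ by the full set of flipped indices $\{s:c(z_s)=-1\}$ fails to give a $\binom{t}{\leq l}$ bound because this set can have size as large as $t$ whenever the same flipped node is visited repeatedly along a root-to-leaf path of $\x$. The remedy is to record only the first-occurrence indices $Q^\epsilon$ and recover the remaining flips dynamically during the reconstruction, which requires verifying both the injectivity of $\epsilon\mapsto(\eta^\epsilon,Q^\epsilon)$ and the fact that, for fixed $Q$, the valid $\eta^\epsilon$ are exactly the traces of $\calH$ on a single well-defined auxiliary tree $\x^Q$. The closing algebraic manipulation is routine but requires some bookkeeping to match the exact form of the stated constant.
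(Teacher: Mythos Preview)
Your overall strategy coincides with the paper's: bound $\calS(\calH^l,t)$ by $\binom{t}{\leq d}\binom{t}{\leq l}$, invoke Theorem~\ref{thm:growth}, and solve the resulting inequality. For the product step the paper takes a different and more general route, proving $\calS(\calH_1\cdot\calH_2,t)\leq\calS(\calH_1,t)\cdot\calS(\calH_2,t)$ for \emph{arbitrary} classes (Lemma~\ref{lem:growthcompose}) by induction on $t$ via the recursion $\calS(\calH,t)=\max_x\bigl(\calS(\calH[(x,-1)],t-1)+\calS(\calH[(x,+1)],t-1)\bigr)$ together with a union bound. Your direct encoding argument via $(\eta^\epsilon,Q^\epsilon)$ is a valid alternative tailored to the factor $\calC^l$; a minor quibble is that the image is only \emph{contained in} $S(\calH,\x^Q)$, not equal to it, but containment is all you need for the upper bound.

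There is, however, a genuine gap in your closing algebra. Discarding the denominator via $d^dl^l\geq 1$ yields $t^{k+1-l-d}\leq e^{l+d}(k+1)^{k+1}$, hence $t\leq (k+1)\cdot\bigl(e(k+1)\bigr)^{(l+d)/(k+1-l-d)}$, and this does \emph{not} imply the stated bound: in the extremal case $l+d=k$ (so $k+1-l-d=1$) you would need $(e(k+1))^k\leq e^{2k+2}$, i.e.\ $k\ln(k+1)\leq k+2$, which already fails at $k=4$. The fix, as in the paper, is to retain $d^dl^l$ and use $((k+1)/l)^l\leq e^{k+1-l}$ and $((k+1)/d)^d\leq e^{k+1-d}$ to obtain $(k+1)^{k+1}/(d^dl^l)\leq e^{2k+2-l-d}(k+1)^{k+1-l-d}$; combined with the $e^{l+d}$ factor this gives $t^{k+1-l-d}\leq e^{2k+2}(k+1)^{k+1-l-d}$, from which the claim follows directly.
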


\begin{corollary}
Suppose we are given a hypothesis class $\calH$ with Littlestone's dimension $d$ and integer $k,l \geq 0$ such that $k \geq l + d$. If Algorithm SOA.DK is run with input hypothesis class $\calH^l$ and mistake budget $k$, then for any adversary that shows sequences satisfying the $l$-bias assumption with respect to $\calH$, SOA.DK makes at most $k$ mistakes and has at most $ (k+1) \cdot e^{\frac{2k+2}{k+1-l-d}} $ nontrivial rounds.
\label{cor:infiniteub}
\end{corollary}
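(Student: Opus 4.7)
The plan is to derive the corollary as an immediate combination of two results already available in the paper: the extended Littlestone dimension bound of Lemma~\ref{lem:infiniteub} applied to $\calH^l$, and the performance guarantee of the SOA.DK algorithm given by Lemma~\ref{lem:perfguar}. The only conceptual step is recognizing that running SOA.DK on the lifted class $\calH^l$ puts us back in the realizable setting of Section~\ref{sec:realizable}, so the realizable-case theory applies verbatim.

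In more detail, I would first observe that by the definition of the $l$-bias assumption in Section~\ref{sec:setting}, any sequence that has $l$-bias with respect to $\calH$ is exactly an $\calH^l$-realizable sequence, since $\calH^l = \calH \cdot \calC^l$. Hence feeding $\calH^l$ into SOA.DK as the input hypothesis class places us in the realizable regime for which Lemma~\ref{lem:perfguar} was proved. Next, because the hypothesis $k \geq l + d$ holds, Lemma~\ref{lem:infiniteub} gives the finite upper bound
\[
\ELdim(\calH^l, k) \;\leq\; m \;:=\; (k+1)\cdot e^{\frac{2k+2}{k+1-l-d}}.
\]
Applying Lemma~\ref{lem:perfguar} with $V = \calH^l$ and this choice of $m$ then yields a $(k,m)$-SZB bound for every $\calH^l$-realizable (equivalently, $l$-biased with respect to $\calH$) sequence. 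By the definition of the $k$-SZB model in Section~\ref{sec:setting}, a $(k,m)$-SZB bound guarantees at most $k$ mistakes and at most $m$ nontrivial rounds in total, which is exactly the claim.

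There is no real obstacle internal to this corollary; all the heavy work has already been done upstream. The interesting content lives in Lemma~\ref{lem:infiniteub} (which uses tree shattering coefficient bounds from Lemmas~\ref{lem:finitegrowth} and~\ref{lem:ldimgrowth} together with Theorem~\ref{thm:growth} to control $\ELdim(\calH^l, k)$ even though $\calH^l$ is infinite) and in Lemma~\ref{lem:perfguar} (which certifies that the recursive $\ELdim$-minimizing strategy of SOA.DK actually realizes the extended Littlestone bound). The one subtle point worth flagging in the write-up is that the hypothesis $k \geq l + d$ is exactly what keeps the denominator $k+1-l-d$ positive, so that the exponential in $m$ is finite and the conclusion is nonvacuous; if $k < l + d$, Theorem~\ref{thm:infinitelb} already shows the bound cannot be finite, so this condition is tight.
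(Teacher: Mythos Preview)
Your proposal is correct and matches the paper's intended derivation: the corollary is stated without proof in the paper precisely because it follows immediately from Lemma~\ref{lem:infiniteub} together with Lemma~\ref{lem:perfguar}, using the observation that $l$-biased sequences are $\calH^l$-realizable. One tiny cosmetic point: in your parenthetical remark about the ingredients of Lemma~\ref{lem:infiniteub}, it is Lemma~\ref{lem:growthcompose} (not Lemma~\ref{lem:finitegrowth}) that is combined with Lemma~\ref{lem:ldimgrowth} and Theorem~\ref{thm:growth}.
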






\subsection{Lower Bounds for Randomized Prediction}

We show that the results in Section~\ref{sec:lbnonrealizable} hold even when the learner makes soft predictions.

\paragraph{Randomized Prediction Model.} 
Consider the following randomized variant of online classification model. At time $t$, the adversary presents example $x_t$ in $\calX$, and the learner outputs a tuple $(p_{t,-}, p_{t,+}, 1 - p_{t,-} - p_{t,+})$, with $p_{t,-} \geq 0$, $p_{t,+} \geq 0$ and $1 - p_{t,-} - p_{t,+} \geq 0$. The tuple $(p_{t,-}, p_{t,+}, 1 - p_{t,-} - p_{t,+})$ represents the learner's strategy of predicting $+1$ with probability $p_{t, +}$, $-1$ with probability $p_{t, -}$ and abstaining with probability $1 - p_{t, +} - p_{t, -}$. The adversary then reveals an outcome $y_t \in \cbr{-1,+1}$, and the learner incurs a mistake penalty of $p_{t,+}$ if $y_t = -1$ and $p_{t,-}$ if $y_t = 1$; it also incurs an abstention penalty of $1 - p_{t,+} - p_{t,-}$. When $p_{t,+}$ and $p_{t,-}$ take values in $\cbr{0,1}$, observe that this is equivalent to our prediction model in Section~\ref{sec:setting}.

So given examples $(x_1,y_1), \ldots, (x_n, y_n)$, the cumulative mistake penalty upto time $n$ is as $ \sum_{t=1}^n I(y_t = -1)p_{t,+} + I(y_t = +1) p_{t,-}$ and the cumulative abstention penalty is $\sum_{t=1}^n (1 - p_{t,+} - p_{t,-})$.
We have the following result for finite hypothesis classes.
\begin{theorem}
There exists an instance domain $\calX$, a single-element hypothesis class $\calH$, such that the following holds. If $k < l$, then for any $a \geq 0$, there exists a strategy of the adversary satisfying $l$-bias assumption, such that any algorithm guaranteeing a cumulative mistake penalty at most $k$ in the randomized prediction model must have cumulative abstention penalty at least $a$.
\label{thm:finitelbrand}
\end{theorem}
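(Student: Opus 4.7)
My plan is to reuse the construction from the deterministic lower bound (Theorem~\ref{thm:finitelb}): take $\calX$ countably infinite and $\calH = \{h_0\}$ with $h_0 \equiv +1$, so that $\calH^l$ contains every function differing from $h_0$ on at most $l$ points, and any sequence in which at most $l$ labels equal $-1$ is $l$-bias realizable. The task reduces to exhibiting a single (adaptive) adversary strategy that, against any learner guaranteeing cumulative mistake penalty $\le k$, forces cumulative abstention penalty $\ge a$.

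The core idea is a threshold-based adaptive adversary. Fix a margin $\epsilon = (l-k)/2 > 0$ (positive since $k<l$) and a horizon $T$ to be chosen as a function of $a, k, l$. The adversary presents $T$ fresh examples $x_1,\ldots,x_T \in \calX$ and maintains a bias budget $b$ initialized to $l$. At each round $t$, upon reading the learner's output $(p_{t,-},p_{t,+},1-p_{t,-}-p_{t,+})$: if $p_{t,+} > (k+\epsilon)/l$ and $b>0$, label $y_t=-1$ and decrement $b$; otherwise label $y_t=+1$. By construction at most $l$ rounds are labeled $-1$, so the generated sequence is $\calH^l$-realizable.

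The first analytical step is to argue that any algorithm meeting the mistake guarantee has at most $l-1$ rounds with $p_{t,+} > (k+\epsilon)/l$: otherwise the adversary would spend all $l$ units of bias on such rounds, contributing a mistake penalty strictly larger than $l \cdot (k+\epsilon)/l = k+\epsilon > k$, contradicting the guarantee. Letting $R = \{t : y_t = -1\}$, this yields $|R| \le l-1$ and, for every $t \notin R$, $p_{t,+} \le (k+\epsilon)/l$. Applying the mistake bound to rounds in $R^c$ (where the adversary labels $+1$, so the per-round penalty equals $p_{t,-}$) gives $\sum_{t \notin R} p_{t,-} \le k$. Combining,
\[
\sum_{t=1}^T (1 - p_{t,+} - p_{t,-}) \;\ge\; \sum_{t \notin R} \left(1 - \frac{k+\epsilon}{l} - p_{t,-}\right) \;\ge\; (T-l+1)\cdot\frac{l-k}{2l} - k,
\]
which grows linearly in $T$. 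Choosing $T \ge \lceil 2l(a+k)/(l-k)\rceil + l - 1$ pushes the abstention penalty above $a$.

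The main obstacle is engineering the threshold correctly: it must sit strictly above $k/l$ so that ``saturating the bias budget'' strictly exceeds the mistake guarantee by a fixed positive margin $\epsilon$ (otherwise a learner playing $p_{t,+} = k/l$ on every round could evade the argument indefinitely), yet strictly below $1$ so that the per-round slack $1-(k+\epsilon)/l$ is a positive constant from which to accumulate $\Theta(T)$ abstention; the symmetric choice $\epsilon = (l-k)/2$ satisfies both requirements. A minor point to verify is that an adversary adapting to the learner's real-valued predictions is permitted by the randomized prediction model of Section~\ref{sec:setting}, which is the case since the adversary only commits to $y_t$ after observing the tuple $(p_{t,-},p_{t,+})$.
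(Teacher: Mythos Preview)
Your proof is correct and shares the paper's core idea: take $\calX$ infinite, $\calH=\{+1\}$, and run a threshold-based adaptive adversary that labels $-1$ (spending a unit of bias) only when $p_{t,+}$ exceeds a fixed level strictly above $k/l$, and labels $+1$ otherwise. The paper phrases this adversary via the $(l-1,m)$-difficult extended mistake tree of Lemma~\ref{lem:infmt} (whose dashed edges are all $+1$, so following the tree amounts to the same ``fresh example, threshold on $p_{t,+}$'' rule), and then separately bounds the number of rounds with large $p_{t,-}$ to isolate rounds with abstention mass at least $\epsilon/2$. Your version is more elementary on two counts: you drop the tree scaffolding entirely, and instead of counting high-$p_{t,-}$ rounds you use the global inequality $\sum_{t\notin R} p_{t,-}\le k$ (an immediate consequence of the mistake guarantee) directly in the abstention lower bound. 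This yields the same linear-in-$T$ growth with a shorter argument; the paper's route has the advantage of reusing the extended-mistake-tree machinery already developed for the deterministic case, but that machinery is not really needed here.
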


For infinite hypothesis classes with Littlestone's dimension $d$, we have the following result.

\begin{theorem}
There exists an instance domain $\calX$ and a hypothesis class $\calH$ with Littlestone's dimension $d$ such that the following holds. If $k < l + d$, then for any $a \geq 0$, there exists a strategy of the adversary satisfying the $l$-bias assumption, such that any algorithm guaranteeing a mistake penalty of at most $k$ in the randomized prediction model must have cumulative abstention penalty at least $a$.
\label{thm:infinitelbrand}
\end{theorem}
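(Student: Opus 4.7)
The plan is to reuse the hypothesis class from Theorem~\ref{thm:infinitelb}: take $\calH = \calC^d$ on an infinite instance domain $\calX$. Then $\Ldim(\calH) = d$ and $\calH^l = \calC^{l+d}$, so any label sequence that differs from the constant $+1$ function on at most $l+d$ distinct points is $\calH^l$-realizable. The task then reduces to forcing arbitrarily large cumulative abstention penalty against a learner permitted mistake penalty at most $k < l+d$, for an adversary constrained to place at most $l+d$ $(-1)$-labels on fresh examples.

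Set $\tau := k/(l+d)$, which lies in $[0,1)$ since $k<l+d$. The adversary I propose is: at round $t$, present a fresh $x_t \in \calX$; upon observing $(p_{t,-},p_{t,+})$, set $y_t = -1$ if $p_{t,+} > \tau$ and the flip budget (initially $l+d$) is still positive, and set $y_t = +1$ otherwise. Because this uses at most $l+d$ flips, the produced sequence is $\calH^l$-realizable.

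The heart of the argument is a per-round accounting of the learner's committed soft mass. Let $F$ denote the set of flipped rounds and $K = |F|$. Since $p_{t,+} > \tau$ strictly on every flipped round, the assumed cumulative mistake bound gives $K\tau < \sum_{t \in F} p_{t,+} \leq k$, whence $K < k/\tau = l+d$; in particular the adversary's flip budget is never exhausted, so the learner can never be certain that no more flips will occur. On every non-flipped round $p_{t,+} \leq \tau$, so the committed mass satisfies $p_{t,-}+p_{t,+} \leq \tau + p_{t,-}$. Summing and using $\sum_{t\notin F} p_{t,-} \leq k$ (also from the mistake bound), one obtains $\sum_t (p_{t,-}+p_{t,+}) \leq K + (T-K)\tau + k \leq T\tau + k + (l+d)$. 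The cumulative abstention penalty is therefore at least $T(1-\tau) - k - (l+d)$, and choosing $T \geq (a + k + l + d)/(1-\tau)$ makes it at least $a$, as required.

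The main subtlety, dictating both the value of $\tau$ and the strict inequality in the adversary's rule, is preventing the learner from baiting the entire flip budget and then safely predicting $+1$. Under the softer rule ``flip when $p_{t,+} \geq \tau$'', the learner could play $p_{t,+}=\tau,\ p_{t,-}=1-\tau$ on exactly $l+d$ rounds to trigger all $l+d$ flips (total mistake penalty $k$) while committing mass $1$ per round, then play $p_{t,+}=1$ on every subsequent round for zero additional abstention---giving $0$ cumulative abstention regardless of $T$. The strict threshold $p_{t,+} > \tau$ combined with $K\tau < k$ forces $K < l+d$, leaving at least one flip ``in reserve'' at every round; this is what prevents the learner from ever confidently committing $p_{t,+} > \tau$. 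I expect this residual-threat argument, together with correctly handling history-dependent learner strategies so that the per-round bound $p_{t,+} \leq \tau$ on non-flipped rounds aggregates cleanly, to be the main technical step of the proof.
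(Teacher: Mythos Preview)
Your proposal is correct and follows the same overall route as the paper: take $\calH=\calC^d$ so that $\calH^l=\calC^{l+d}$, and have the adversary reveal $-1$ on fresh points precisely when $p_{t,+}$ exceeds the threshold $\tau=k/(l+d)$, which is exactly the paper's $1-\epsilon$ after the substitution $l\mapsto l+d$. The strict-inequality observation that $K\tau<\sum_{t\in F}p_{t,+}\le k$ forces $K<l+d$, so the flip budget is never exhausted against any learner meeting the mistake guarantee, is the same key step.

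Where you differ is in the bookkeeping. The paper instantiates the adversary via the $(l+d-1,m)$-difficult extended mistake tree of Lemma~\ref{lem:infmt} (to certify that the interaction lasts at least $m$ rounds) and then separately counts rounds with $p_{t,-}>\epsilon/2$ to bound the abstention loss from the $p_{t,-}$ side. You bypass the tree entirely---fresh points plus the explicit flip-budget check already guarantee $\calH^l$-realizability and let you run for any desired number $T$ of rounds---and you handle the $p_{t,-}$ contribution by the single inequality $\sum_{t\notin F}p_{t,-}\le k$ coming directly from the mistake-penalty guarantee. This yields the clean lower bound $T(1-\tau)-k-(l+d)$ on cumulative abstention without any auxiliary counting lemma. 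The paper's route reuses machinery it has already built; yours is more self-contained and slightly sharper in the constants.
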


\section{Related Work}


The first formal framework for online learning with abstentions is the Knows What It Knows (KWIK) model~\citep{LLWS11}, which works as follows. At time $t$, the learner is given an example $x_t$, and is expected to output either the correct label for $x_t$ or abstain from prediction. \cite{LLWS11} formalizes versions of this model for both classification and regression settings, and provides a classification algorithm that achieves no mistakes and $| \calH | - 1$ abstentions when the sequences provided are realizable for a finite hypothesis class $\calH$. \cite{SS11} provides algorithms for online regression that apply even when the realizability assumption is relaxed. 

Perhaps the most related to our work is~\cite{SZB10}, where the KWIK model is relaxed in the classification setting by allowing the learner to make $\leq k$ mistakes. This work presents an algorithm that, given a finite hypothesis $\calH$, can make at most $k$ mistakes with at most $(k+1)|\calH|^{\frac{1}{k+1}}$ abstentions. Our work extends~\cite{SZB10} in that we provide an optimal algorithm that exploits finer structures in $\calH$, and also in that we allow the hypothesis class $\calH$ to be infinite. ~\cite{DZ13} extends~\cite{SZB10} by providing efficient algorithms for the class of disjunctions. Finally, another important line of work for online classification with abstentions is conformal prediction~\citep{SV08}, which, given a conformity measure $R$ and an error probability measure $\delta$, shows a strategy for constructing confidence sets in an online manner that contain the correct label with probability $1-\delta$. Our framework differs from this line of work in that the conformity measure for us is not specified.



There is a large volume of literature on online classification when no abstentions are allowed.
The mistake bound model, initially proposed by~\citep{L87, A87}, considers online binary classification in the realizable case. \cite{L87} also introduces the standard optimal algorithm and optimal mistake bound (aka Littlestone's dimension $\Ldim(\calH)$). There has been much literature on developing algorithms for specific hypothesis classes in the mistake bound model; see~\cite{SB14, CL06} for examples.  \cite{BPS09} considers online classification (with no abstentions) in the agnostic case; they show that if the hypothesis class $\calH$ has finite Littlestone's dimension, then it is possible to design an online prediction algorithm that makes $l + \tilde{O}(\sqrt{\Ldim(\calH) T} + \Ldim(\calH))$ mistakes over $T$ rounds, where $l$ is the minimum error of any hypotheses in $\calH$. In follow-up work, ~\citep{RST10, RSS12, RST15a, RST15b} have developed a rich theory of online learning, and defined complexity measures such as sequential Rademacher complexity, and sequential covering number that characterize the complexity of online learning. However, this theory does not apply to online learning with abstentions.


In the batch setting, the problem of classification with an abstention option has been both empirically and theoretically studied since the pioneering work of~\cite{C70}. It is however unclear how to directly apply the results in the batch setting to the online setting, because of the adversarial nature of the examples. ~\cite{HW06, BW08, YW10} consider classification where the decision to abstain is made based on thresholding a real-valued function that belongs to a fixed function class. ~\cite{FMS04} provides an algorithm that performs weighted majority style aggregation over a hypothesis class and abstains when the aggregate is close to zero. ~\cite{KKM12, KT14} study a related problem called reliable learning, and gives a predictor that achieves low error at the expense of abstentions. ~\cite{B16} considers the problem in transductive setting, where the goal is to make aggregated predictions with abstention based on an ensemble of classifiers, where some error upper bounds on individual classifiers are known. Finally, inspired by the active learning algorithm of~\citep{CAL94}, ~\cite{EYW10} proposes a abstention principle in the realizable case, which guarantees a zero error. ~\citet{EYW11} shows how to extend the idea to nonrealizable case, where the predictor has zero error with respect to the optimal hyothesis and~\cite{ZC14} gives an improved predictor when a nonzero amount of error is allowed.

\paragraph{Acknowledgements.} We thank NSF under IIS 1162581 for research support. CZ would like to thank Akshay Balsubramani and Haipeng Luo for helpful discussions.

\bibliographystyle{alpha}
\bibliography{agnostickwik}

\appendix

\section{Tree Shattering Coefficient and Sequential Growth Function}
\label{sec:tscsgf}
In this section, we show that the tree shattering coefficient $\calS(\calH, t)$ is at most the size of the sequential growth function(also known as maximal sequential zero covering number) of $\calH$~\citep{RST10}. We start with some notations.
\begin{definition}[Sequential Zero Cover and Sequential Zero Covering Number, see~\cite{RST10}]
A set $V$ of depth-$t$ trees is a sequential zero cover of $\calH$ on a depth-$t$ tree $\x$, if
\[ \forall h \in \calH, \forall \epsilon \in \cbr{\pm 1}^t, \exists \v \in V, s.t. \v_s(\epsilon) = h(\x_s(\epsilon)), s = 1,2,\ldots,t \]
The sequential zero covering number of a hypothesis class $\calH$ on a given tree $\x$ is defined as
\[ \calN(0, \calH, \x) := \min\cbr{|V|: \text{$V$ is a zero-cover of $\calH$ on $\x$}}\]
The maximal sequential zero covering number is the maximum sequential zero covering number of $\calH$ over all depth-$t$ $\calX$-valued trees $\x$, that is,
\[ \calN(0, \calH, t) := \max_{\x} \calN(0, \calH, \x)\]
\end{definition}

\begin{theorem}
For a given hypothesis class $\calH$ and integer $t \geq 0$,
\[ \calS(\calH, t) \leq \calN(0, \calH, t)\]
\end{theorem}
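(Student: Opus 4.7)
The plan is to fix a single depth-$t$ $\calX$-valued tree $\x$ and show the local inequality $|S(\calH,\x)| \le \calN(0,\calH,\x)$, then take the max over $\x$ to recover the stated bound. Fix any sequential zero-cover $V$ of $\calH$ on $\x$. I will define a map $\Phi: S(\calH,\x) \to V$ and prove it is injective; since this holds for every zero-cover, we then get $|S(\calH,\x)| \le \min_V |V| = \calN(0,\calH,\x)$.

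To construct $\Phi$, let $\epsilon = (\epsilon_1,\ldots,\epsilon_t) \in S(\calH,\x)$ and pick (arbitrarily) a witness $h_\epsilon \in \calH$ with $\epsilon_s = h_\epsilon(\x_s(\epsilon))$ for all $s \le t$. By the cover property applied to this $h_\epsilon$ and this $\epsilon$, there exists $\v \in V$ with $\v_s(\epsilon) = h_\epsilon(\x_s(\epsilon)) = \epsilon_s$ for every $s$. Set $\Phi(\epsilon) := \v$ (breaking ties arbitrarily if several $\v \in V$ qualify).

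The injectivity is the only real content, and it exploits the recursive structure of trees: the value of $\v_s$ on the path $(\epsilon_1,\ldots,\epsilon_{s-1})$ depends only on that prefix, not on the later coordinates. Concretely, suppose $\Phi(\epsilon) = \Phi(\epsilon') = \v$ for two labelings $\epsilon, \epsilon' \in S(\calH,\x)$. I would argue $\epsilon_s = \epsilon'_s$ by induction on $s$. For the base case $s=1$, $\v_1$ is a constant, so $\epsilon_1 = \v_1(\,) = \v_1(\,) = \epsilon'_1$. For the inductive step, assuming $\epsilon_1 = \epsilon'_1, \ldots, \epsilon_{s-1} = \epsilon'_{s-1}$, the quantities $\v_s(\epsilon_1,\ldots,\epsilon_{s-1})$ and $\v_s(\epsilon'_1,\ldots,\epsilon'_{s-1})$ coincide, and they equal $\epsilon_s$ and $\epsilon'_s$ respectively by the defining property of $\Phi$, giving $\epsilon_s = \epsilon'_s$.

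Having established $|S(\calH,\x)| \le |V|$ for every zero-cover $V$ of $\calH$ on $\x$, taking the infimum over $V$ yields $|S(\calH,\x)| \le \calN(0,\calH,\x)$, and then taking the maximum over depth-$t$ $\calX$-valued trees $\x$ on both sides gives $\calS(\calH,t) \le \calN(0,\calH,t)$. I do not anticipate a significant obstacle; the induction for injectivity is the only spot requiring care, and it essentially reduces to the observation that in any $\{\pm 1\}$-labeled tree the node reached along a path is a deterministic function of the prefix.
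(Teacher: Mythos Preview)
Your proposal is correct and follows essentially the same approach as the paper: the paper reduces the theorem to a lemma asserting $|S(\calH,\x)| \le |V|$ for any zero-cover $V$ on $\x$, proved by associating to each $\epsilon \in S(\calH,\x)$ a tree $\v \in V$ satisfying $\v_s(\epsilon_1,\ldots,\epsilon_{s-1}) = \epsilon_s$ and observing that any such $\v$ determines $\epsilon$ uniquely. Your explicit inductive verification of injectivity is exactly the content of the paper's remark that ``for each tree $\v$ there can be at most one $(\epsilon_1,\ldots,\epsilon_t)$'' satisfying that relation.
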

\begin{proof}
This is an immediate consequence of Lemma~\ref{lem:cardxv}.
\end{proof}

\begin{lemma}
Suppose we are given a $\calX$-valued tree $\x$ and a hypothesis class $\calH$. If $V$ is a sequential zero cover of $\calH$ on $\x$, then the size of $S(\calH, \x)$ is at most $|V|$.
\label{lem:cardxv}
\end{lemma}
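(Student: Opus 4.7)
The proposal is to construct an injection $\phi: S(\calH,\x) \to V$, after which the bound $|S(\calH,\x)| \le |V|$ is immediate. The key structural observation we will exploit is that $\v_s(\epsilon)$ depends only on $(\epsilon_1,\ldots,\epsilon_{s-1})$, so two sign paths that agree on the first $s-1$ coordinates produce the same tree-node value at level $s$ under any $\v \in V$.

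\textbf{Construction of $\phi$.} Fix any $\epsilon = (\epsilon_1, \ldots, \epsilon_t) \in S(\calH, \x)$. By definition there exists $h_\epsilon \in \calH$ such that $\epsilon_s = h_\epsilon(\x_s(\epsilon))$ for every $s \in \{1, \ldots, t\}$. Because $V$ is a sequential zero cover of $\calH$ on $\x$, applied to this particular $h_\epsilon$ and this particular $\epsilon$, there exists some $\v \in V$ with $\v_s(\epsilon) = h_\epsilon(\x_s(\epsilon))$ for all $s$. Combining, we get $\v_s(\epsilon) = \epsilon_s$ for every $s$. Choose one such $\v$ (breaking ties arbitrarily) and set $\phi(\epsilon) := \v$.

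\textbf{Injectivity of $\phi$.} Suppose, toward contradiction, that $\phi(\epsilon) = \phi(\epsilon') = \v$ for two distinct $\epsilon, \epsilon' \in S(\calH, \x)$. Let $s^\star$ be the smallest coordinate where they differ, so $\epsilon_s = \epsilon'_s$ for all $s < s^\star$ and $\epsilon_{s^\star} \neq \epsilon'_{s^\star}$. Since $\v_{s^\star}$ is a function of $(\epsilon_1, \ldots, \epsilon_{s^\star - 1})$ only, we have $\v_{s^\star}(\epsilon) = \v_{s^\star}(\epsilon')$. But by construction of $\phi$, $\v_{s^\star}(\epsilon) = \epsilon_{s^\star}$ and $\v_{s^\star}(\epsilon') = \epsilon'_{s^\star}$, which forces $\epsilon_{s^\star} = \epsilon'_{s^\star}$, contradicting the choice of $s^\star$. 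Hence $\phi$ is injective.

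\textbf{Conclusion and anticipated difficulty.} The injection yields $|S(\calH, \x)| \le |V|$, proving the lemma. I do not expect a genuine obstacle here: the whole argument hinges on the single syntactic fact that nodes in a tree are indexed by their prefix of $\pm 1$ signs, so the level-$s$ node along paths $\epsilon$ and $\epsilon'$ is the same whenever the two paths have the same first $s-1$ coordinates. The only mildly subtle point worth checking is the correct quantifier order in the definition of sequential zero cover: the cover must produce a witness $\v$ per pair $(h, \epsilon)$, not a single $\v$ that works for all $\epsilon$, which is exactly what we need to define $\phi$ pointwise and then reap injectivity from the prefix structure.
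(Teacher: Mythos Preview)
Your proof is correct and follows essentially the same approach as the paper: both associate to each $\epsilon\in S(\calH,\x)$ a witness $\v\in V$ satisfying $\v_s(\epsilon)=\epsilon_s$ for all $s$, and then argue that distinct $\epsilon$'s cannot share the same $\v$. The paper phrases the uniqueness slightly differently---noting that the relations $\v_1=\epsilon_1,\ \v_2(\epsilon_1)=\epsilon_2,\ldots$ determine $\epsilon$ forward from $\v$---while you argue via the first coordinate of disagreement, but these are the same prefix-structure observation.
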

\begin{proof}
Recall that
\[ S(\calH, \x) = \cbr{ (\epsilon_1, \epsilon_2, \ldots, \epsilon_t) \in \cbr{\pm 1}^t: \epsilon_1 = h(\x_1(\epsilon)), \epsilon_2 = h(\x_2(\epsilon)), \ldots, \epsilon_t = h(\x_t(\epsilon)), h \in \calH } \]
Given an element $(\epsilon_1, \epsilon_2, \ldots, \epsilon_t)$ in $S(\calH, \x)$, there exists some $h$ in $\calH$ such that
\[ \epsilon_1 = h(\x_1(\epsilon)), \epsilon_2 = h(\x_2(\epsilon)), \ldots, \epsilon_t = h(\x_t(\epsilon)) \]
Since $V$ is a zero-cover of $\calH$, there exists a depth-$t$ tree $\v = (\v_1, \ldots, \v_t)$ in $V$ such that
\[ \v_1(\epsilon) = h(\x_1(\epsilon)), \v_2(\epsilon) = h(\x_2(\epsilon)), \ldots, \v_t(\epsilon) = h(\x_t(\epsilon))\]
Hence,
\[ \v_1(\epsilon) = \epsilon_1, \v_2(\epsilon) = \epsilon_2, \ldots, \v_t(\epsilon) = \epsilon_t \]
More explicitly,
\begin{equation}
\v_1 = \epsilon_1, \v_2(\epsilon_1) = \epsilon_2, \v_3(\epsilon_1, \epsilon_2) = \epsilon_3, \ldots, \v_t(\epsilon_1, \ldots, \epsilon_{t-1}) = \epsilon_t
\label{eqn:uniqpath}
\end{equation}
To summarize, for every $(\epsilon_1, \epsilon_2, \ldots, \epsilon_t)$ in $S(\calH, \x)$, there is a tree $\v$ in $V$ such that Equation~\eqref{eqn:uniqpath} holds. Since for each tree $\v$ there can be at most one $(\epsilon_1, \epsilon_2, \ldots, \epsilon_t)$ such that Equation~\eqref{eqn:uniqpath} holds, this implies that $|S(\calH, \x)| \leq |V|$.
\end{proof}

\section{Reducing the Expert Problem to Online Classification with Finite Class}
In this section, we show that the problem of Prediction with Expert Advice (abbrev. Expert Problem) with $l$-mistake assumption~\citep{CBFHW96, ALW06} can be cast to the problem studied in this paper, i.e. online classification with a finite hypothesis class with $l$-bias assumption. Specifically, in the expert problem, at each time $t$, the algorithm is given experts' advice $(x_{1,t}, \ldots, x_{N,t}) \in \cbr{-1,1}^N$, and predicts $\hat{y}_t \in \cbr{-1,1,\bot}$. Then adversary reveals label $y_t \in \cbr{-1,1}$. The $l$-mistake assumption states that there is an expert $i$ that makes at most $l$ mistakes throughout the process, i.e.
\[ \exists i, |\cbr{t: x_{i,t} \neq y_t}| \leq l \]
For $i = 1,2,\ldots,N$, define hypothesis $h_i: \R^{N+1} \to \R$ as mapping a $(N+1)$-dimensional vector to its $i$th coordinate. Define hypothesis class $\calH_N := \cbr{h_i: i = 1,\ldots,N}$. We have the following result relating the $l$-mistake assumption to $l$-bias assumption; the intuition is to concatenate a new coordinate at the end of the experts' advice to make all the examples shown distinct.

\begin{proposition}
 The following are equivalent:
\begin{enumerate}[(a)]
\item The sequence of expert advice and labels $(x_{1,t}, \ldots, x_{N,t}), y_t, t=1,2,\ldots$ satisfies $l$-mistake assumption.
\item The sequence $x_t = (x_{1,t}, \ldots, x_{N,t}, t), y_t, t=1,2,\ldots$ satisfies $l$-bias assumption with respect to $\calH_N$.
\end{enumerate}
\end{proposition}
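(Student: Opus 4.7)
The plan is to verify both directions by an explicit construction of the ``flip set'' $c \in \calC^l$. The whole point of appending the extra coordinate $t$ to the advice vector is to make sure that the $x_t$'s are pairwise distinct as elements of $\R^{N+1}$, so that the per-round mistake count of expert $i$ coincides with the number of points at which some $c \in \calC^l$ takes value $-1$.

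\emph{Direction (a) $\Rightarrow$ (b).} Assume the $l$-mistake assumption holds, and let $i$ be an expert with $M := \{t : x_{i,t} \neq y_t\}$ satisfying $|M| \leq l$. Consider the hypothesis $h_i \in \calH_N$, which reads off the $i$-th coordinate, so that $h_i(x_t) = x_{i,t}$. Let $c \in \calC^l$ be the classifier that is $-1$ precisely on the (at most $l$) points $\{x_t : t \in M\}$ of $\R^{N+1}$ and $+1$ everywhere else; this is well-defined as an element of $\calC^l$ because the appended $t$ coordinate forces these points to be distinct. Then for every round $t$, $h_i(x_t) \cdot c(x_t) = x_{i,t} \cdot c(x_t)$ equals $y_t$ both when $t \in M$ (flip happens, corrects the mistake) and when $t \notin M$ (no flip, already correct). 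Hence $h_i \cdot c \in \calH_N \cdot \calC^l = \calH_N^l$ realizes the sequence, which is the $l$-bias condition.

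\emph{Direction (b) $\Rightarrow$ (a).} Assume the $l$-bias assumption holds, so there exist $h_i \in \calH_N$ and $c \in \calC^l$ with $h_i(x_t) \cdot c(x_t) = y_t$ for every $t$. Rearranging and using $x_{i,t}, y_t, c(x_t) \in \{\pm 1\}$ gives $c(x_t) = y_t \cdot x_{i,t}$, so $c(x_t) = -1$ if and only if $x_{i,t} \neq y_t$. Thus the set of mistake rounds of expert $i$ is exactly $\{t : c(x_t) = -1\}$. Because the $x_t$ are pairwise distinct in $\R^{N+1}$ (again thanks to the time coordinate), this set has size at most the total number of inputs on which $c = -1$, which is at most $l$ by the definition of $\calC^l$. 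Hence expert $i$ makes at most $l$ mistakes, which is the $l$-mistake assumption.

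\emph{Expected obstacle.} There is no real difficulty; the only subtlety worth double-checking is the role of distinctness. If one omitted the extra coordinate, a single $x \in \R^N$ appearing on multiple rounds would be forced by $c$ to be flipped on all those rounds simultaneously, so the count of $c = -1$ points could undercount the number of expert mistakes in direction (b). The appended $t$ makes both correspondences exact and the equivalence clean.
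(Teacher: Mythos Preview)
Your proof is correct and follows essentially the same approach as the paper: both directions hinge on the distinctness of the $x_t$ (guaranteed by the appended time coordinate) to identify the mistake rounds of expert $i$ with the at-most-$l$ points on which the realizing hypothesis in $\calH_N^l$ disagrees with $h_i$. Your version is slightly more explicit in working through the product decomposition $h_i \cdot c$ with $c \in \calC^l$, while the paper phrases the same argument in terms of ``$h_i$ correct on all but $p \leq l$ examples,'' but the content is identical.
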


\begin{proof}
We show the implication in both directions.
\begin{enumerate}
\item[($\Rightarrow$)] If $(x_{1,t}, \ldots, x_{N,t}), y_t, t=1,2,\ldots$ satisfies $l$-mistake assumption, then there is $i \in \cbr{1,\ldots,N}$ such that
\[ M_i = |\cbr{t: x_{i,t} \neq y_t}| \leq l \]
Hence, $h_i$ is correct on all but the rounds $t$ in $M_i$, i.e. on examples $\cbr{(x_{1,t}, \ldots, x_{N,t}, t): t \in M_i}$, which are distinct and has size at most $l$. Therefore, the sequence $\cbr{(x_{1,t}, \ldots, x_{N,t}, t)}, t =1,2,..$ satisfies $l$-bias assumption with respect to $\calH_N$.

\item[($\Leftarrow$)] If the sequence $(x_{1,t}, \ldots, x_{N,t}, t), y_t, t=1,2,\ldots$ satisfies $l$-bias assumption with respect to $\calH_N$, then there exists $h_i$ that is correct on all but $p \leq l$ examples shown. That is, $p$, the size of the set
\[ M_i = |\cbr{t: x_{i,t} \neq y_t}|\]
is at most $l$. This immediately implies (a). \qedhere
\end{enumerate}
\end{proof}

An immediate consequence of the above proposition is that, for an instance of the expert problem with $l$-mistake assumption, we can convert it to an instance of online classification in $\calH_N$ under $l$-bias assumption, and apply SOA.DK on $\calH_N$ to get mistake-abstention tradeoffs.

\section{A Note on the Recursive Definition of $\ELdim$}
At the end of Section~\ref{sec:realizable}, we give a recursive definition on
$\ELdim(\calH, k)$ when $k \geq 1$:
\begin{eqnarray}
  \ELdim(\calH, k) &:=& \max_x \max_{y \in \{-1,+1\}} \min \big( \ELdim(\calH[(x,y)], k), \ELdim(\calH[(x,-y)], k-1) \big) \nonumber \\
                   &=&  \max_x \max \Big( \min \big( \ELdim(\calH[(x,-1)], k), \ELdim(\calH[(x,+1)], k-1) \big), \nonumber \\
                        && \quad\qquad\qquad \min \big( \ELdim(\calH[(x,+1)], k), \ELdim(\calH[(x,-1)], k-1) \big) \Big) \label{eqn:treedef}
\end{eqnarray}
On the other hand, by the definition of Algorithm~\ref{alg:esoa}, we also have
\begin{eqnarray}
\ELdim(\calH, k) &=&  \max_x \min \Big( \ELdim(\calH[(x,-1)], k-1), \ELdim(\calH[(x,+1)], k-1) \big), \nonumber \\
&& \quad\qquad\qquad \max \big( \ELdim(\calH[(x,+1)], k), \ELdim(\calH[(x,-1)], k) \big) \Big) \label{eqn:algdef}
\end{eqnarray}

In this section we show that these two definition are indeed equivalent. First we need a
simple observation.
\begin{lemma}
If $A,B,C$ are real numbers, then $\min(\max(A,B),C) = \max(\min(A,C), \min(B,C))$.
\label{lem:distrib}
\end{lemma}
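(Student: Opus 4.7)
The plan is to prove the identity by noting that $f(x) := \min(x, C)$ is a non-decreasing function of $x$, and any non-decreasing function commutes with $\max$: that is, $f(\max(A,B)) = \max(f(A), f(B))$. Applying this with our particular $f$ gives exactly the claimed equality.

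More concretely, I would first verify the auxiliary fact that for any non-decreasing $g: \mathbb{R} \to \mathbb{R}$ and any two reals $A, B$, we have $g(\max(A,B)) = \max(g(A), g(B))$. This follows by a one-line case split: without loss of generality assume $A \geq B$, so $\max(A,B) = A$ and monotonicity gives $g(A) \geq g(B)$, hence $\max(g(A), g(B)) = g(A) = g(\max(A,B))$. Then I would instantiate $g(x) = \min(x, C)$, which is clearly non-decreasing in $x$, to conclude $\min(\max(A,B), C) = \max(\min(A,C), \min(B,C))$.

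As a backup, a purely elementary proof by direct case analysis on the ordering of $C$ relative to $\max(A,B)$ also works: if $C \geq \max(A,B)$, both sides equal $\max(A,B)$; otherwise $C < \max(A,B)$, and assuming $A \geq B$ by symmetry we get LHS $= C$ and RHS $= \max(C, \min(B,C)) = C$. There is no real obstacle here, since this is the standard distributivity of $\min$ over $\max$ on a totally ordered set; the only thing to be careful about is invoking the symmetry between $A$ and $B$ correctly in the case analysis.
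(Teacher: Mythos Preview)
Your proposal is correct; both the monotonicity argument and the direct case analysis establish the identity. The paper itself states this lemma only as ``a simple observation'' and gives no proof, so there is nothing to compare against --- your argument is more detailed than what the paper provides.
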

\begin{lemma}
The right hand sides of Equations~\eqref{eqn:treedef} and~\eqref{eqn:algdef} are equal.
\end{lemma}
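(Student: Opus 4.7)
The plan is to prove the claimed equality pointwise in $x$ and then take $\max_x$ on both sides. Fix $x \in \calX$ and, for brevity, write $A_y := \ELdim(\calH[(x,y)], k)$ and $B_y := \ELdim(\calH[(x,y)], k-1)$ for $y \in \cbr{-1,+1}$. Expanding the inner $\max_y$ in the $x$-slice of~\eqref{eqn:treedef} gives $\max\bigl(\min(A_{-1}, B_{+1}),\, \min(A_{+1}, B_{-1})\bigr)$, while the $x$-slice of~\eqref{eqn:algdef} reads $\min\bigl(B_{-1},\, B_{+1},\, \max(A_{-1}, A_{+1})\bigr)$. So the task reduces to establishing the identity
\[ \max\bigl(\min(A_{-1}, B_{+1}),\, \min(A_{+1}, B_{-1})\bigr) \;=\; \min\bigl(B_{-1},\, B_{+1},\, \max(A_{-1}, A_{+1})\bigr). \]

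The main step would be to apply Lemma~\ref{lem:distrib} to the right-hand side, using $\min(B_{-1}, B_{+1})$ in the role of $C$, which yields
\[ \min\bigl(B_{-1},\, B_{+1},\, \max(A_{-1}, A_{+1})\bigr) \;=\; \max\bigl(\min(A_{-1}, B_{-1}, B_{+1}),\; \min(A_{+1}, B_{-1}, B_{+1})\bigr). \]
Next, I would invoke the monotonicity of $\ELdim(V, \cdot)$ in its budget argument, stated right after the definition of $\ELdim$ in Section~\ref{sec:realizable}: since $\ELdim(V, k) \leq \ELdim(V, k-1)$, we have $A_y \leq B_y$ for each $y$. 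This lets me drop a redundant constraint in each inner minimum: $\min(A_{-1}, B_{-1}, B_{+1}) = \min(A_{-1}, B_{+1})$ because $A_{-1} \leq B_{-1}$ is automatic, and symmetrically $\min(A_{+1}, B_{-1}, B_{+1}) = \min(A_{+1}, B_{-1})$. Substituting these simplifications gives exactly the left-hand side of the desired identity.

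Since the pointwise equality holds for every $x$, taking $\max_x$ on both sides finishes the proof. There is no substantive obstacle: the argument is a one-line use of the distributive identity (Lemma~\ref{lem:distrib}) combined with the monotonicity of $\ELdim$ in the mistake budget. The only subtlety is remembering the direction of monotonicity, so that the three-term minima collapse to two-term minima in the correct way.
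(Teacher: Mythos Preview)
Your proof is correct and follows essentially the same approach as the paper: fix $x$, use the distributive identity of Lemma~\ref{lem:distrib} together with the monotonicity $\ELdim(V,k)\le\ELdim(V,k-1)$ to collapse the three-term $\min$ into the desired two-term $\max$-of-$\min$s, then take $\max_x$. The only cosmetic difference is that you apply Lemma~\ref{lem:distrib} once with $C=\min(B_{-1},B_{+1})$ and then simplify both branches, whereas the paper applies it twice in sequence (first distributing over $B_{+1}$, then over $B_{-1}$), interleaving the monotonicity simplifications; your version is slightly more compact but the content is identical.
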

\begin{proof}
Fix $x$ in $\calX$. We have:
\small
\begin{eqnarray*}
  && \min \Big( \ELdim(\calH[(x,-1)], k-1), \ELdim(\calH[(x,+1)], k-1) \big),\max \big( \ELdim(\calH[(x,+1)], k), \ELdim(\calH[(x,-1)], k) \big) \Big) \\
  &=& \min \Bigg( \ELdim(\calH[(x,-1)], k-1), \min \Big( \ELdim(\calH[(x,+1)], k-1) \big),\max \big( \ELdim(\calH[(x,+1)], k), \ELdim(\calH[(x,-1)], k) \big) \Big) \Bigg) \\
  &=& \min \Bigg( \ELdim(\calH[(x,-1)], k-1), \max \Big( \ELdim(\calH[(x,+1)], k), \min\big( \ELdim(\calH[(x,-1)], k), \ELdim(\calH[(x,+1)], k-1) \big) \Big) \Bigg) \\
  &=& \max \Bigg( \min \Big( \ELdim(\calH[(x,-1)], k-1), \ELdim(\calH[(x,+1)], k) \Big), \min \Big( \ELdim(\calH[(x,-1)], k), \ELdim(\calH[(x,+1)], k-1) \Big) \Bigg)
\end{eqnarray*}
\normalsize
where the first equality is from the associativity of $\min$;
the second equality
is from Lemma~\ref{lem:distrib} and
$\ELdim(\calH[(x,+1)], k-1) \geq \ELdim(\calH[(x,+1)], k)$;
the third equality
is from Lemma~\ref{lem:distrib} and
$\ELdim(\calH[(x,-1)], k-1) \geq \ELdim(\calH[(x,-1)], k)$.
Taking the maximum over $x \in \calX$ proves the lemma.
\end{proof}
\section{Proofs from Section~\ref{sec:realizable}}
\label{sec:pfrealizable}

We first provide some auxiliary lemmas regarding properties of extended mistake trees and extended Littlestone's dimension. This will serve as the basis of the proof of Lemma~\ref{lem:perfguar}.

We state a property about subtrees of a $(k,m)$-difficult extended mistake tree.
\begin{lemma}[Recursive Property of Extended Mistake Trees]
Suppose we are given hypothesis class $\calH$ that has an extended mistake tree $T$ with root $x$, left subtree $T_{-1}$, right subtree $T_{+1}$ and integers $k \geq 0, m \geq 1$. For the root node $x$, denote by $e_l$ its downward left solid edge, $e_r$ its downward right solid edge, and $e_d$ its downward dashed edge. Denote by $y$ the label of $e_d$.
\begin{enumerate}
\item[(i)] The following statements are equivalent: (a) $T$ is $(0,m)$-difficult. (b) $T_y$ is $(0,m-1)$-difficult.
\item[(ii)] For $k \geq 1$, the following statements are equivalent: (a) $T$ is $(k,m)$-difficult. (b) $T_{-y}$ is $(k-1,m-1)$-difficult, and $T_y$ is $(k,m-1)$-difficult.

\end{enumerate}
\label{lem:recursive}
\end{lemma}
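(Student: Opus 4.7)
The plan is to perform a case analysis on which of the three downward edges at the root (namely $e_l$, $e_r$, or $e_d$) a given root-to-leaf path of $T$ traverses first. Every root-to-leaf path of $T$ decomposes uniquely as a first edge followed by a root-to-leaf path in the corresponding subtree: taking $e_d$ enters $T_y$ and contributes $0$ solid edges; taking $e_l$ enters $T_{-1}$ and contributes $1$ solid edge; taking $e_r$ enters $T_{+1}$ and contributes $1$ solid edge. In every case the length of the full path equals $1$ plus the length of the subpath, and the number of solid edges on the full path equals the number of solid edges on the subpath (plus $1$ if the first edge is solid).

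For part (i), when the solid-edge budget is $0$, the only admissible first step is the dashed edge $e_d$, so root-to-leaf paths of $T$ using at most $0$ solid edges are in length-preserving bijection (up to a shift by $1$) with root-to-leaf paths of $T_y$ using at most $0$ solid edges. Thus every such path in $T$ has length $\geq m$ if and only if every such path in $T_y$ has length $\geq m-1$, which is exactly the claimed equivalence. For part (ii), the forward direction proceeds by extracting the needed conditions from the two relevant families of paths of $T$. Prefixing any root-to-leaf path of $T_y$ with $\leq k$ solid edges by $e_d$ yields a root-to-leaf path of $T$ with $\leq k$ solid edges, which has length $\geq m$ by assumption, so the original subpath has length $\geq m-1$; this gives $T_y$ is $(k,m-1)$-difficult. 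Prefixing any root-to-leaf path of $T_{-y}$ with $\leq k-1$ solid edges by the solid edge to $T_{-y}$ similarly gives a path of $T$ with $\leq k$ solid edges, hence $T_{-y}$ is $(k-1,m-1)$-difficult.

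For the reverse direction of part (ii), I would take any root-to-leaf path $p$ of $T$ with at most $k$ solid edges and split on its first edge. If $p$ begins with $e_d$, the tail is a path of $T_y$ with at most $k$ solid edges, so $(k,m-1)$-difficulty of $T_y$ forces $|p| \geq m$. If $p$ begins with the solid edge into $T_y$, the tail lies in $T_y$ with at most $k-1$ solid edges, and $(k,m-1)$-difficulty implies $(k-1,m-1)$-difficulty (a smaller budget is a stronger restriction on the quantified paths), so again $|p| \geq m$. If $p$ begins with the solid edge into $T_{-y}$, the tail lies in $T_{-y}$ with at most $k-1$ solid edges, and $(k-1,m-1)$-difficulty of $T_{-y}$ gives $|p| \geq m$. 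The only mildly subtle point is the asymmetry between $T_y$ and $T_{-y}$ in the hypothesis: because $e_d$ enters $T_y$, that subtree sees an enriched class of admissible paths and so must satisfy the stronger $(k,m-1)$-difficulty, while $T_{-y}$ needs only $(k-1,m-1)$-difficulty; once this asymmetry is noted, the rest is routine path bookkeeping.
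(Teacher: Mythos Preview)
Your proposal is correct and follows essentially the same approach as the paper: both argue by decomposing a root-to-leaf path in $T$ into its first edge plus a root-to-leaf path in the corresponding subtree, and then bookkeep solid-edge counts and lengths. The only cosmetic difference is that in the reverse direction of (ii) the paper merges your two ``first edge enters $T_y$'' cases (dashed vs.\ solid) into a single case by noting the tail has at most $k$ solid edges either way, whereas you treat them separately and invoke the monotonicity $(k,m-1)\text{-difficult}\Rightarrow(k-1,m-1)\text{-difficult}$; both are fine.
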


\begin{proof}[Proof of Lemma~\ref{lem:recursive}]

Without loss of generality, suppose $y = +1$. The case of $y = -1$ can be shown symmetrically.

\noindent\textbf{Proof of item (i):}
We show the implication in both directions.
\begin{enumerate}
\item[($\Rightarrow$)] Consider a root to leaf path $p$ in $T_{+1}$ that uses no solid edges.
 Now consider path $p_+$, the result of prepending the root node $x$ and the
 downward dashed edge from root $x$ to its right child onto $p$, i.e. $p_+ = xe_dp$.
 It can be seen that $p_+$ uses no solid edges, and $l(p_+) = l(p) + 1$.
 Since $T$ is $(0,m)$-difficult, $l(p_+) \geq m$, therefore $l(p) \geq m-1$,
 thus showing $T_{+1}$ is $(0,m-1)$-difficult.
\item[($\Leftarrow$)] Consider a root to leaf path $p = v_1e_1v_2e_2\ldots v_ne_nv_{n+1}$
in $T$ that uses no solid edges. The first edge of $p$ must be the downward dashed edge $e_d$.
Define path $p_-$ as the result of deleting the first node $v_1 = x$ and the first edge $e_1$ from $p$,
i.e. $p_- = v_2e_2\ldots v_ne_nv_{n+1}$.
Since $T_{-1}$ is $(0,m-1)$-difficult, we get that $l(p_-) \geq m - 1$. Therefore $l(p) = l(p_-) + 1 \geq m$.
Therefore, any path $p$ in $T$ that uses no solid edges must be of length at least $m$.
Thus, $T$ is $(0,m)$-difficult.
\end{enumerate}

\noindent\textbf{Proof of item (ii):}
We show the implication in both directions.
\begin{enumerate}
\item[($\Rightarrow$)]
  \begin{enumerate}[(1)]
    \item Consider a root to leaf path $p$ in $T_{-1}$ that uses at most $k-1$ solid edges.
    Now consider path $p_+$, the result of prepending the root node $x$
    and the downward edge from root $x$ to its left child onto $p$,
    i.e. $p_+ = xe_lp$.
    It can be seen that $p_+$ uses at most $k$ solid edges, and $l(p_+) = l(p) + 1$.
    Since $T$ is $(k,m)$-difficult, $l(p_+) \geq m$, therefore $l(p) \geq m-1$,
    thus showing $T_{-1}$ is $(k-1,m-1)$-difficult.

    \item Consider a root to leaf path $p$ in $T_{+1}$ that uses at most $k$ solid edges.
    Now consider path $p_+$, the result of prepending the root node $x$ and the downward
    dashed edge from root $x$ to its right child onto $p$, i.e. $p_+ = xe_dp$.
    It can be seen that $p_+$ uses at most $k$ solid edges,
    and $l(p_+) = l(p) + 1$. Since $T$ is $(k,m)$-difficult, $l(p_+) \geq m$,
    therefore $l(p) \geq m-1$, thus showing $T_{+1}$ is $(k,m-1)$-difficult.
  \end{enumerate}

 \item[($\Leftarrow$)]
Consider a root to leaf path $p = v_1e_1v_2e_2\ldots v_ne_nv_{n+1}$ in $T$ that uses at most $k$ solid edges. Define path $p_-$ as the result of deleting the first node $v_1 = x$ the first edge $e_1$ from $p$, i.e. $p_- = v_2e_2\ldots v_ne_nv_{n+1}$.
\begin{enumerate}[(1)]
  \item If the first edge of $p$ is a downward edge from root $x$ to its left child,
  then $p_-$ is a root to leaf path in $T_{-1}$, and uses at most $k-1$ solid edges,
  since the first edge $e_1$ has to be a solid edge. Since $T_{+1}$ is $(k-1,m-1)$-difficult,
   we get that $l(p_-) \geq m - 1$.

  \item If the first edge of $p$ is the downward edge from root $x$ to its right child,
  then $p_-$ is a root to leaf path in $T_{+1}$, and uses at most $k$ solid edges.
  Since $T_{-1}$ is $(k,m-1)$-difficult, we get that $l(p_-) \geq m - 1$.
\end{enumerate}
In both cases, $l(p_-) \geq m - 1$. Hence $l(p) = l(p_-) + 1 \geq m$.
In summary, any path $p$ in $T$ that uses at most $k$ solid edges must be of length $m$. Thus, $T$ is $(k,m)$-difficult.
\end{enumerate}
\end{proof}

Built upon Lemma~\ref{lem:recursive}, we obtain the following result regarding $\calH$'s extended Littlestone's dimension.
\begin{lemma}[Recursive Property of $\ELdim$]
Suppose we are given hypothesis class $\calH$ and integers $k \geq 0,m \geq 1$.
\begin{enumerate}
\item[(i)] The following statements are equivalent: (a) $\ELdim(\calH, 0)$ is at least $m$. (b) There exists $(x,y)$ such that $x$ is in $\DIS(\calH)$, and $\ELdim(\calH[(x,y)], 0)$ is at least $m-1$.

\item[(ii)] If $k \geq 1$, the following statements are equivalent: (a) $\ELdim(\calH, k)$ is at least $m$. (b) There exists $(x,y)$ such that both $\ELdim(\calH[(x,y)], k)$ and $\ELdim(\calH[(x,-y)], k-1)$ are at least $m - 1$.
\end{enumerate}
\label{lem:eldimrecursive}
\end{lemma}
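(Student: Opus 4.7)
My plan is to prove both parts of the lemma by combining two ingredients: the structural recursion already established in Lemma~\ref{lem:recursive}, and the observation that, directly from the supremum definition of $\ELdim$ over integers, $\ELdim(V,k) \geq m$ is equivalent to the existence of a $(k,m)$-difficult extended mistake tree for $V$. Each of (i) and (ii) will then be an ``if and only if'' and I will handle the two directions separately.

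For the forward direction ($\Rightarrow$), suppose $\ELdim(\calH,k) \geq m$ with $m \geq 1$, and pick any $(k,m)$-difficult extended mistake tree $T$ for $\calH$. I first observe that $T$ cannot be a zeroth-order tree, because the unique root-to-leaf path in such a tree has length $0 < m$ while using $0 \leq k$ solid edges, contradicting $(k,m)$-difficulty. Hence $T$ has a root example $x$, left and right subtrees $T_{-1}$ and $T_{+1}$, and a dashed edge pointing to some side $y \in \{-1,+1\}$. Since every leaf of $T_{-1}$ (respectively $T_{+1}$) agrees with the edge labeled $-1$ (respectively $+1$) out of the root, its classifier lies in $\calH[(x,-1)]$ (respectively $\calH[(x,+1)]$); in particular both of these classes are nonempty, so $x \in \DIS(\calH)$, and $T_{-1}, T_{+1}$ are bona fide extended mistake trees for $\calH[(x,-1)], \calH[(x,+1)]$. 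Applying Lemma~\ref{lem:recursive}(i) in case $k=0$ gives that $T_y$ is $(0,m-1)$-difficult, so $\ELdim(\calH[(x,y)],0) \geq m-1$; applying Lemma~\ref{lem:recursive}(ii) in case $k \geq 1$ gives that $T_y$ is $(k,m-1)$-difficult and $T_{-y}$ is $(k-1,m-1)$-difficult, so both $\ELdim(\calH[(x,y)],k)$ and $\ELdim(\calH[(x,-y)],k-1)$ are at least $m-1$.

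For the backward direction ($\Leftarrow$), I will exhibit a $(k,m)$-difficult extended mistake tree for $\calH$ built from the given ingredients. In both cases I take $x$ as the root and place the dashed edge pointing to the $y$ side. In case (i), since $x \in \DIS(\calH)$, the class $\calH[(x,-y)]$ is nonempty, so I can use a zeroth-order tree consisting of a single hypothesis $h \in \calH[(x,-y)]$ as the $-y$ subtree, and I place a $(0,m-1)$-difficult tree witnessing $\ELdim(\calH[(x,y)],0) \geq m-1$ as the $y$ subtree; by the ``$\Leftarrow$'' direction of Lemma~\ref{lem:recursive}(i) the resulting tree is $(0,m)$-difficult. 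In case (ii), I place a $(k,m-1)$-difficult tree on the $y$ side and a $(k-1,m-1)$-difficult tree on the $-y$ side, both of which exist by hypothesis (and guarantee nonemptiness of the relevant restricted classes, so the construction is well-defined); the ``$\Leftarrow$'' direction of Lemma~\ref{lem:recursive}(ii) then yields a $(k,m)$-difficult tree, hence $\ELdim(\calH,k) \geq m$.

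I do not anticipate a serious obstacle, since Lemma~\ref{lem:recursive} already handles the tree combinatorics. The only minor subtlety is being careful with edge cases in the constructive direction: in (i), the existence of the $-y$ subtree relies explicitly on the hypothesis $x \in \DIS(\calH)$, whereas in (ii) the nonemptiness of $\calH[(x,\pm y)]$ is automatic from $\ELdim \geq m-1 \geq 0$ (together with the convention that $\ELdim$ of an empty class is $-\infty$). Beyond this bookkeeping, the proof is a direct translation between the language of extended mistake trees and the language of extended Littlestone's dimension.
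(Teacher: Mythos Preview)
Your proposal is correct and follows essentially the same approach as the paper's proof: both directions in both items are handled by passing between the statement ``$\ELdim \geq m$'' and the existence of a suitable $(k,m)$-difficult extended mistake tree, and then invoking Lemma~\ref{lem:recursive} to decompose or assemble the tree at the root. Your extra bookkeeping (ruling out zeroth-order trees, and noting that in (ii) nonemptiness of $\calH[(x,\pm y)]$ is automatic from $\ELdim \geq 0$) is sound and merely makes explicit what the paper leaves implicit.
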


\begin{proof}[Proof of Lemma~\ref{lem:eldimrecursive}]~\\
\noindent\textbf{Proof of item (i):}
We show the implication in both directions.
\begin{enumerate}
\item[($\Rightarrow$)]
Suppose $\ELdim(\calH, 0) \geq m$. Then $\calH$ has a $(0,m)$-difficult mistake tree $T$.
Let $x$ be the root of $T$, and $y \in \cbr{-1,+1}$ be the label of the root's
downward dashed edge. Since $T$ is a full binary tree, there must be leaves
in both the left subtree and the right subtree of the root, i.e. there exist
$h_1, h_2$ in $\calH$, $h_1(x) = -1$ and $h_2(x) = -1$. Thus, $x$ is in $\DIS(\calH)$.
By Lemma~\ref{lem:recursive}, $T_y$ is a $(0,m-1)$-difficult extended mistake tree
with respect to $\calH[(x,y)]$. The result follows.

\item[($\Leftarrow$)]
Suppose there exists an example $(x,y)$ such that $x \in \DIS(\calH)$
and $\ELdim(\calH[(x,y)], 0) \geq m-1$. Then, $\calH[(x,y)]$ has a $(0,m-1)$-difficult
 extended mistake tree $T_y$ and $\calH[(x,-y)]$ has a zeroth order mistake tree $T_{-y}$.
 Construct a new tree $T$, where its root is $x$, and its subtrees are $T_y$ and $T_{-y}$ respectively.
 The dashed downward edge is connected to the subtree $T_y$. By Lemma~\ref{lem:recursive},
 $T$ is a $(0,m)$-difficult extended mistake tree with respect to $\calH$. The result follows.
\end{enumerate}

\noindent\textbf{Proof of item (ii):}
We show the implication in both directions.
\begin{enumerate}
\item[($\Rightarrow$)]
Suppose $\ELdim(\calH, k) \geq m$. Then $\calH$ has a $(k,m)$-difficult mistake tree $T$.
Let $x$ be the root of $T$, and $y \in \cbr{-1,+1}$ be the root's downward dashed edge label.
By Lemma~\ref{lem:recursive}, $T_{-y}$ is a $(k-1,m-1)$-difficult extended mistake tree with
respect to $\calH[(x,-y)]$, and $T_y$ is a $(k,m-1)$-difficult extended mistake tree with
respect to $\calH[(x,y)]$. The result follows.

\item[($\Leftarrow$)]
Suppose there exists an example $(x,y)$ such that both $\ELdim(\calH[(x,y)], k) \geq m-1$
and $\ELdim(\calH[(x,-y)], k-1) \geq m-1$. Then, $\calH[(x,y)]$ has a $(k,m-1)$-difficult
extended mistake tree $T_y$ and $\calH[(x,-y)]$ has a $(k-1,m-1)$-difficult extended mistake tree $T_{-y}$.
Now construct a new tree $T$, where its root is $x$, and its subtrees are $T_y$ and $T_{-y}$ respectively.
The dashed downward edge is connected to the subtree $T_y$. By Lemma~\ref{lem:recursive},
$T$ is a $(k,m)$-difficult extended mistake tree with respect to $\calH$. The result follows.
\end{enumerate}

\end{proof}

\begin{proof}[Proof of Lemma~\ref{lem:emtusage}]
Since $\ELdim(\calH, k) \geq m$, there is a $(k,m)$-difficult extended mistake tree $T_{\calH}$ with respect to $\calH$. We consider the the strategy of the adversary associated with $T_{\calH}$.
Now consider any deterministic learning algorithm $\calA$ that guarantees at most $k$ mistakes.
Since $\calA$ is deterministic, the interaction between $\calA$ and the adversary follows some path $p$ from root to leaf.
The number of mistakes is equal to the number of solid edges in $p$, and the number of abstentions is equal to the number of dashed edges in the $p$.
Since $\calA$ guarantees $k$ mistakes, $p$ must contain at most $k$ solid edges, thus it must be of length at least $m$, as $T_{\calH}$ is $(k,m)$-difficult.
Therefore, the number of nontrivial rounds of $\calA$ is at least $m$.
\end{proof}

\begin{proof}[Proof of Lemma~\ref{lem:perfguar}]
We prove the lemma by joint induction on $(k,m)$.
\paragraph{Base Case.} Consider pairs $(k, m)$, where $k = 0$ or $m = 0$.
\begin{enumerate}[(1)]
\item For $m = 0$ and $k \geq 0$, if there is no $(k,1)$-difficult extended mistake tree, then for all $x \in \calX$, $V$ predicts unanimously on $x$.
Otherwise, there are two hypotheses $h_1$ and $h_2$ and an example $x$ such that $h_1(x) = -1$ and $h_2(x) = +1$. Consider extended mistake tree $T$ as follows. $T$ has $x$ as its root, and $h_1$ and $h_2$ are leaves directly connecting to the root, where $h_1$ is on the left and $h_2$ is on the right. The downward dashed edge is connected to the right, i.e. has label $+1$. It can be seen that $T$ is $(k,1)$-difficult for any $k \geq 0$. Therefore, Algorithm~\ref{alg:esoa} always predicts correctly, and there will be no nontrivial rounds subsequently.

\item For $k = 0$ and $m \geq 0$, we show the result by induction on $m$. The base case $m = 0$ has been shown in (1). For the inductive case, assume the inductive hypothesis holds for $m' \leq m-1$.
Now, given a hypothesis class $V$ such that $\ELdim(V, 0)$ is at most $m$. Consider the first nontrivial round $t$ when running Algorithm~\ref{alg:esoa} with version space $V$. The example $x_t$ must be in $\DIS(V)$, and the algorithm outputs $\hat{y}_t = \bot$. We claim that the resulting version space $V[(x_t, y_t)]$ is such that $\ELdim(V[(x_t,y_t)],0) \leq m-1$. Indeed, suppose $\ELdim(V[(x_t,y_t)],0) \geq m$, then by Lemma~\ref{lem:eldimrecursive}, $\ELdim(V,0) \geq m+1$, which is a contradiction.

Note that from time $t+1$ on, the adversary is only allowed to show $V[(x_t, y_t)]$-realizable sequences. By inductive hypothesis, Algorithm~\ref{alg:esoa} runs on $V[(x_t,y_t)]$ and achieves $(0,m-1)$-SZB bound from time $t+1$ on. Therefore, Algorithm~\ref{alg:esoa} achieves $(0,m)$-SZB bound throughout the process.
\end{enumerate}

\paragraph{Inductive Case.} Consider pairs $(k,m)$ where $k \geq 1$ and $m \geq 1$. Assume for all $k' \leq k$, $m' \leq m$ and $k'+m' \leq k+m-1$, the inductive hypothesis holds. Now, consider a hypothesis class $V$ such that $\ELdim(V,k) \leq m$.
Consider the first nontrivial round $t$ when we run Algorithm~\ref{alg:esoa} on $V$. The example $x_t$ must be in $\DIS(V)$. According to Algorithm~\ref{alg:esoa}'s prediction $\hat{y}_t$, we consider three cases separately,

\paragraph{Case 1: $\hat{y}_t = -1$.} In this case, since round $t$ is nontrivial, $y_t = -\hat{y}_t = +1$. We claim that $\ELdim(V[(x_t,+1)],k-1) \leq m-1$. Indeed, assume (for the sake of contradiction) that $m_{-1} \geq \ELdim(V[(x_t,+1)],k-1) \geq m$. By definition of Algorithm~\ref{alg:esoa}, $\ELdim(V[(x_t,-1)],k-1) = m_{+1} \geq m_{-1} \geq m$. Hence, for any $y \in \cbr{-1, +1}$, $\ELdim(V[(x_t,y)],k-1) \geq m$.

Also by definition of Algorithm~\ref{alg:esoa}, $\max(\ELdim(V[(x_t,+1)],k), \ELdim(V[(x_t,-1)],k)) = m_\bot \geq m$.
Thus, there exists some $\hat{y} \in \cbr{-1, +1}$ such that
\[ \ELdim(V[(x_t,\hat{y})],k) \geq m \]
Therefore, for $\hat{y}$, we have $\ELdim(V[(x_t,\hat{y})],k) \geq m$ and $\ELdim(V[(x_t,-\hat{y})],k-1) \geq m$. By Lemma~\ref{lem:eldimrecursive}, $\ELdim(V, k) \geq m+1$, which is a contradiction.


Note that from time $t+1$ on, the adversary is only allowed to show $V[(x_t, y_t)]$-realizable sequences. By inductive hypothesis, Algorithm~\ref{alg:esoa} runs on $V[(x_t,y_t)]$ with mistake budget $k-1$ and achieves $(k-1,m-1)$-SZB bound from round $t+1$ on. Therefore, Algorithm~\ref{alg:esoa} achieves $(k,m)$-SZB bound throughout the process.

\paragraph{Case 2: $\hat{y}_t = +1$.} This case is symmetric to Case 1.

\paragraph{Case 3: $\hat{y}_t = \bot$.}
We first claim that $\ELdim(V[(x_t,-1)], k) \leq m-1$. Indeed, assume  (for the sake of contradiction) that $\ELdim(V[(x_t,-1)],k) \geq m$. By definition of Algorithm~\ref{alg:esoa}, $m_{-1} \geq m_\bot \geq m$, that is
\[ \ELdim(V[(x_t,+1)],k-1) \geq m\]
By Lemma~\ref{lem:eldimrecursive}, $\ELdim(V, k) \geq m+1$, contradiction. Symmetrically, one also has $\ELdim(V[(x_t,+1)], k) \leq m-1$.



Hence, irrespective of the outcome $y_t \in \cbr{-1,+1}$, the resulting version space $V[(x_t,y_t)]$ satisfies that $\ELdim(V[(x_t,y_t)], k) \leq m-1$. Note that from time $t+1$ on, the adversary is only allowed to show $V[(x_t, y_t)]$-realizable sequences. By inductive hypothesis, Algorithm~\ref{alg:esoa} runs on $V[(x_t,y_t)]$ with mistake budget $k$, and achieves $(k,m-1)$-SZB bound from round $t+1$ on. Therefore, Algorithm~\ref{alg:esoa} achieves $(k,m)$-SZB bound throughout the process.

In summary, Algorithm~\ref{alg:esoa}, when run on $V$, achieves $(k,m)$-SZB bound. This completes the induction.
\end{proof}

\begin{proof}[Proof of Theorem~\ref{thm:optimality}]

\begin{enumerate}[(a)]
\item This is an immediate consequence of Lemma~\ref{lem:perfguar}.
\item By Lemma~\ref{lem:emtusage}, there is a strategy of the adversary such that any deterministic learner guaranteeing at most $k$ mistakes must have at least $m$ nontrivial rounds. Therefore, no deterministic learner can achieve a $(k,m-1)$-SZB bound.
\end{enumerate}
\end{proof}

\begin{proof}[Proof of Theorem~\ref{thm:eldimldim}]

Recall that $\Ldim(\calH) = d < \infty$. We show the equality by showing inequalities
in both sides.
\begin{enumerate}[(1)]
\item We first show $\ELdim(\calH, d) \leq d$.
Indeed, SOA is guaranteed to make at most $d$ mistakes and no abstentions
for $\calH$-realizable sequences. This has a total of at most $d$ nontrivial rounds.
Now, by Lemma~\ref{lem:emtusage}, if $\ELdim(\calH, d) \geq d + 1$,
SOA must have at least $d+1$ nontrivial rounds, contradiction.
\item On the other hand, since $\Ldim(\calH) = d$,
 there is a depth-$d$ mistake tree $T$ with respect to $\calH$.
 Consider the following modification of $T$: for each internal node,
 add a dashed downward edge to its right child. It can be seen that the resulting tree,
  $\tilde{T}$, is a $(d,d)$-difficult extended mistake tree.
  Therefore $\ELdim(\calH, d) \geq d$.
\end{enumerate}
In summary, $\ELdim(\calH, d) = d$.
\end{proof}

\section{Proofs from Section~\ref{sec:eldimension}}
\label{sec:pfeldimension}

\begin{proof}[Proof of Lemma~\ref{lem:finitegrowth}]
For any depth-$t$ tree $\x$, note that
\[ |S(\calH, \x)| \leq |\calH|\]
Therefore,
\[ \calS(\calH, t) = \max_{\x} |S(\calH, \x)| \leq |\calH|. \qedhere \]
\end{proof}


\begin{lemma}[Recursive Formula]
For a hypothesis class $\calH$ and $t \geq 1$, we have
\[ \calS(\calH,t) = \max_{x \in \calX} (\calS(\calH[(x, -1)],t-1) + \calS(\calH[(x, +1)],t-1)) \]
\label{lem:recursivegrowth}
\end{lemma}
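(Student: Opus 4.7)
The plan is to show equality by establishing a natural bijective correspondence between paths in a depth-$t$ tree and paths in its two depth-$(t-1)$ subtrees, which will give both inequalities simultaneously.

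First I would set up the decomposition. Given a depth-$t$ $\calX$-valued tree $\x=(\x_1,\ldots,\x_t)$, let $x=\x_1$ be the root, and define the left subtree $\x^{-1}$ and right subtree $\x^{+1}$ of depth $t-1$ by
\[ \x^{y}_s(\delta_1,\ldots,\delta_{s-1}) := \x_{s+1}(y,\delta_1,\ldots,\delta_{s-1}), \quad s=1,\ldots,t-1,\ y\in\{\pm 1\}. \]
The key identity to prove is
\[ |S(\calH,\x)| = |S(\calH[(x,-1)],\x^{-1})| + |S(\calH[(x,+1)],\x^{+1})|. \]
To see this, partition $S(\calH,\x)$ by the first coordinate $\epsilon_1$. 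For $\epsilon=(\epsilon_1,\ldots,\epsilon_t)\in S(\calH,\x)$ with $\epsilon_1=y$, there exists $h\in\calH$ with $h(x)=y$ (so $h\in\calH[(x,y)]$) and $h(\x_s(\epsilon))=\epsilon_s$ for $s=2,\ldots,t$. Unrolling the index shift shows that this is equivalent to $(\epsilon_2,\ldots,\epsilon_t)\in S(\calH[(x,y)],\x^{y})$. Hence truncating the first coordinate gives a bijection between the $\epsilon_1=y$ slice of $S(\calH,\x)$ and $S(\calH[(x,y)],\x^{y})$.

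From this identity, the upper bound follows by taking $x=\x_1$ and bounding each subtree term by its tree shattering coefficient:
\[ |S(\calH,\x)| \leq \calS(\calH[(x,-1)],t-1)+\calS(\calH[(x,+1)],t-1) \leq \max_{x'\in\calX}\bigl(\calS(\calH[(x',-1)],t-1)+\calS(\calH[(x',+1)],t-1)\bigr), \]
and then taking the max over $\x$ on the left. For the lower bound, given any $x\in\calX$ and optimal depth-$(t-1)$ trees $\x^{-1}$, $\x^{+1}$ achieving $\calS(\calH[(x,-1)],t-1)$ and $\calS(\calH[(x,+1)],t-1)$, glue them together with root $x$ to form a depth-$t$ tree $\x$; the same identity gives the matching inequality.

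The main obstacle is purely bookkeeping: one must be careful that the index shift $\x^{y}_s(\cdot)=\x_{s+1}(y,\cdot)$ is reflected correctly when translating a sequence $(\epsilon_1,\ldots,\epsilon_t)$ with $\epsilon_1=y$ into a sequence $(\epsilon_2,\ldots,\epsilon_t)$ for the subtree, and that the witnessing hypothesis $h$ really lies in $\calH[(x,y)]$. Once this translation is set up cleanly, the partition argument is immediate and both inequalities fall out of the same identity.
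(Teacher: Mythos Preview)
Your proof is correct and follows essentially the same approach as the paper: partition $S(\calH,\x)$ by the first coordinate $\epsilon_1$, identify each slice with $S(\calH[(x,\epsilon_1)],\x^{\epsilon_1})$ via the index shift, and then observe that the maximum over depth-$t$ trees factors into a maximum over the root and independent maxima over the two subtrees. The paper collapses your separate upper- and lower-bound steps into a single equality by directly splitting the outer $\max_{\x}$ as $\max_{\x_1}\max_{\x^l}\max_{\x^r}$, but the underlying identity and bijection are the same.
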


We need the following notation of subtrees to give the proof of Lemma~\ref{lem:recursivegrowth}.
\begin{definition}[Subtrees, see~\cite{RST10}]
Given a depth-$t$ tree $\x$, the left subtree $\x^l$ of $\x$ at the root is defined as $t-1$ mappings $(\x_1^l, \ldots, \x_{t-1}^l)$, where $\x_i^l(\epsilon) = \x(\cbr{-1} \times \epsilon)$, for $\epsilon \in \cbr{\pm 1}^{t-1}$. The right subtree $\x^r$ of $\x$ at the root is defined as $t-1$ mappings $(\x_1^r, \ldots, \x_{t-1}^r)$, where $\x_i^r(\epsilon) = \x(\cbr{+1} \times \epsilon)$, for $\epsilon \in \cbr{\pm 1}^{t-1}$.
\end{definition}

\begin{proof}[Proof of Lemma~\ref{lem:recursivegrowth}]
Consider the definition of $\calS(\calH,t)$:
\[ \max_\x |\cbr{ (\epsilon_1, \epsilon_2, \ldots, \epsilon_t) \in \cbr{\pm 1}^t: \epsilon_1 = h(\x_1(\epsilon)), \epsilon_2 = h(\x_2(\epsilon)), \ldots, \epsilon_t = h(\x_t(\epsilon)), h \in \calH }| \]
This can be alternatively written as
\begin{eqnarray*}
\max_{\x} |\cbr{ (-1, \sigma_1, \ldots, \sigma_{t-1}) \in \cbr{\pm 1}^{t-1}: \sigma_1 = h(\x_1^l(\sigma)), \ldots, \sigma_t = h(\x_{t-1}^l(\sigma)), h \in \calH[(\x_1,-1)] } \\
\cup \cbr{ (+1, \sigma_1, \ldots, \sigma_{t-1}) \in \cbr{\pm 1}^{t-1}: \sigma_2 = h(\x_1^r(\sigma)), \ldots, \sigma_t = h(\x_{t-1}^r(\sigma)), h \in \calH[(\x_1,+1)] }|
\end{eqnarray*}
The above is equal to
\begin{eqnarray*}
\max_{\x_1 \in \calX} \{ \max_{\x^l} |\cbr{ (-1, \sigma_1, \ldots, \sigma_{t-1}) \in \cbr{\pm 1}^{t-1}: \sigma_1 = h(\x_1^l(\sigma)), \ldots, \sigma_{t-1} = h(\x_{t-1}^l(\sigma)), h \in \calH[(\x_1,-1)] } \\
+ \max_{\x^r} \cbr{ (+1, \sigma_1, \ldots, \sigma_{t-1}) \in \cbr{\pm 1}^{t-1}: \sigma_1 = h(\x_1^l(\sigma)), \ldots, \sigma_{t-1} = h(\x_{t-1}^l(\sigma)), h \in \calH[(\x_1,+1)] }| \}
\end{eqnarray*}
Note that the right hand side is precisely $\max_{x_1 \in \calX} (\calS(\calH[(x_1, -1)],t-1) + \calS(\calH[(x_1, +1)],t-1))$. The lemma follows.
\end{proof}

Now we are ready to prove Lemma~\ref{lem:ldimgrowth}.

\begin{proof}[Proof of Lemma~\ref{lem:ldimgrowth}]
We prove the result by joint induction on $(t,d)$.
\paragraph{Base Case:} Consider $t = 0$ or $d = 0$. If $t = 0$, then $\calS(\calH,0) \leq 1 = \binom{0}{\leq d}$. If $d = 0$, then $\calS(\calH,t) \leq 1 = \binom{t}{\leq 0}$.

\paragraph{Inductive Case:} For $t \geq 1$ and $d \geq 1$, assume the result holds for $(t',d')$ such that $t' \leq t$, $d' \leq d$ and $t'+d' \leq t+d-1$.
First by Lemma~\ref{lem:recursivegrowth}, for some $x$ in $\calX$, $\calS(\calH, t) \leq \calS(\calH[(x,-1)],t-1) + \calS(\calH[(x,+1)],t-1)$.

Second, Since $\Ldim(\calH) = d$, for $x$, there exists $y \in \cbr{-1, +1}$ such that $\Ldim(\calH[(x, y)]) \leq d-1$ and $\Ldim(\calH[(x,-y)])) \leq d$. Hence by inductive hypothesis, there exists $y \in \cbr{-1,+1}$ such that $\calS(\calH[(x,y)],t-1) \leq \binom{t-1}{\leq d-1}$ and $\calS(\calH[(x,-y)],t-1) \leq \binom{t-1}{\leq d}$. Therefore
\[ \calS(\calH,t) \leq \calS(\calH[(x,-1)],t-1) + \calS(\calH[(x,+1)],t-1) \leq \binom{t-1}{\leq d-1} + \binom{t-1}{\leq d} \leq \binom{t}{\leq d}\]
This completes the induction.
\end{proof}

\begin{proof}[Proof of Theorem~\ref{thm:growth}]
For any integer $m$, if $m \leq \ELdim(\calH, k)$, then by Lemma~\ref{lem:growth},
\[ \calS(\calH,m) \geq \binom{m}{\leq k+1} \]
This implies that
\[ m \leq \sup\left\{t: \binom{t}{\leq k+1} \leq \calS(\calH,t)\right\} \]
Taking $m = \ELdim(\calH, k)$, we get the theorem.
\end{proof}

\begin{lemma}
Suppose $k, t$ are nonnegative integers. If $\ELdim(\calH,k) \geq t$, then $\calS(\calH,t) \geq \binom{t}{\leq k+1}$.
\label{lem:growth}
\end{lemma}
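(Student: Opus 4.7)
My plan is to prove the statement by induction on $t$, using the recursive properties of both $\ELdim$ (Lemma~\ref{lem:eldimrecursive}) and $\calS$ (Lemma~\ref{lem:recursivegrowth}), together with Pascal's identity in the form $\binom{t-1}{\leq k+1} + \binom{t-1}{\leq k} = \binom{t}{\leq k+1}$.

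For the base case $t = 0$, we have $\binom{0}{\leq k+1} = 1$, and the hypothesis $\ELdim(\calH, k) \geq 0$ forces $\calH$ to be nonempty (a $(k,0)$-difficult tree exists, which requires at least one leaf), so $\calS(\calH, 0) = 1$ by definition.

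For the inductive step with $t \geq 1$, assume the claim holds for all smaller depths and all $k \geq 0$. I would split into two cases based on $k$. When $k \geq 1$, Lemma~\ref{lem:eldimrecursive}(ii) supplies an example $x$ and a label $y$ with $\ELdim(\calH[(x,y)], k) \geq t-1$ and $\ELdim(\calH[(x,-y)], k-1) \geq t-1$. The inductive hypothesis then gives $\calS(\calH[(x,y)], t-1) \geq \binom{t-1}{\leq k+1}$ and $\calS(\calH[(x,-y)], t-1) \geq \binom{t-1}{\leq k}$. Plugging these into Lemma~\ref{lem:recursivegrowth} yields
\[
\calS(\calH, t) \;\geq\; \calS(\calH[(x,-1)], t-1) + \calS(\calH[(x,+1)], t-1) \;\geq\; \binom{t-1}{\leq k+1} + \binom{t-1}{\leq k} \;=\; \binom{t}{\leq k+1}.
\]

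When $k = 0$, Lemma~\ref{lem:eldimrecursive}(i) provides $(x,y)$ with $x \in \DIS(\calH)$ and $\ELdim(\calH[(x,y)], 0) \geq t-1$. The inductive hypothesis gives $\calS(\calH[(x,y)], t-1) \geq \binom{t-1}{\leq 1}$, while $x \in \DIS(\calH)$ ensures that $\calH[(x,-y)]$ is nonempty, so $\calS(\calH[(x,-y)], t-1) \geq 1 = \binom{t-1}{\leq 0}$. Applying Lemma~\ref{lem:recursivegrowth} and Pascal's identity completes this case analogously. The only real subtlety is keeping the $k=0$ branch separate, since Lemma~\ref{lem:eldimrecursive} has a distinct recurrence for it; once that is handled, the rest is mechanical combinatorial bookkeeping.
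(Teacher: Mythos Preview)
Your proposal is correct and follows essentially the same approach as the paper's proof: both combine the recursive characterization of $\ELdim$ (Lemma~\ref{lem:eldimrecursive}) with the recursive formula for $\calS$ (Lemma~\ref{lem:recursivegrowth}) and Pascal's identity, handling the $k=0$ branch separately. The only cosmetic difference is that the paper organizes this as a joint induction on $(k,t)$ while you induct on $t$ with the statement quantified over all $k$; the underlying argument and the key inequalities are identical.
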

\begin{proof}[Proof of Lemma~\ref{lem:growth}]
By joint induction on $(k,t)$.
\paragraph{Base Case:} We consider $(k,t)$ pairs where $k = 0$ or $t = 0$.
\begin{enumerate}[(1)]
\item For $t=0$, $\ELdim(\calH,k) \geq 0$ implies that $\calH$ is nonempty. Thus, $\calS(\calH,0) = 1 \geq \binom{0}{\leq k+1}$.
\item For $k=0$, we prove the result by induction on $t$. The case of $t=0$ has been shown in (1). For the inductive case, by Lemma~\ref{lem:eldimrecursive}, there exists $(x,y)$ such that $x \in \DIS(\calH)$ and $\ELdim(\calH[(x,y)], 0) \geq t - 1$. Thus, by inductive hypothesis, $\calS(\calH[(x,y)], t - 1) \geq t$. Also, since $\calH[(x,-y)]$ is nonempty, we get $\calS(\calH[(x,-y)], t - 1) \geq 1$. Thus,
\[ \calS(\calH, t) \geq \calS(\calH[(x,y)], t - 1) + \calS(\calH[(x,-y)], t - 1) \geq t + 1\]
This completes the proof for $k = 0$.
\end{enumerate}
\paragraph{Inductive Case:} For $t \geq 1$ and $k \geq 1$, suppose the inductive hypothesis holds for any $(k',t')$ such that $k' \leq k$, $t' \leq t$, $k' + t' \leq k + t + 1$.

Now suppose $\ELdim(\calH, k) \geq t$. By Lemma~\ref{lem:eldimrecursive}, there exists $(x,y)$ such that $\ELdim(\calH[(x,y)], k) \geq t - 1$ and $\ELdim(\calH[(x,-y)], k-1) \geq t - 1$.
Thus by inductive hypothesis, $\calS(\calH[(x,y)],t-1) \geq \binom{t-1}{\leq k+1}$ and $\calS(\calH[(x,-y)],t-1) \geq \binom{t-1}{\leq k}$. Therefore,
\[ \calS(\calH,t) \geq \calS(\calH[(x,y)],t-1) + \calS(\calH[(x,-y)],t-1) \geq \binom{t}{\leq k+1}\]
This completes the induction.
\end{proof}

\begin{proof}[Proof of Theorem~\ref{thm:thresholds}]
Note that $\calS(\calH, t) \leq |\calH| = n$, therefore by Lemma~\ref{lem:growth},
$\ELdim(\calH, k) \leq \max \cbr{t: \binom{t}{\leq k+1} \leq n}$.

On the other hand, Lemma~\ref{lem:optmt} implies that for all $m$ such that $\binom{m}{\leq k+1} \leq n$, there is a $(m,k)$-difficult extended mistake tree with respect to $\calH$. Hence
$\ELdim(\calH, k) \geq \max \cbr{t: \binom{t}{\leq k+1} \leq n}$
Combining the lower and upper bound, we get the theorem.
\end{proof}
\begin{lemma}
Consider the set of threshold classifiers $\calH = \cbr{2I(x \leq t) - 1: t \in \cbr{t_1, \ldots, t_n}}$. If integers $k \geq 0$ and $m \geq 0$ are such that $\binom{m}{k+1} \leq n$, then $\calH$ has a $(k,m)$-difficult mistake tree.
\label{lem:optmt}
\end{lemma}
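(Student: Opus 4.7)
The plan is to construct the tree explicitly by induction on $m$, with $k$, $n$, and the actual set of threshold values allowed to vary at each step. The engine is Pascal's identity
\[ \binom{m}{\leq k+1} \;=\; \binom{m-1}{\leq k+1} \;+\; \binom{m-1}{\leq k}, \]
which mirrors the recursive decomposition of extended mistake trees from Lemma~\ref{lem:recursive}. First, I would dispatch the base case $m = 0$: since $n \geq 1$, $\calH$ is nonempty, and any single classifier forms a zeroth-order extended mistake tree which is vacuously $(k, 0)$-difficult.

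For the inductive step with $m \geq 1$, I would sort the thresholds as $t_1 < \cdots < t_n$ and use Pascal's identity to write $n = n_1 + n_2$ with $n_1 = \binom{m-1}{\leq k}$ and $n_2 \geq \binom{m-1}{\leq k+1}$; note that $n_1, n_2 \geq 1$. Let $\calH_1$ consist of the $n_1$ smallest thresholds and $\calH_2$ of the remaining $n_2$. Pick an example $x^\star \in \calX$ strictly between $t_{n_1}$ and $t_{n_1 + 1}$, so that every classifier in $\calH_1$ predicts $-1$ on $x^\star$ and every classifier in $\calH_2$ predicts $+1$ on $x^\star$.

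Next, apply the inductive hypothesis to $\calH_2$ to obtain a $(k, m-1)$-difficult extended mistake tree $T_2$. For the left side, if $k \geq 1$ the inductive hypothesis yields a $(k-1, m-1)$-difficult tree $T_1$ on $\calH_1$; if $k = 0$, then $n_1 = 1$ and $T_1$ is simply the single-leaf tree. Assemble $T$ with root $x^\star$, left subtree $T_1$, right subtree $T_2$, the two downward solid edges from $x^\star$ to the roots of $T_1$ and $T_2$, and the downward dashed edge from $x^\star$ to the root of $T_2$ (labeled $+1$). Consistency of every leaf with its root-to-leaf path is immediate from the choice of $x^\star$.

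To verify $(k, m)$-difficulty, take any root-to-leaf path $p$ of $T$ using at most $k$ solid edges and case-split on the first edge. If it is a solid edge to either child, then $k \geq 1$ (otherwise $p$ would already exceed the budget of $k = 0$ solid edges), and the remainder of $p$ lies in $T_1$ or $T_2$ and uses at most $k - 1$ solid edges, so the remainder has length $\geq m - 1$ by $(k-1, m-1)$-difficulty of $T_1$ or $(k, m-1)$-difficulty of $T_2$. If the first edge is the dashed edge, the remainder lies in $T_2$ and uses at most $k$ solid edges, and again has length $\geq m - 1$. Thus $l(p) \geq m$ in every case. The main design choice (and the step that takes a moment to get right) is routing the dashed edge to the side with the larger share $\binom{m-1}{\leq k+1}$ of the thresholds, so that the stronger $(k, m-1)$-difficulty demanded on the dashed side matches its threshold count through Pascal's identity; everything else is straightforward structural bookkeeping made possible by the fact that every downward version space of a threshold class is again a threshold class.
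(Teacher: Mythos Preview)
Your proof is correct and follows essentially the same approach as the paper: split the $n$ thresholds via Pascal's identity into a block of size $\binom{m-1}{\le k}$ and a block of size at least $\binom{m-1}{\le k+1}$, pick a separating point $x^\star$ between the two blocks, recurse to build $(k-1,m-1)$- and $(k,m-1)$-difficult subtrees, and attach the dashed edge to the $(k,m-1)$ side. The only cosmetic differences are that the paper organizes the argument as a joint induction on $(k,m)$ with an explicit base construction for $k=0$ (the chain in their figure) and invokes Lemma~\ref{lem:recursive} for the difficulty check, whereas you induct on $m$ alone, absorb $k=0$ as a degenerate case in the inductive step, and verify $(k,m)$-difficulty by a direct path case-split; unrolling your recursion at $k=0$ reproduces exactly their chain.
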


\begin{proof}
We prove the lemma by joint induction on $(k,m)$.

\paragraph{Base Case:} Consider $k = 0$ or $m = 0$.
\begin{enumerate}[(1)]
\item For $k = 0$, $\binom{m}{\leq k} = m+1$. We show a construction of $T_{0,m}$, a $(0,m)$-difficult extended mistake tree in Figure~\ref{fig:basecasethreshold}. It can be seen that the resulting tree $T_{0,m}$ is $(0,m)$-difficult, as the only root to leaf path using no solid edges corresponds to examples $(t_2,+1)$, $\ldots$, $(t_{m+1}, +1)$, which has length $m$.
\begin{figure}
\centering
\scalebox{1} 
{
\begin{pspicture}(0,-2.6692188)(15.449062,2.6692188)
\psline[linewidth=0.04cm,tbarsize=0.07055555cm 5.0]{|-|}(7.41,0.93578124)(14.95,0.9557812)
\psline[linewidth=0.04cm](8.33,1.0557812)(8.33,0.7757813)
\psline[linewidth=0.04cm](9.25,1.0357813)(9.25,0.75578123)
\psline[linewidth=0.04cm](14.23,1.0957812)(14.23,0.81578124)
\usefont{T1}{ppl}{m}{n}
\rput(8.334531,0.5257813){$t_1$}
\usefont{T1}{ppl}{m}{n}
\rput(9.294531,0.5257813){$t_2$}
\usefont{T1}{ppl}{m}{n}
\rput(14.374531,0.5257813){$t_{m+1}$}
\usefont{T1}{ppl}{m}{n}
\rput(1.4945313,2.4657812){$t_2$}
\psline[linewidth=0.04cm](1.27,2.2157812)(0.65,1.5757812)
\psline[linewidth=0.04cm](1.63,2.2557812)(2.37,1.5957812)
\usefont{T1}{ppl}{m}{n}
\rput(0.5445312,1.9857812){$-1$}
\usefont{T1}{ppl}{m}{n}
\rput(2.5645313,1.9857812){$+1$}
\usefont{T1}{ppl}{m}{n}
\rput(2.4345312,1.3457812){$t_3$}
\psline[linewidth=0.04cm](2.27,1.0957812)(1.65,0.45578125)
\psline[linewidth=0.04cm](2.63,1.1357813)(3.37,0.47578126)
\usefont{T1}{ppl}{m}{n}
\rput(1.4445312,0.87578124){$-1$}
\usefont{T1}{ppl}{m}{n}
\rput(3.5645313,0.87578124){$+1$}
\usefont{T1}{ppl}{m}{n}
\rput(5.2145314,-1.2342187){$t_{m+1}$}
\psline[linewidth=0.04cm](4.57,-1.5242188)(3.95,-2.1642187)
\psline[linewidth=0.04cm](4.93,-1.4842187)(5.67,-2.1442187)
\usefont{T1}{ppl}{m}{n}
\rput(3.7445312,-1.7542187){$-1$}
\usefont{T1}{ppl}{m}{n}
\rput(5.864531,-1.7542187){$+1$}
\usefont{T1}{ppl}{m}{n}
\rput(0.47453126,1.3057812){$h_1$}
\usefont{T1}{ppl}{m}{n}
\rput(1.6545312,0.20578125){$h_2$}
\usefont{T1}{ppl}{m}{n}
\rput(6.2945313,-2.4342186){$h_{m+1}$}
\usefont{T1}{ppl}{m}{n}
\rput(4.094531,-2.4142187){$h_{m}$}
\psline[linewidth=0.04cm,linestyle=dashed,dash=0.16cm 0.16cm](1.47,2.1557813)(2.23,1.5357813)
\psline[linewidth=0.04cm,linestyle=dashed,dash=0.16cm 0.16cm](2.51,1.0557812)(3.31,0.37578124)
\psline[linewidth=0.04cm,linestyle=dashed,dash=0.16cm 0.16cm](4.79,-1.5442188)(5.59,-2.2242188)
\usefont{T1}{ppl}{m}{n}
\rput(4.1045313,-0.47421876){$\ldots$}
\end{pspicture} 
}
\caption{Construction of $T_{0,m}$, an extended mistake tree given parameters $k = 0$ and $m \geq 0$. For each $i$, $h_i$ is defined as $h_i(x) := 2I(x \leq t_i)-1$.}
\label{fig:basecasethreshold}
\end{figure}
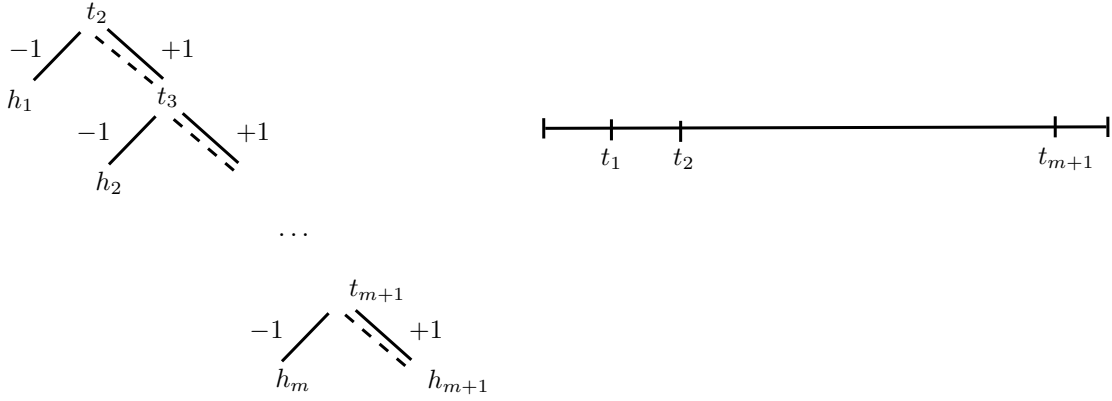

\item For $m = 0$ and integer $k$, $\binom{m}{\leq k} = 1$. The zeroth order extended mistake tree containing $h_{t_1}$ is a $(k,0)$-difficult extended mistake tree.
\end{enumerate}
\paragraph{Inductive Case:} For $k \geq 1$ and $m \geq 1$, assume the inductive hypothesis holds for $(k',m')$ such that $k' \leq k$, $m' \leq m$ and $k' + m' \leq k + m - 1$.

We now construct $T_{k,m}$, a $(k,m)$-difficult extended mistake tree, using hypotheses in $\calH$.
Let $r_- = \binom{m-1}{\leq k}$, $r_+ = \binom{m-1}{\leq k+1}$. Consider hypothesis class $\calH_- = \cbr{2I(x \leq t) - 1: t \in \cbr{t_1, \ldots, t_{r_-}}}$ and $\calH_+ = \cbr{2I(x \leq t) - 1: t \in \cbr{t_{r_-+1}, \ldots, t_{r_- + r_+}}}$. Note that $r_- + r_+ \leq \binom{m-1}{\leq k} + \binom{m-1}{\leq k-1} \leq \binom{m}{\leq k} \leq n$, thus $\calH_-$ and $\calH_+$ are well defined.

Since $|\calH_-| \geq \binom{m-1}{\leq k}$, by inductive hypothesis, there is a $(k-1,m-1)$ difficult extended mistake tree $T_{k-1,m-1}$ with respect to $\calH_-$. Similarly, since $|\calH_+| \geq \binom{m-1}{\leq k+1}$, by inductive hypothesis, there is a $(k,m-1)$ difficult extended mistake tree $T_{k,m-1}$ with respect to $\calH_+$.

Now Let $x$ be a real number in $(t_{r_-},t_{r_- + 1})$, it can be seen that all hypotheses in $\calH_-$ classifies $x$ as $-1$ and all hypothesis in $\calH_+$ classifies $x$ as $+1$.
We construct $T_{k,m}$ as in Figure~\ref{fig:inductivecasethreshold}, where $x$ is at the root, and its downward left solid edge connect to $T_{k-1,m-1}$; its downward right solid edge and downward dashed edge connects to $T_{k,m-1}$. Note that $T_{k,m}$ is a valid extended mistake tree, since all hypotheses at the leaves in $T_{k-1,m-1}$ (resp. $T_{k-1,m}$) classifies $x$ as $-1$ (resp. $+1$).
By Lemma~\ref{lem:recursive}, $T_{k,m}$ is $(k,m)$-difficult.
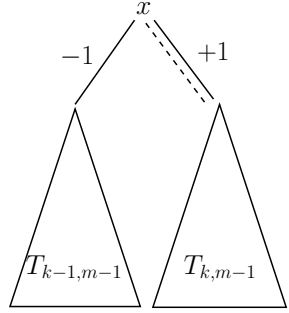
\begin{figure}
\centering
\scalebox{0.5} 
{
\huge
\begin{pspicture}(0,-4.076719)(7.3960705,4.116719)
\usefont{T1}{ppl}{m}{n}
\rput(3.5845313,3.9132812){$x$}
\usefont{T1}{ppl}{m}{n}
\rput(1.8145312,2.6332812){$-1$}
\psline[linewidth=0.04cm](3.34,3.5832813)(1.76,1.3432813)
\psline[linewidth=0.04cm](3.86,3.5832813)(5.44,1.4832813)
\usefont{T1}{ppl}{m}{n}
\rput(5.434531,2.7332811){$+1$}
\pstriangle[linewidth=0.04,dimen=outer](1.76,-4.056719)(3.52,5.34)
\pstriangle[linewidth=0.04,dimen=outer](5.6,-4.076719)(3.6,5.48)
\psline[linewidth=0.04cm,linestyle=dashed,dash=0.16cm 0.16cm](3.64,3.5032814)(5.2,1.4032812)
\usefont{T1}{ppl}{m}{n}
\rput(1.7345313,-3.0267189){$T_{k-1,m-1}$}
\usefont{T1}{ppl}{m}{n}
\rput(5.614531,-3.0267189){$T_{k,m-1}$}
\end{pspicture} 
\normalfont
}
\caption{Construction of $T_{k,m}$, an extended mistake tree given parameters $k \geq 1$ and $m \geq 1$, from $T_{k,m-1}$ and $T_{k-1,m-1}$.}
\label{fig:inductivecasethreshold}
\end{figure}
\end{proof}

\begin{lemma}
Let $\calC^l$ be the class of unions of at most $l$ singletons. Then
\[ \calS(\calC^l, t) = \binom{t}{\leq l}\]
\label{lem:tscus}
\end{lemma}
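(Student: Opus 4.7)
For the upper bound, my plan is to invoke Lemma~\ref{lem:ldimgrowth} directly. Since the paper has already noted (and it is easy to verify via the recursive definition of $\Ldim$) that $\Ldim(\calC^l) = l$, Lemma~\ref{lem:ldimgrowth} immediately yields $\calS(\calC^l, t) \leq \binom{t}{\leq l}$.

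For the lower bound, my plan is to exhibit a specific depth-$t$ $\calX$-valued tree $\x$ achieving $|S(\calC^l, \x)| = \binom{t}{\leq l}$. Since $|\calX| = \infty$, I can pick $t$ distinct points $z_1, \ldots, z_t \in \calX$ and take $\x$ to be the \emph{constant} tree defined by $\x_s(\epsilon) = z_s$ for every level $s$ and every path prefix $\epsilon$. For any $\epsilon \in \cbr{\pm 1}^t$, membership $\epsilon \in S(\calC^l, \x)$ reduces to the existence of $h \in \calC^l$ with $h(z_s) = \epsilon_s$ for all $s$. Because the $z_s$ are distinct, such an $h$ exists if and only if $|\cbr{s : \epsilon_s = -1}| \leq l$; in that case, the hypothesis that outputs $-1$ on exactly those $z_s$ and $+1$ elsewhere belongs to $\calC^l$ and realizes $\epsilon$. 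Counting such tuples yields $|S(\calC^l, \x)| = \binom{t}{\leq l}$, hence $\calS(\calC^l, t) \geq \binom{t}{\leq l}$.

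There is no real obstacle to this proof: both bounds fall out in a few lines once Lemma~\ref{lem:ldimgrowth} is available and the assumption $|\calX| = \infty$ is exploited to build a constant tree over $t$ distinct instances. The only minor point worth justifying is the identity $\Ldim(\calC^l) = l$; this follows from the recursive formula for $\Ldim$ by observing that, for any $x \in \calX$, $\calC^l[(x,-1)]$ is in bijection with $\calC^{l-1}$ on $\calX \setminus \cbr{x}$ and $\calC^l[(x,+1)]$ is in bijection with $\calC^l$ on $\calX \setminus \cbr{x}$, which gives $\Ldim(\calC^l) = 1 + \min(l, l-1) = 1 + \Ldim(\calC^{l-1})$ with base case $\Ldim(\calC^0) = 0$.
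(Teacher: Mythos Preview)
Your proposal is correct and follows essentially the same approach as the paper's proof: the upper bound via Lemma~\ref{lem:ldimgrowth} and $\Ldim(\calC^l)=l$, and the lower bound by exhibiting a tree on distinct instances for which every $\epsilon$ with at most $l$ entries equal to $-1$ is realized by the corresponding union-of-singletons hypothesis. The only cosmetic difference is that you use a \emph{constant} tree $\x_s(\epsilon)=z_s$ over $t$ distinct points, whereas the paper takes a tree with all nodes distinct; both choices make the instances along any root-to-leaf path distinct, which is all that is needed.
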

\begin{proof}
We show the equality by showing the inequality in both directions.
\begin{enumerate}[(1)]
\item $S(\calC^l, t) \leq \binom{t}{\leq l}$ From Lemma~\ref{lem:ldimgrowth}.

\item Consider a $\calX$-valued tree $\x$ with all its elements distinct. Then, consider the set
\[ S(\calC^l, \x) = \cbr{(\epsilon_1, \ldots, \epsilon_t): \exists h \in \calC^l, h(\x_s(\epsilon)) = \epsilon_s, s = 1,2,\ldots,t }\]
We claim that $S(\calC^l, \x)$ contains $\cbr{\epsilon = (\epsilon_1, \ldots, \epsilon_t): |\cbr{s: \epsilon_s = -1}| \leq l}$.
Indeed, for any element in $\cbr{\epsilon = (\epsilon_1, \ldots, \epsilon_t): |\cbr{s: \epsilon_s = -1}| \leq l}$, the hypothesis $h = 1 - 2I(x \in \cbr{\x_s(\epsilon): \epsilon_s = -1}) \in \calC^l$ satisfies that $h(\x_s(\epsilon)) = \epsilon_s$, for $s = 1,2,\ldots,t$. Hence $S(\calC^l, \x) \geq \binom{t}{\leq l}$, implying $\calS(\calC^l, t) \geq \binom{t}{\leq l}$.
\end{enumerate}

In summary, $S(\calC^l, t) = \binom{t}{\leq l}$.
\end{proof}

\begin{proof}[Proof of Theorem~\ref{thm:usingletons}]
We show the equality by showing the inequality in both directions.
\begin{enumerate}[(1)]
\item Consider the case that $k \leq l - 1$. By Lemma~\ref{lem:infmt},
for any integer $m$, there is a $(k,m)$-difficult extended mistake
tree with respect to $\calC^l$. Thus, $\ELdim(\calC^l, k) = \infty$.

\item Consider the case that $k \geq l$.
By Lemma~\ref{lem:tscus}, $\calS(\calC^l,t) = \binom{t}{\leq l}$.
By Theorem~\ref{thm:growth},
\[ \ELdim(\calC^l, k) \leq \max\left\{t: \binom{t}{\leq k+1} \leq \binom{t}{\leq l}\right\} = l \]
This gives that $\ELdim(\calC^l, k) \leq l$.

On the other hand, $\calC^l$ has a mistake tree $T$ of depth $l$.
Consider the following modification of $T$: for each internal node,
add a dashed downward edge to its right child. It can be seen that the resulting tree,
$\tilde{T}$, is an $(l,l)$-difficult extended mistake tree.
Therefore $T'$ is also a $(k,l)$-difficult mistake tree, which gives that $\ELdim(\calC^l, k) \geq l$.
\end{enumerate}
Hence, we conclude that $\ELdim(\calC^l, k) = l$.
\end{proof}

Recall that $\calC^l$ is the class of union of at most $l$ singletons in instance domain $\calX$. That is, hypotheses that take value $+1$ on $\calX$, except for at most $l$ points.
\begin{lemma}
Suppose we are given an infinite domain $\calX$ and an integer $l \geq 1$.
Then for any integer $m \geq 0$, there exists a $(l-1,m)$-difficult extended
mistake tree with respect to hypothesis class $\calC^l$, such that all its dashed edges are labeled $+1$.
\label{lem:infmt}
\end{lemma}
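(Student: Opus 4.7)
I plan to prove the lemma by induction on $m \geq 0$, treating the statement uniformly over all $l \geq 1$ at the same time (so that the two subtree recursions in the inductive step can both appeal to the inductive hypothesis at $m-1$, for whichever value of $l$ I need). The base case $m=0$ is immediate: a zeroth order extended mistake tree containing any single hypothesis of $\calC^l$ vacuously satisfies $(l-1,0)$-difficulty, with no dashed edges to label. The inductive step leans entirely on the recursive property of extended mistake trees, namely Lemma~\ref{lem:recursive}, which lets me paste two smaller subtrees under a fresh root and read off $(l-1,m)$-difficulty from the properties of those pieces.

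Concretely, in the inductive step I would pick an example $x \in \calX$ not yet appearing in the construction (possible since $\calX$ is infinite and the tree has only finitely many nodes), make $x$ the root, and attach the dashed edge to the right child so that it is labeled $+1$ as required. The right subtree only needs to handle hypotheses in $\calC^l[(x,+1)]$, which are unions of at most $l$ singletons drawn from $\calX \setminus \{x\}$; this set is naturally identified with $\calC^l$ on the infinite domain $\calX \setminus \{x\}$, so the inductive hypothesis at $m-1$ (same $l$) supplies a $(l-1,m-1)$-difficult subtree with dashed edges labeled $+1$. The left subtree must handle hypotheses in $\calC^l[(x,-1)]$; each such hypothesis is obtained by adjoining $x$ to a set $S' \subseteq \calX \setminus \{x\}$ of size at most $l-1$, which exhibits a bijection with $\calC^{l-1}$ on $\calX \setminus \{x\}$. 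When $l \geq 2$, the inductive hypothesis at $m-1$ applied with parameter $l-1$ yields a $(l-2,m-1)$-difficult subtree for $\calC^{l-1}$, whose leaves I relabel via this bijection to get hypotheses in $\calC^l$ that take value $-1$ at $x$. When $l = 1$, the left subtree degenerates to a single leaf carrying the hypothesis $z \mapsto 1 - 2I(z=x) \in \calC^1$.

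Verification of $(l-1,m)$-difficulty comes directly from Lemma~\ref{lem:recursive}. For $l=1$, hence $k=0$, item (i) requires only that the dashed (right) subtree be $(0,m-1)$-difficult, which holds by construction; the left leaf imposes no constraint. For $l \geq 2$, item (ii) requires that the $+1$ subtree be $(l-1,m-1)$-difficult and the $-1$ subtree be $(l-2,m-1)$-difficult, both of which hold by construction. The invariant that every dashed edge is labeled $+1$ propagates from the root to the two subtrees and is preserved inductively.

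The main obstacle I anticipate is the bookkeeping around running two parallel recursions in a single inductive step (one keeping $l$ fixed while decreasing $m$, the other dropping $l$ by one while decreasing $m$), together with the careful identification of $\calC^l[(x,-1)]$ with $\calC^{l-1}$ on $\calX \setminus \{x\}$ and the subsequent extension of each leaf hypothesis back to a genuine element of $\calC^l$ on $\calX$ by assigning value $-1$ at $x$. Once this identification and the splitting into the $l=1$ and $l \geq 2$ cases are spelled out, everything else reduces to a direct application of Lemma~\ref{lem:recursive}.
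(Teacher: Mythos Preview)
Your proposal is correct and follows a genuinely different induction scheme than the paper. The paper inducts on $l$: the base case $l=1$ is handled by an explicit ``chain'' construction (a right-leaning path of length $m$ through distinct points $x_1,\ldots,x_m$ with each dashed edge going right), and the inductive step splits $\calX\setminus\{x\}$ into two disjoint infinite pieces $\calX_1,\calX_2$ and builds the two subtrees over these separate domains. You instead induct on $m$, carrying the statement uniformly over all $l\geq 1$ and all infinite domains; your base case $m=0$ is trivial, and in the inductive step you invoke the hypothesis at $m-1$ twice (once with parameter $l$ for the right subtree, once with parameter $l-1$ for the left), then splice them under a fresh root via Lemma~\ref{lem:recursive}. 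Your approach has two minor advantages: the base case requires no explicit construction, and the difficulty parameters you obtain for the two subtrees, $(l-1,m-1)$ on the right and $(l-2,m-1)$ on the left, are \emph{exactly} the hypotheses of Lemma~\ref{lem:recursive}(ii), so there is no slack to track. You also correctly observe that the two subtrees may live on the same domain $\calX\setminus\{x\}$; the paper's partition into $\calX_1$ and $\calX_2$ is not needed for validity of the extended mistake tree. The only cosmetic point is that your phrase ``pick an example $x$ not yet appearing in the construction'' is slightly imprecise for a top-down induction---what you actually do (and say later) is apply the inductive hypothesis on the infinite domain $\calX\setminus\{x\}$, which is the right formulation.
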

\begin{proof}
By induction on $l$.

\paragraph{Base Case:} For $l = 1$, the construction of the required extended mistake
tree with respect to $\calC^1$ is given in Figure~\ref{fig:base}.
Note that the tree is $(0,m)$-difficult, and all its dashed edges are labeled $+1$.

\paragraph{Inductive Case:} Suppose the inductive hypothesis holds for any $l' \leq l-1$.
Now pick an arbitrary $x \in \calX$. Fix integer $m$.
Consider $(\calX_1, \calX_2)$, a partition of $\calX \setminus \cbr{x}$,
where both $|\calX_1|$ and $|\calX_2|$ are infinite.

By inductive hypothesis, there is a $(l-1, m)$-difficult extended mistake tree $T_+$
with respect to $\calC^l$ on domain $\calX_1$, such that all its dashed edges are labeled $+1$.
Since for any $h \in \calC^{l-1}$, there exists $h' \in \calC^l[(x,+1)]$
such that $h \equiv h'$ on $\calX_1$, we can modify $T_+$'s leaves such that
they all correspond to hypotheses in $\calC^l[(x,+1)]$, getting a new extended mistake tree $\tilde{T}_+$.

Similarly, by inductive hypothesis, there is a $(l-2, m)$-difficult extended mistake tree $T_-$ with respect to $\calC^{l-1}$ on domain $\calX_2$, such that all its dashed edges are labeled $+1$. Since for any $h \in \calC^l$, there exists $h' \in \calC^l[(x,-1)]$ such that $h \equiv h'$ on $\calX_2$, we can modify $T_-$'s leaves such that they all correspond to hypotheses in $\calC^l[(x,-1)]$, getting a new extended mistake tree $\tilde{T}_-$.

Now consider the extended mistake tree $T$ rooted at $x$, with its left subtree as $\tilde{T}_-$ and right subtree as $\tilde{T}_+$. The dashed downward edge of root is linked to its right child, i.e. has label $+1$. Note that $T$ is a valid extended mistake tree, since all hypotheses at the leaves in $\tilde{T}_-$ (resp. $\tilde{T}_+$) classifies $x$ as $-1$ (resp. $+1$).
By Lemma~\ref{lem:recursive}, $T$ is $(l-1,m+1)$-difficult, hence $(l-1,m)$-difficult. Additionally, all its dashed edges are labeled $+1$. Since the choice of $m$ is arbitrary, this completes the induction.
\end{proof}

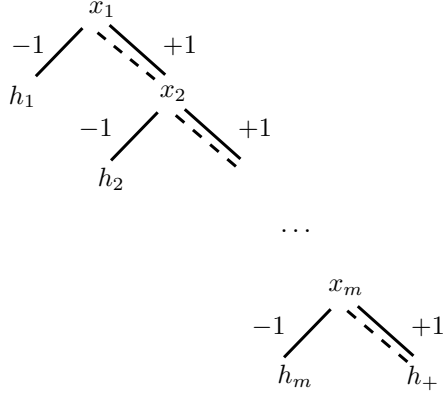
\begin{figure}
\centering
\scalebox{1} 
{
\begin{pspicture}(0,-2.6692188)(6.4090624,2.6692188)
\usefont{T1}{ppl}{m}{n}
\rput(1.5345312,2.4657812){$x_1$}
\psline[linewidth=0.04cm](1.27,2.2157812)(0.65,1.5757812)
\psline[linewidth=0.04cm](1.63,2.2557812)(2.37,1.5957812)
\usefont{T1}{ppl}{m}{n}
\rput(0.5445312,1.9857812){$-1$}
\usefont{T1}{ppl}{m}{n}
\rput(2.5645313,1.9857812){$+1$}
\usefont{T1}{ppl}{m}{n}
\rput(2.4745312,1.3457812){$x_2$}
\psline[linewidth=0.04cm](2.27,1.0957812)(1.65,0.45578125)
\psline[linewidth=0.04cm](2.63,1.1357813)(3.37,0.47578126)
\usefont{T1}{ppl}{m}{n}
\rput(1.4445312,0.87578124){$-1$}
\usefont{T1}{ppl}{m}{n}
\rput(3.5645313,0.87578124){$+1$}
\usefont{T1}{ppl}{m}{n}
\rput(4.7745314,-1.2342187){$x_m$}
\psline[linewidth=0.04cm](4.57,-1.5242188)(3.95,-2.1642187)
\psline[linewidth=0.04cm](4.93,-1.4842187)(5.67,-2.1442187)
\usefont{T1}{ppl}{m}{n}
\rput(3.7445312,-1.7542187){$-1$}
\usefont{T1}{ppl}{m}{n}
\rput(5.864531,-1.7542187){$+1$}
\usefont{T1}{ppl}{m}{n}
\rput(0.47453126,1.3057812){$h_1$}
\usefont{T1}{ppl}{m}{n}
\rput(1.6545312,0.20578125){$h_2$}
\usefont{T1}{ppl}{m}{n}
\rput(5.7945313,-2.4342186){$h_+$}
\usefont{T1}{ppl}{m}{n}
\rput(4.094531,-2.4142187){$h_{m}$}
\psline[linewidth=0.04cm,linestyle=dashed,dash=0.16cm 0.16cm](1.47,2.1557813)(2.23,1.5357813)
\psline[linewidth=0.04cm,linestyle=dashed,dash=0.16cm 0.16cm](2.51,1.0557812)(3.31,0.37578124)
\psline[linewidth=0.04cm,linestyle=dashed,dash=0.16cm 0.16cm](4.79,-1.5442188)(5.59,-2.2242188)
\usefont{T1}{ppl}{m}{n}
\rput(4.1045313,-0.47421876){$\ldots$}
\end{pspicture} 
}
\caption{A $(0,m)$-difficult extended mistake tree with respect to $\calC^1$. $x_1, \ldots, x_m$ are distinct elements in $\calX$, and for each $i$, $h_i$ is defined as $h_i(x) := 1 - 2 I(x = x_i)$. $h_+$ is the constant function $+1$.}
\label{fig:base}
\end{figure}

\section{Proofs from Section~\ref{sec:nonrealizable}}

\begin{proof}[Proof of Theorem~\ref{thm:finitelb}]
Let $\calX$ be an infinite set. Let $\calH$ be the hypothesis class containing only one hypothesis $h \equiv +1$. Note that $\calH^l = \calC^l$ and by Theorem~\ref{thm:usingletons}, $\ELdim(\calH^l, k) = \infty$ for $k < l$. By Lemma~\ref{lem:emtusage}, the theorem follows.
\end{proof}

\begin{proof}[Proof of Theorem~\ref{thm:infinitelb}]
Let $\calX$ be an infinite set. Let $\calH$ be the $\calC^d$, the class of unions of at most $d$ singletons. Note that $\calH^l = \calC^{l+d}$ and by Theorem~\ref{thm:usingletons}, $\ELdim(\calC^{l+d}, k) = \infty$ for $k < l+d$. By Lemma~\ref{lem:emtusage}, the theorem follows.
\end{proof}

We will need the following result regarding the tree shattering coefficient of the ``product'' of two hypothesis classes.

\begin{lemma}
Suppose $\calH_1, \calH_2$ are two hypothesis classes. If $\calH = \calH_1 \cdot \calH_2$, then for all integers $t \geq 0$,
\[ \calS(\calH,t) \leq \calS(\calH_1,t) \cdot \calS(\calH_2,t) \]
\label{lem:growthcompose}
\end{lemma}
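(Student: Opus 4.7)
}
The plan is to induct on $t$, using the recursive formula in Lemma~\ref{lem:recursivegrowth} together with a decomposition of $\calH[(x,\pm 1)]$ in terms of $\calH_1$ and $\calH_2$. The key algebraic observation is the identity $(a_- + a_+)(b_- + b_+) = (a_- b_+ + a_+ b_-) + (a_- b_- + a_+ b_+)$, which lets the product structure of $\calH_1 \cdot \calH_2$ collide nicely with the binary splitting of the recursion.

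\emph{Base case ($t=0$).} By definition $\calS(\calH,0)$ is $1$ if $\calH$ is nonempty and $0$ otherwise. If $\calH=\calH_1\cdot\calH_2$ is nonempty, then both $\calH_1$ and $\calH_2$ are nonempty, so $\calS(\calH_1,0)\cdot\calS(\calH_2,0)=1$. If $\calH$ is empty, the bound is trivial.

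\emph{Inductive step ($t\geq 1$).} Assume the claim for $t-1$. By Lemma~\ref{lem:recursivegrowth}, pick $x^*\in\calX$ achieving
\[ \calS(\calH,t) = \calS(\calH[(x^*,-1)],t-1) + \calS(\calH[(x^*,+1)],t-1). \]
The central observation is that, since $h=h_1\cdot h_2$ satisfies $h(x^*)=h_1(x^*)\cdot h_2(x^*)$, we can split according to the two factors' signs at $x^*$:
\[ \calH[(x^*,-1)] = \bigl(\calH_1[(x^*,-1)]\cdot\calH_2[(x^*,+1)]\bigr) \cup \bigl(\calH_1[(x^*,+1)]\cdot\calH_2[(x^*,-1)]\bigr), \]
\[ \calH[(x^*,+1)] = \bigl(\calH_1[(x^*,-1)]\cdot\calH_2[(x^*,-1)]\bigr) \cup \bigl(\calH_1[(x^*,+1)]\cdot\calH_2[(x^*,+1)]\bigr). \]
I will also use the easy fact that for any tree $\x$, $S(A\cup B,\x) \subseteq S(A,\x) \cup S(B,\x)$, hence $\calS(A\cup B,\tau)\leq \calS(A,\tau) + \calS(B,\tau)$ for all $\tau\geq 0$.

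Write $a_\pm := \calS(\calH_1[(x^*,\pm 1)],t-1)$ and $b_\pm := \calS(\calH_2[(x^*,\pm 1)],t-1)$. Applying the union bound and then the inductive hypothesis to each of the four product classes,
\[ \calS(\calH[(x^*,-1)],t-1) \leq a_- b_+ + a_+ b_-, \qquad \calS(\calH[(x^*,+1)],t-1) \leq a_- b_- + a_+ b_+. \]
Summing these gives $\calS(\calH,t) \leq (a_- + a_+)(b_- + b_+)$. Finally, by Lemma~\ref{lem:recursivegrowth} applied to $\calH_1$ and to $\calH_2$ at the same example $x^*$, we have $a_- + a_+ \leq \calS(\calH_1,t)$ and $b_- + b_+ \leq \calS(\calH_2,t)$, giving the desired bound.

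\emph{Expected obstacles.} The argument is essentially a clean bookkeeping exercise once one has both the recursion of Lemma~\ref{lem:recursivegrowth} and the union subadditivity of $\calS$; I do not anticipate any substantial difficulty. The only point to be careful about is the decomposition of $\calH[(x^*,\pm 1)]$: one must check equality as sets of functions (not merely inclusion), so that the union bound captures all of $\calH[(x^*,\pm 1)]$ rather than only a subset. Beyond that, the proof is a two-line expansion of $(a_- + a_+)(b_- + b_+)$.
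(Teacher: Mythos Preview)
Your proposal is correct and follows essentially the same approach as the paper's proof: induction on $t$ via the recursion of Lemma~\ref{lem:recursivegrowth}, the decomposition of $\calH[(x,\pm 1)]$ into two products according to the factors' signs, the union subadditivity of $\calS$ (which the paper isolates as Lemma~\ref{lem:growthunion}), and the factorization $(a_-+a_+)(b_-+b_+)$. One minor remark: in your ``Expected obstacles'' you insist on checking equality in the decomposition, but only the inclusion $\calH[(x^*,\pm 1)]\subseteq(\cdots)\cup(\cdots)$ is needed for the upper bound.
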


We first show a basic property of tree shattering coefficient.
\begin{lemma}
For hypothesis classes $\calH_1$, $\calH_2$, $\calS(\calH_1 \cup \calH_2, t) \leq \calS(\calH_1,t) + \calS(\calH_2,t)$.
\label{lem:growthunion}
\end{lemma}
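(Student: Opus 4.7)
The plan is to prove the inequality pointwise over depth-$t$ $\calX$-valued trees and then take a maximum. Fix an arbitrary depth-$t$ $\calX$-valued tree $\x$. The first step is to establish the set identity
\[ S(\calH_1 \cup \calH_2, \x) = S(\calH_1, \x) \cup S(\calH_2, \x). \]
This follows directly from unpacking the definition: a sign sequence $\epsilon = (\epsilon_1, \ldots, \epsilon_t) \in \cbr{\pm 1}^t$ belongs to $S(\calH_1 \cup \calH_2, \x)$ if and only if there exists $h \in \calH_1 \cup \calH_2$ with $\epsilon_s = h(\x_s(\epsilon))$ for all $s$, which happens if and only if such an $h$ exists in $\calH_1$ or in $\calH_2$.

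Given this set identity, the subadditivity of cardinality yields
\[ |S(\calH_1 \cup \calH_2, \x)| \leq |S(\calH_1, \x)| + |S(\calH_2, \x)| \leq \calS(\calH_1, t) + \calS(\calH_2, t), \]
where the last inequality uses the definition $\calS(\calH_i, t) = \max_{\x} |S(\calH_i, \x)|$ applied to the same tree $\x$ (so in particular $|S(\calH_i, \x)| \leq \calS(\calH_i, t)$). The right-hand side no longer depends on $\x$, so taking the supremum over $\x$ on the left gives
\[ \calS(\calH_1 \cup \calH_2, t) \leq \calS(\calH_1, t) + \calS(\calH_2, t), \]
which is the desired bound.

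There is essentially no obstacle here — the entire argument reduces to observing that the union of hypothesis classes induces the union of realized sign patterns on any fixed tree, and then applying the trivial union bound on cardinalities before maximizing over $\x$. The only subtle point worth being explicit about is that one must bound both $|S(\calH_1, \x)|$ and $|S(\calH_2, \x)|$ on the \emph{same} tree $\x$ by their respective worst-case values $\calS(\calH_i, t)$ before taking the outer max; doing it in the other order is still correct but less clean. The edge case $t = 0$ is handled separately by the convention $\calS(\calH, 0) = 1$ when $\calH \neq \emptyset$ and $0$ otherwise, and one can verify the inequality in the four cases depending on whether $\calH_1$ and $\calH_2$ are empty.
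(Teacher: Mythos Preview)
Your proof is correct and follows essentially the same approach as the paper: fix a tree $\x$, observe $S(\calH_1 \cup \calH_2, \x) \subseteq S(\calH_1, \x) \cup S(\calH_2, \x)$ (the paper records only the inclusion, whereas you note the slightly stronger equality), apply the union bound on cardinalities, bound each term by $\calS(\calH_i,t)$, and then maximize over $\x$. The only additions you make are the explicit treatment of the $t=0$ boundary case and some commentary on the order of maximization, neither of which changes the substance of the argument.
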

\begin{proof}
For any depth-$t$ tree $\x$, we have that
\[ S(\calH_1 \cup \calH_2, \x) \subseteq S(\calH_1, \x) \cup S(\calH_2, \x) \]
Therefore,
\[ |S(\calH_1 \cup \calH_2, \x)| \leq |S(\calH_1, \x)| + |S(\calH_2, \x)| \leq \calS(\calH_1,t) + \calS(\calH_2,t) \]
Since the choice of $\x$ is arbitrary, we get
\[ \calS(\calH_1 \cup \calH_2, t) \leq \calS(\calH_1,t) + \calS(\calH_2,t). \qedhere \]
\end{proof}

\begin{proof}[Proof of Lemma~\ref{lem:growthcompose}]
By induction on $t$.
\paragraph{Base Case:} Consider $t = 0$. If one of $\calS(\calH_1,0)$, $\calS(\calH_2,0)$ is $0$, this implies $\calH_1 = \emptyset$ or $\calH_2 = \emptyset$. Therefore, $\calH = \emptyset$, the result holds.
Otherwise, both $\calS(\calH_1,0)$ and $\calS(\calH_2,0)$ are at least $1$. In this case $\calH$ is nonempty, thus $1 = \calS(\calH,0) \leq 2^0 = 1$, the result also hold.

\paragraph{Inductive Case:} Given $t \geq 1$, assume the inductive hypothesis $\calS(\calF_1 \cdot \calF_2,t-1) \leq \calS(\calF_1,t-1) \cdot \calS(\calF_2,t-1)$ holds for any hypothesis classes $\calF_1$, $\calF_2$.
Fix $x \in \calX$. Note that $\calH[(x,+1)] = (\calH_1[(x,+1)] \cdot \calH_2[(x,+1)]) \cup (\calH_1[(x,-1)] \cdot \calH_2[(x,-1)])$. Therefore,
\begin{eqnarray*}
&&\calS(\calH[(x,+1)],t-1) \\
&\leq& \calS(\calH_1[(x,+1)] \cdot \calH_2[(x,+1)],t-1) + \calS(\calH_1[(x,-1)] \cdot \calH_2[(x,-1)],t-1) \\
&\leq& \calS(\calH_1[(x,+1)],t-1) \calS(\calH_2[(x,+1)],t-1) + \calS(\calH_1[(x,-1)],t-1) \calS(\calH_2[(x,-1)],t-1)
\end{eqnarray*}
where the first inequality is Lemma~\ref{lem:growthunion}, the second inequality is by inductive hypothesis.

Likewise, we have
\begin{eqnarray*}
\calS(\calH[(x,-1)],t-1) \leq \calS(\calH_1[(x,-1)],t-1) \calS(\calH_2[(x,+1)],t-1) + \calS(\calH_1[(x,+1)],t-1) \calS(\calH_2[(x,-1)],t-1)
\end{eqnarray*}

Therefore,
\begin{eqnarray*}
&&\calS(\calH[(x,-1)],t-1) + \calS(\calH[(x,+1)],t-1) \\
&\leq& (\calS(\calH_1[(x,-1)],t-1) + \calS(\calH_1[(x,+1)],t-1))(\calS(\calH_2[(x,-1)],t-1) + \calS(\calH_2[(x,+1)],t-1)) \\
&\leq& \calS(\calH_1,t) \calS(\calH_2,t)
\end{eqnarray*}
where the second inequality is from Lemma~\ref{lem:eldimrecursive}.
Since the choice of $x$ is arbitrary, we get
\[ \calS(\calH,t) = \max_{x \in \calX} (\calS(\calH[(x,-1)],t-1) + \calS(\calH[(x,+1)],t-1)) \leq \calS(\calH_1,t) \calS(\calH_2,t). \qedhere \]
\end{proof}

\begin{proof}[Proof of Lemma~\ref{lem:finiteub}]
Note that by Lemmas~\ref{lem:ldimgrowth} and~\ref{lem:growthcompose},
\[ \calS(\calH^l,t) \leq \calS(\calC^l,t) \cdot \calS(\calH,t) = \binom{t}{\leq l} \cdot \calS(\calH,t) \leq |\calH| \binom{t}{\leq l}\]
Hence,
\[ \ELdim(\calH^l, k) \leq \max\left\{t: \binom{t}{\leq k+1} \leq |\calH| \binom{t}{\leq l}\right\}\]
Now, consider any $t$ such that $t \geq 2l$ and
\[ \binom{t}{\leq k+1} \leq |\calH| \binom{t}{\leq l} \]
Since $\binom{t}{\leq k+1} \geq \binom{t}{k+1} \geq (\frac{t}{k+1})^{k+1}$, and $\binom{t}{\leq l} \leq (\frac{e t}{l})^l$ for $t \geq 2l$, we get
\[ (\frac{t}{k+1})^{k+1} \leq |\calH| (\frac{e t}{l})^l \]
Hence,
\[ t^{k+1-l} \leq |\calH| \frac{(k+1)^{k+1}}{l^l} \]
Since $\frac{(k+1)^{k+1}}{l^l} \leq (e(k+1))^{k+1-l}$, we get
\[ t^{k+1-l} \leq |\calH| (e(k+1))^{k+1-l} \]
That is, $t \leq e(k+1) |\calH|^{\frac{1}{k+1-l}}$.

In summary,
\[ \ELdim(\calH^l, k) \leq \max(2l, e(k+1) |\calH|^{\frac{1}{k+1-l}}) = e(k+1) |\calH|^{\frac{1}{k+1-l}} \]
where the equality uses the fact that $k \geq l$.
\end{proof}

\begin{proof}[Proof of Lemma~\ref{lem:infiniteub}]
Note that by Lemmas~\ref{lem:ldimgrowth} and~\ref{lem:growthcompose},
\[ \calS(\calH^l,t) \leq \calS(\calC^l,t) \cdot \calS(\calH,t) = \binom{t}{\leq l} \cdot \calS(\calH,t) \leq \binom{t}{\leq d} \binom{t}{\leq l}\]
Hence,
\[ \ELdim(\calH^l, k) \leq \max\left\{t: \binom{t}{\leq k+1} \leq \binom{t}{\leq d} \binom{t}{\leq l}\right\}\]
Now, consider any $t$ such that $t \geq 2l$, $t \geq 2d$ and
\[ \binom{t}{\leq k+1} \leq \binom{t}{\leq d} \binom{t}{\leq l} \]
Since $\binom{t}{\leq k+1} \geq \binom{t}{k+1} \geq (\frac{t}{k+1})^{k+1}$, $\binom{t}{\leq d} \leq (\frac{e t}{d})^d$ for $t \geq 2d$, and $\binom{t}{\leq l} \leq (\frac{e t}{l})^l$ for $t \geq 2l$, we get
\[ (\frac{t}{k+1})^{k+1} \leq  (\frac{e t}{d})^d (\frac{e t}{l})^l \]
Hence,
\[ t^{k+1-l-d} \leq e^{l+d} \frac{(k+1)^{k+1}}{l^l d^d} \]
Since
\begin{eqnarray*}
&& \frac{(k+1)^{k+1}}{l^l d^l} \\
&=& (1 + \frac{k+1-l}{l})^l (1 + \frac{k+1-d}{d})^d (k+1)^{k+1-l-d} \\
&\leq& e^{2k+2-l-d} (k+1)^{k+1-l-d}
\end{eqnarray*}
we get
\[ t \leq (k+1) \cdot e^{\frac{2k+2}{k+1-l-d}} \]
In summary,
\[ \ELdim(\calH^l, k) \leq \max(2l, 2d, (k+1) \cdot e^{\frac{2k+2}{k+1-l-d}}) ) =  (k+1) \cdot e^{\frac{2k+2}{k+1-l-d}} \]
where the equality uses the fact that $k \geq l+d$.
\end{proof}

\begin{proof}[Proof of Theorem~\ref{thm:finitelbrand}]
Let $\calX$ be an infinite set. $x_1$, $x_2$, \ldots is a sequence of distinct elements from $\calX$. Let $\calH$ be the hypothesis class containing only one hypothesis $h \equiv +1$. Note that $\calH^l = \calC^l$.
Let $\epsilon = 1 - k/l > 0$, thus $k = l(1-\epsilon)$. Fix integer $m = \lceil \frac{2}{\epsilon}(a + l) + 2l\rceil$. By Lemma~\ref{lem:infmt}, $\calC^l$ has a $(l-1,m)$-difficult extended mistake tree $T$.

We define the following strategy by the adversary based on $T$. At time $t = 1$, the adversary chooses the example $x_1$ corresponding to the root of $T$, and shows it to the learner. If $p_{1,+} > 1-\epsilon$, then it reveals label $y_1 = -1$ and follows the downward solid edge labeled $-1$ to reach the left child of the root; otherwise it reveals label $y_1 = +1$ and follows the downward dashed edge labeled $+1$ to reach the right child of the root. At time $t \geq 2$, suppose the adversary reaches a node with example $x_t$, then $x_t$ is shown to the learner, and one of the downward edges adjacent to this node is followed by the same rule. The interaction comes to an end when a leaf is reached. It can be seen that the realizability assumption is maintained.

Consider an Algorithm $\calA$ that guarantees a cumulative mistake penalty at most $k$.

\begin{enumerate}[(1)]
\item We claim that the interaction between the learner and the adversary lasts for at least $m$ rounds. To see this, note that $\calA$ predicts at most $l-1$ times such that $p_{t,+} > 1-\epsilon$. Assume this is not the case, that is,
\[ | \cbr{t \in [m]: p_{t,+} > 1-\epsilon } | \geq l \]
Suppose the first $l$ times $\calA$ predicts $p_{t,+} > 1-\epsilon$ are $1 \leq t_1 < \ldots < t_l \leq m$. Then, according to the adversary's strategy, $y_{t_1} = \ldots = y_{t_l} = -1$. Thus, the cumulative mistake penalty made by $\calA$ up to time $t_l$ is at least
\[ \sum_{i=1}^l p_{t_i,+} > l(1-\epsilon) = k \]
This implies that $\calA$ has a cumulative mistake penalty $>k$, contradiction. Therefore throughout the interaction, the number of solid edges used is at most $l-1$. Since $T$ is $(l-1,m)$-difficult, any path that going downward from the root using $l-1$ solid edges must be of length at least $m$, hence the interaction between the learner and the adversary lasts for at least $m$ rounds.

\item We claim that over the first $m$ rounds, there are at most $\frac{2l}{\epsilon}$ rounds such that $\calA$ predicts $p_{t,-} > \epsilon/2$ and $p_{t,+} \leq 1-\epsilon$. Assume this is not the case, that is,
\[ | \cbr{t \in [m]: p_{t,-} > \epsilon/2 \wedge p_{t,+} \leq 1-\epsilon} | \geq \frac{2l}{\epsilon} \]
Suppose the first $g = \lceil \frac{2l}{\epsilon} \rceil$ times $\calA$ predicts $-1$ are $1 \leq s_1 < \ldots < s_g \leq m$. Then, according to the adversary's strategy, $y_{s_1} = \ldots = y_{s_g} = +1$. Thus the cumulative mistake penalty made by $\calA$ up to time $s_g$ is at least
\[ \sum_{i=1}^g p_{s_i,-} > g \cdot \frac{\epsilon}{2} \geq l(1-\epsilon) = k\]
This implies that $\calA$ has a cumulative mistake penalty $>k$ over time, contradiction.
\end{enumerate}

Therefore, among the first $m$ rounds, there are at most $l + (l + \frac{2l}{\epsilon}) = 2l + \frac{2l}{\epsilon}$ rounds such that $p_{t,+} > 1-\epsilon$ or $p_{t,-} > \epsilon/2$. Thus there are at least $(m - \frac{2l}{\epsilon} - 2l)$ rounds such that $p_{t,+} \leq 1-\epsilon$ and $p_{t,-} \leq \epsilon/2$, implying $1 - p_{t,+} - p_{t,-} \geq \epsilon/2$. Thus, the cumulative abstention penalty up to time $m$ is at least
\[ (m - \frac{2l}{\epsilon} - 2l) \cdot \frac{\epsilon}{2} \geq a. \qedhere \]
\end{proof}

\begin{proof}[Proof of Theorem~\ref{thm:infinitelbrand}]
Let $\calX$ be an infinite set. Let $\calH$ be the $\calC^d$, the class of unions of at most $d$ singletons. Note that $\calH^l = \calC^{l+d}$. Hence $l$-bias assumption with respect to $\calH$ is equivalent to $\calC^{l+d}$-realizability. The rest of the proof is analogous to the proof of Theorem~\ref{thm:finitelbrand}.
\end{proof}

\end{document}